\colorlet{ours}{blue!10}
\theoremstyle{plain}
\newtheorem{theorem}{Theorem}[section]
\newtheorem{proposition}[theorem]{Proposition}
\newtheorem{lemma}[theorem]{Lemma}
\newtheorem{corollary}[theorem]{Corollary}
\theoremstyle{definition}
\newtheorem{definition}[theorem]{Definition}
\newtheorem{assumption}[theorem]{Assumption}
\newtheorem{remark}[theorem]{Remark}
\newcommand{\eqdef}{\coloneqq}
\icmltitlerunning{Federated Sketching LoRA: A Flexible Framework for Heterogeneous Collaborative Fine-Tuning of LLMs}
\begin{document}

\twocolumn[\icmltitle{Federated Sketching LoRA: A Flexible Framework for Heterogeneous Collaborative Fine-Tuning of LLMs}




\begin{icmlauthorlist}
\icmlauthor{Wenzhi Fang}{purdue}
\icmlauthor{Dong-Jun Han}{yongsei}
\icmlauthor{Liangqi Yuan}{purdue}
\icmlauthor{Seyyedali Hosseinalipour}{buffalo}
\icmlauthor{Christopher G. Brinton}{purdue}
\end{icmlauthorlist}

\icmlaffiliation{purdue}{Purdue University}
\icmlaffiliation{yongsei}{Yonsei University}
\icmlaffiliation{buffalo}{University at Buffalo-SUNY}

\icmlcorrespondingauthor{Wenzhi Fang}{fang375@purdue.edu}

\icmlkeywords{Machine Learning, ICML}

\vskip 0.3in
]



\printAffiliationsAndNotice{}  

\begin{abstract}
Fine-tuning large language models (LLMs) on resource-constrained clients remains a challenging problem.
Recent works have fused low-rank adaptation (LoRA) techniques with federated fine-tuning to mitigate challenges associated with client model sizes and data scarcity. Still, the heterogeneity of resources remains a critical bottleneck: while higher-rank modules generally enhance performance, varying client capabilities constrain LoRA's feasible rank range.
Existing approaches attempting to resolve this issue either lack analytical justification or impose additional computational overhead, leaving a wide gap for efficient and theoretically-grounded solutions.
To address these challenges, we propose federated sketching LoRA (FSLoRA), which leverages a sketching mechanism to enable clients to selectively update submatrices of global LoRA modules maintained by the server. By adjusting the sketching ratios, which determine the ranks of the submatrices on the clients, FSLoRA flexibly adapts to client-specific communication and computational constraints.
We provide a rigorous convergence analysis of FSLoRA that characterizes how the sketching ratios affect the convergence rate. 
Through extensive experiments, we demonstrate that FSLoRA outperforms baselines and significantly improves training efficiency while preserving stable convergence.
\end{abstract}

\section{Introduction}

Lightweight client-side large language models (LLMs) have recently gained significant attention as a promising complement to cloud-based LLMs \citep{fan2024device}. They align with the typical paradigm of LLMs: starting from a base model pre-trained on large-scale datasets to learn general linguistic patterns, semantics, and context, and then undergoing fine-tuning on task-specific data to enhance performance on specialized or domain-specific applications. However, an LLM fine-tuned on a single client often achieves unsatisfactory performance due to the limited data. Federated learning (FL) \citep{mcmahan2017communication, chen2023federated} has been investigated as a potential solution here, enabling the model to be fine-tuned across a group of distributed clients within the same task domain, without any raw data sharing.

However, federated LLM fine-tuning is costly in both computation and communication due to the massive parameter volume.
Importantly, many parameter-efficient fine-tuning methods have been proposed \citep{lester2021power, li2021prefix, hu2021lora} to reduce the model adaptation cost. 
Among them, low-rank adaptation (LoRA) \citep{hu2021lora} stands out as a particularly effective approach due to its flexibility. 
In particular, LoRA enables efficient fine-tuning by approximating weight updates \( \Delta {\bf W} \) through a low-rank decomposition \( \Delta {\bf W} = {\bf BA} \), where matrices \( {\bf B} \) and \( {\bf A} \) contain significantly fewer trainable parameters than the original weight matrix. 
Building on this foundation, recent works incorporated LoRA with federated averaging (FedAvg) to support collaborative LLM fine-tuning \citep{zhang2024towards,ye2024openfedllm,zou2025joint}, significantly reducing the fine-tuning cost by cutting down the number of parameters that need to be synchronized across distributed clients.

\textbf{Challenges.} While incorporating LoRA into federated LLM fine-tuning reduces the number of trainable parameters, \textit{computation and communication costs are still forced to increase with the LoRA rank}.
This poses challenges when complex tasks demand higher-rank LoRA modules, particularly on resource-constrained clients. Furthermore, the \textit{heterogeneity in resource availability across distributed clients makes a uniform rank adopted in federated LoRA inefficient}: a fixed rank $r$ may be too large for some constrained clients, while being too small for more powerful ones, resulting in underutilized resources. Consequently, an approach that further reduces computation and communication overhead while adapting LoRA ranks to heterogeneous client capabilities is highly desirable. Although some existing approaches \citep{cho2024heterogeneouslorafederatedfinetuning,bai2024federatedfinetuninglargelanguage,wang2024flora} have attempted to provide a solution, they either lack theoretical justification or impose additional computational overhead, leaving a gap for an efficient and theoretically-grounded solution. A detailed discussion of these limitations is provided in Section~\ref{sec:limitations_of_existing}. 
Overall, a comprehensive approach that preserves the analytical and practical benefits of LoRA while allowing heterogeneous collaborative fine-tuning under tight client resource constraints remains elusive.

\subsection{Contributions}


Motivated by these limitations, this work develops a methodology for efficient federated LLM fine-tuning that (i) retains the flexibility of LoRA, (ii) provides theoretical convergence guarantees, and (iii) addresses the challenges posed by heterogeneous and constrained resources across distributed clients. As depicted in Figure \ref{fig:Motivation}, our key idea is to introduce a sketching-based LoRA update to the local fine-tuning, which allows clients to selectively update a subset of columns and rows of the LoRA modules during each round, reducing the computation and communication consumption. Additionally, our method customizes the fine-tuning process by adjusting the sparsity level of the sketching matrix, i.e., the size of the updated submatrices for each client in each iteration. As we will see, the impact of the introduced sketching mechanism on the overall optimization landscape requires careful modeling consideration, posing additional challenges for the theoretical analysis that we address in this work.

In summary, we make the following contributions:
\begin{itemize}[leftmargin=*]
    \item 
    We propose federated sketching LoRA (FSLoRA), which leverages a sketching mechanism to enable clients to selectively update submatrices of global LoRA modules maintained by the server. By adjusting the sketching ratios, which determine the ranks of the submatrices on clients, FSLoRA effectively adapts to client-specific communication and computational constraints. 
    \item We present a rigorous convergence analysis of FSLoRA under non-uniform submatrix updates across clients (i.e., heterogeneous LoRA configurations), revealing how the sketching ratios (i.e., resource heterogeneity) affect the convergence rate through scaled smoothness constants. 
    Our analysis establishes convergence under arbitrary sketching ratios and recovers the convergence rate of FedAvg in the limiting case.
    \item 
    We conduct extensive experiments across multiple datasets and LLM models with diverse parameter settings, demonstrating FSLoRA's superior performance compared to various baselines in accuracy, training efficiency, and resource utilization. Our ablation studies further validate the effectiveness of the sketching mechanism and the ability of clients to exploit larger global ranks under FSLoRA.
\end{itemize}

\begin{figure}[t]
    \centering
    \includegraphics[width=\linewidth]{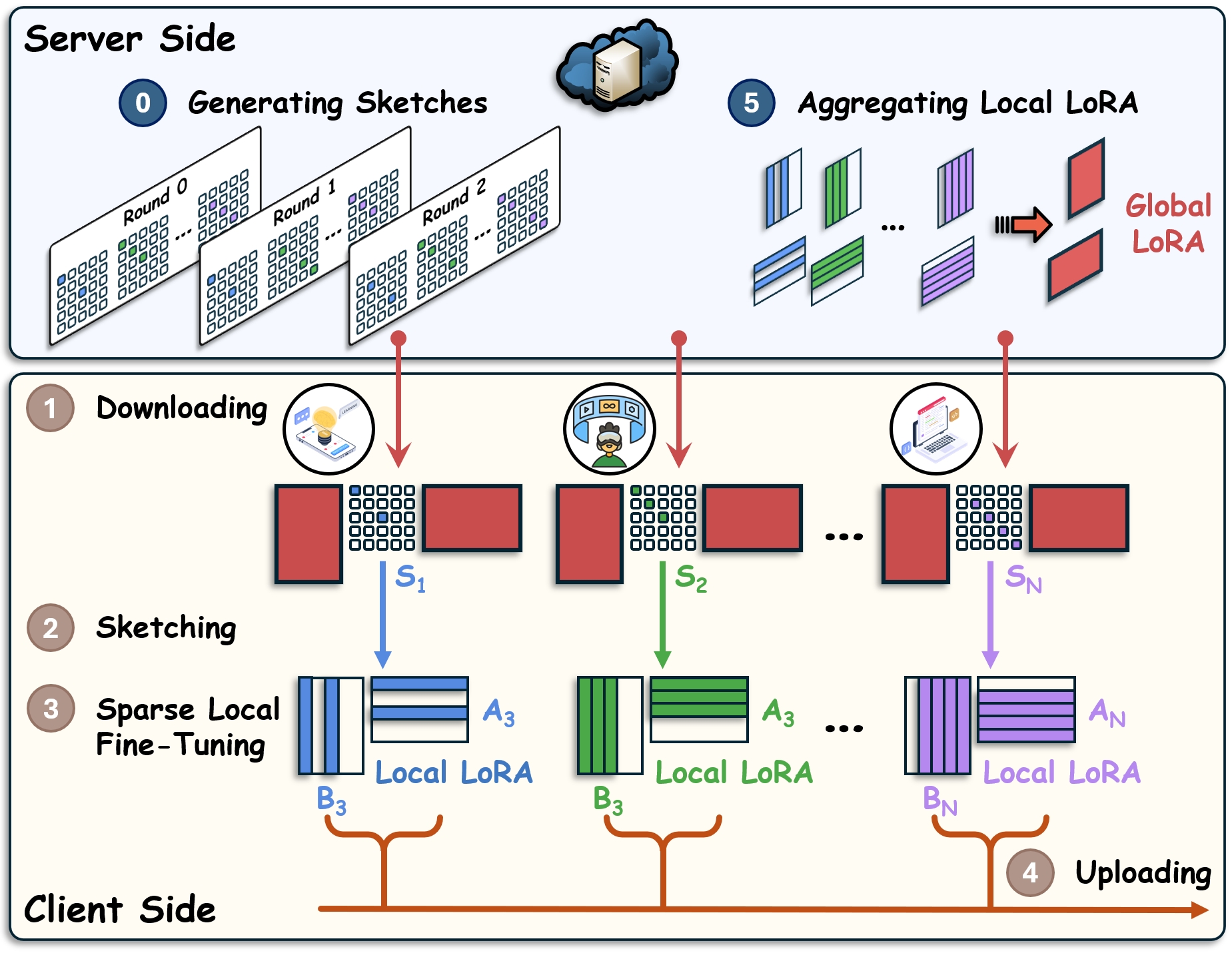}
    \caption{ An illustration of our proposed methodology where the server maintains a pair of global LoRA modules while the clients adaptively update submatrices of the global LoRA modules through sketching in each round.}
    \label{fig:Motivation}
    \vspace{-2mm}
\end{figure}
\subsection{Related Works}
\textbf{Collaborative fine-tuning via federated LoRA:} Federated LoRA is an efficient approach for collaborative LLM fine-tuning among distributed clients \citep{chen2023federated,sun2024improving,guo2025selectiveaggregationlowrankadaptation,chen2025robust,guo2024selective}. Building on this foundation, \citet{kuo2024federatedlorasparsecommunication} proposed integrating communication compression with federated LoRA to further reduce communication overhead. Meanwhile, \citet{bai2024federatedfinetuninglargelanguage,cho2024heterogeneouslorafederatedfinetuning,byun2024towards,wang2024flora,koo2024robustefficientfederatedlowrank} explored the challenges of resource heterogeneity across distributed clients and introduced heterogeneous LoRA as a solution.
However, the approaches proposed in \citep{cho2024heterogeneouslorafederatedfinetuning,koo2024robustefficientfederatedlowrank,byun2024towards,bai2024federatedfinetuninglargelanguage} lack a theoretical foundation. Moreover, the FlexLoRA method introduced in \citep{bai2024federatedfinetuninglargelanguage} incurs additional computational overhead due to its reliance on singular value decomposition (SVD). Furthermore, the FLoRA algorithm proposed in \citep{wang2024flora} requires the clients to merge the LoRA modules into the base model, thereby compromising the inherent flexibility of LoRA. Overall, there is still a lack of a systematic and theoretically grounded solution that can effectively tackle heterogeneous collaborative LLM fine-tuning.

\textbf{Enhancing adaptability via higher-rank LoRA modules:}
The foundational study by \citet{hu2021lora} demonstrated that small ranks can be sufficient for certain tasks; however, they also acknowledge that small rank LoRA modules may not work universally, especially when the downstream task differs significantly from pretraining. Following this, several works explored the effect of increasing the rank in LoRA modules. In a centralized setup, \citet{kalajdzievski2023rankstabilizationscalingfactor} and  \citet{shuttleworth2024lora} showed that higher-rank LoRA models can closely approximate full fine-tuning under rsLoRA. In a federated LLM fine-tuning regime, \citet{bai2024federatedfinetuninglargelanguage} demonstrated improved performance with larger ranks under FlexLoRA. 
Similarly, \citet{cho2024heterogeneouslorafederatedfinetuning} reported that, with proper overfitting control, HeteroLoRA can also benefit from larger ranks. Orthogonal to these rank-scaling approaches, Uni-LoRA~\citep{li2026uni} reduces the trainable parameter space by reconstructing LoRA parameters from a projected low-dimensional subspace, highlighting another direction for improving LoRA efficiency.
Overall, while small ranks may suffice for simpler tasks or strong base models, higher-rank modules are necessary to compensate for limited backbone capability, such as in lightweight LLMs, and to enable effective adaptation to more complex downstream tasks. 

\textbf{Sketching-based optimization:} Sketching is an efficient technique for mitigating the complexity of high-dimensional optimization, with its earliest applications in least-squares regression \citep{sarlos2006improved,wang2022iterative}. Beyond this, gradient sketching has been employed to construct preconditioners for gradient descent methods \citep{gower2015randomized}. Building on these foundations, recent work has applied sketching to distributed optimization. In particular, \citet{charalambides2024distributed} proposed hybrid local-global sketching for distributed least-squares, while \citet{demidovich2023mast} developed a distributed sparsified training framework based on sketching. \citet{shrivastava2024sketching} demonstrated that sketching substantially reduces communication in distributed training of overparameterized deep models without sacrificing accuracy.
More recently, \citet{nicolas2025communication} investigated sketching-based differential privacy and demonstrated its compatibility with secure aggregation.
Despite these advances, sketching strategies tailored to structured low-rank adaptation modules such as LoRA remain largely unexplored.

\section{Problem Background}\label{sec:background}

\subsection{LoRA-based Federated LLM Fine-tuning}\label{sec:background:fedlora}
The federated LoRA fine-tuning problem can be formulated as 
\vspace{-0.05in}
\begin{equation}\label{fed_lora_formulation}
\begin{aligned}
    \min_{\bf{B}, \bf{A}} f(\bf B,{\bf A}) &\eqdef \frac{1}{N} \sum_{i=1}^{N} f_i(\bf B,{\bf A}) \\
    f_i(\bf B,{\bf A}) &\eqdef  \mathbb{E}_{\xi \sim \mathcal{D}_i} \left[ \ell ({\bf W}_{0} + {\bf B} {\bf A}, \xi)\right],
\end{aligned}
\end{equation}
where \({\bf W}_{0}\) denotes the frozen base model, ${\bf B} \in \mathbb{R}^{m\times r},  {\bf A}\in \mathbb{R}^{r\times n}$ are LoRA modules, \(N\) denotes the number of clients, \(\xi\) denotes a data sample, and \(\mathcal{D}_i\) is the local dataset on client \(i\). 
\(\ell\), \(f_i\), and \(f\) are the sample loss function, the local loss for client \(i\), and the global loss, respectively. 

Problem \eqref{fed_lora_formulation} aligns with the conventional federated optimization formulation, which thus can be solved using the FedAvg algorithm. 
Based on the FedAvg framework, \citet{zhang2024towards} developed federated LoRA, which applies a uniform rank \(r\) across all clients, overlooking resource heterogeneity. 
This one-size-fits-all approach leads to resource mismatches, where computationally constrained clients may struggle, while more powerful clients remain underutilized with a fixed rank.

\subsection{Aren't the Existing Solutions Good Enough?}\label{sec:limitations_of_existing}

To address this issue, researchers have proposed heterogeneous federated LoRA approaches, where clients maintain non-uniform LoRA modules with varying ranks. They also introduce mechanisms to overcome the challenges of directly aggregating matrices with different dimensions.   
However, these methods often lack theoretical foundation or incur additional computational and memory overhead.

\textbf{HeteroLoRA \citep{cho2024heterogeneouslorafederatedfinetuning}} lets the server pad the updates from the clients with smaller ranks to match the size of the largest rank during aggregation. During model dissemination, clients receive a truncated version of the global LoRA modules from the server. Although easy to implement, HeteroLoRA is primarily heuristic in nature and lacks a rigorous theoretical foundation, potentially limiting its performance, as we will see in Section \ref{sec:experiment}.


\textbf{FlexLoRA \citep{bai2024federatedfinetuninglargelanguage}} requires the server to collect the individual LoRA matrices $\mathbf{B}_i $ and $ \mathbf{A}_i$ from the clients and then computes their product $\mathbf{B}_i \mathbf{A}_i$. To support the initialization of non-uniform LoRA modules, the server applies truncated SVD to the averaged product $\frac{1}{N} \sum_{i=1}^N \mathbf{B}_i \mathbf{A}_i$. However, this approach introduces extra computational and memory overhead on the server due to truncated SVD, and the associated error can limit the performance as demonstrated in Section \ref{sec:experiment}.
   

\textbf{FLoRA \citep{wang2024flora}} introduces a stacking mechanism where the server concatenates LoRA modules from the clients. The concatenated matrices are then sent back to the clients, which compute their product and merge it into the base model before initializing new LoRA modules for the next fine-tuning round. However, this approach increases communication complexity linearly with the number of clients, imposes higher computation and memory demands on the clients, and compromises LoRA's flexibility to support multiple adapters for different tasks.

More detailed comparisons on computation, memory, and communication are presented in Appendix \ref{appen:com_memo_comparison}.
In summary, a theoretically-grounded solution that preserves LoRA's benefits while effectively addressing resource heterogeneity across distributed clients remains lacking. 

\section{Federated Sketching LoRA}\label{sec:fslora_formulation}

Motivated by the limitations of existing methods, we propose a reformulation for federated LoRA. Building on this foundation, we develop FSLoRA, a heterogeneous LoRA algorithm that preserves LoRA's flexibility while accommodating client resource heterogeneity.



\subsection{Our Formulation}

We propose a sketching-based LoRA formulation for collaborative LLM fine-tuning as follows: 
\begin{equation}\label{lora_formulation_ours}
\begin{aligned}
\min_{\bf B,{\bf A}} f^{\mathcal{S}}(\bf B,{\bf A}) &\eqdef  \frac{1}{N} \sum_{i=1}^N f_i^{\mathcal{S}} (\bf B,{\bf A})\\
f_i^{\mathcal{S}}(\bf B,{\bf A}) &\eqdef  \mathbb{E}_{ {\bf S} \sim \mathcal{S}_i; \xi \sim \mathcal{D}_i} \left[ \ell ({\bf W}_{0} + {\bf B}\bf S{\bf A}, \xi)\right],
\end{aligned}
\end{equation}
where ${\bf B} \in \mathbb{R}^{m\times r},  {\bf A}\in \mathbb{R}^{r\times n}$ are LoRA modules, $f_i^{\mathcal{S}}$ is the local loss function at client $i$ with sketching, and \({\bf S}\) denotes a sketching matrix randomly sampled from the diagonal matrix set \(\mathcal{S}_i = \mathcal{S}(r, k_i)\). The set \(\mathcal{S}(r, k_i)\) comprises diagonal matrices of size \(r \times r\) with exactly \(k_i\) non-zero entries. 
The formal definition of \(\mathcal{S}(r, k)\) is provided below:
\begin{definition}[Random-$k$ sketching]\label{skeching_def}
A random-$k$ diagonal matrix set is defined as:  
\begin{equation}
    \mathcal{S}(r, k) \!=\! \Bigg \{ \! \mathbf{S} \!\mid \! \mathbf{S} \!=\! \frac{r}{k} \! \sum_{j \in \mathcal{I}} \! \mathbf{e}_j \mathbf{e}_j^\top, \, \mathcal{I} \subseteq \{1, \ldots, r\}, \, |\mathcal{I}| \!=\! k \Bigg \}, \nonumber
\end{equation}
where \(\mathbf{e}_1, \dots, \mathbf{e}_r \in \mathbb{R}^r\) are standard unit basis vectors and index set \(\mathcal{I}\) is a random subset of \([r] \eqdef \{1,2,\ldots, r\} \) sampled uniformly from all subsets of \([r]\) with cardinality \(k\).
\end{definition}

With \(\mathbf{S}\) being a matrix sampled from \(\mathcal{S}_i\), we have $
   \mathbf{BSA} = \frac{r}{k_i} \sum_{j \in \mathcal{I}_i} \mathbf{B} \mathbf{e}_j \mathbf{e}_j^\top \mathbf{A},
   $
where \(\mathcal{I}_i\) corresponds to the index set of non-zero diagonal entries of \(\mathbf{S}\). \(\mathbf{B} \mathbf{e}_j\) extracts the \( j \)-th column of \(\mathbf{B}\) while \(\mathbf{e}_j^\top \mathbf{A}\) extracts the \( j \)-th row of \(\mathbf{A}\). In other words, only \(k_i\) columns and rows in the LoRA modules \(\bf B\) and \(\bf A\) are activated by the sketching matrix in the loss \(\ell ({\bf W}_{0} + {\bf B}\bf S{\bf A}, \xi)\) at client \(i\).  On the other hand, the sketching matrix \(\bf S\) satisfies \(\mathbb{E}_{\mathbf{S} \sim \mathcal{S}_i}[\mathbf{S}] = \mathbf{I}_r\) where $\mathbf{I}_r$ is a $r$-dimensional identity matrix. Based upon this property, ${\bf W}_0 + {\bf B} {\bf S} {\bf A}$ can be treated as an unbiased estimate of ${\bf W}_0 + \bf B{\bf A}$. 
%
%

\textbf{Intuition:} A larger rank allows LoRA modules to be more expressive, leading to better performance \citep{bai2024federatedfinetuninglargelanguage,kalajdzievski2023rankstabilizationscalingfactor,shuttleworth2024lora}. However, resource-constrained clients cannot afford the computational or communication demands of large-rank modules. Our formulation \eqref{lora_formulation_ours} leverages the sketching matrix to balance the expressiveness of high-rank LoRA modules with the resource constraints of different clients. With the sketching mechanism introduced, the local gradients with respect to the LoRA modules on the clients will exhibit structured sparsity. By adjusting the sketching ratio \(k_i/r\), we can tailor the sparsity of the gradient to match the capabilities of each client, ensuring affordable training while maintaining performance across heterogeneous systems, as elaborated in the following subsection. Overall, compared to \eqref{fed_lora_formulation}, our formulation offers a more flexible framework, tailored to address the diverse capabilities of heterogeneous clients.


\subsection{Sparsity in the Gradients}
In this subsection, we analyze the gradient structure of LoRA modules and highlight the gradients' sparsity properties under a given sketching matrix. 
To begin, we present the gradient expressions for the LoRA modules in the following lemma. The proof is provided in Appendix \ref{proof_compute_grad}.
\begin{lemma}[Gradient Formulation]\label{compute_grad}
For a given sketching matrix \(\bf S\), the gradients of  \( \ell(\bf W_0 + {\bf B}\bf S{\bf A}, \xi)\) with respect to ${\bf B}$ and ${\bf A}$ take the following form
\begin{equation}\label{gradient_matrix}
\resizebox{0.9\hsize}{!}{$
\begin{aligned}
    {\nabla}_{\bf B} \ell({\bf W}_0 + {\bf B} {\bf S} {\bf A}, \xi) &= \nabla \ell({\bf W}_0 + {\bf B} {\bf S} {\bf A}, \xi)  {\bf A}^{\top} {\bf S}^{\top} \\
    {\nabla}_{\bf A} \ell({\bf W}_0 + {\bf B} {\bf S} {\bf A}, \xi) &= {\bf S}^{\top} {\bf B}^{\top} \nabla \ell(\bf W_0 + {\bf B}\bf S{\bf A}, \xi),
\end{aligned}
$}
\end{equation}
where \({\nabla}_{\bf B} \ell({\bf W}_0 + {\bf B} {\bf S} {\bf A}, \xi)\), \({\nabla}_{\bf A} \ell({\bf W}_0 + {\bf B} {\bf S} {\bf A}, \xi)\), and \(\nabla \ell(\bf W_0 + {\bf B}\bf S{\bf A}, \xi)\) denote the gradients of \(\ell(\bf W_0 + {\bf B}\bf S{\bf A}, \xi)\) to \(\bf B\), \(\bf A\), and \(\bf W_0 + {\bf B}\bf S{\bf A}\), respectively. 
\end{lemma}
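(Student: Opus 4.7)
The result is a straightforward consequence of the matrix chain rule, so the plan is essentially to execute it cleanly. The idea is to treat the sketching matrix $\mathbf{S}$ as fixed (which is exactly what the statement intends, since the lemma is asserted for a given $\mathbf{S}$), introduce the auxiliary variable $\mathbf{Z} \eqdef \mathbf{W}_0 + \mathbf{B}\mathbf{S}\mathbf{A} \in \mathbb{R}^{m\times n}$, and differentiate $\ell(\mathbf{Z},\xi)$ with respect to $\mathbf{B}$ and $\mathbf{A}$ by passing through $\mathbf{Z}$.

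The cleanest route is via matrix differentials and the Frobenius inner product $\langle \mathbf{X},\mathbf{Y}\rangle = \mathrm{tr}(\mathbf{X}^\top \mathbf{Y})$. By definition of the gradient one has $d\ell = \langle \nabla\ell(\mathbf{Z},\xi),\, d\mathbf{Z}\rangle$. When only $\mathbf{B}$ varies, $d\mathbf{Z} = d\mathbf{B}\cdot \mathbf{S}\mathbf{A}$, and cyclic invariance of the trace gives
\begin{equation*}
d\ell \;=\; \mathrm{tr}\bigl(\nabla\ell(\mathbf{Z},\xi)^\top\, d\mathbf{B}\, \mathbf{S}\mathbf{A}\bigr) \;=\; \mathrm{tr}\bigl((\nabla\ell(\mathbf{Z},\xi)\,\mathbf{A}^\top\mathbf{S}^\top)^\top\, d\mathbf{B}\bigr),
\end{equation*}
from which $\nabla_{\mathbf{B}}\ell(\mathbf{Z},\xi) = \nabla\ell(\mathbf{Z},\xi)\,\mathbf{A}^\top\mathbf{S}^\top$ follows by identification. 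The $\mathbf{A}$ case is symmetric: with only $\mathbf{A}$ varying one has $d\mathbf{Z} = \mathbf{B}\mathbf{S}\cdot d\mathbf{A}$, and the same trace manipulation yields $\nabla_{\mathbf{A}}\ell(\mathbf{Z},\xi) = \mathbf{S}^\top\mathbf{B}^\top\nabla\ell(\mathbf{Z},\xi)$.

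If a more elementary presentation is preferred, one can equivalently derive both expressions entrywise via the scalar chain rule, writing $Z_{pq} = (\mathbf{W}_0)_{pq} + \sum_{a,b} B_{pa} S_{ab} A_{bq}$ and $\partial \ell/\partial B_{ij} = \sum_{p,q} (\partial \ell/\partial Z_{pq})(\partial Z_{pq}/\partial B_{ij})$, then collapsing the sums using that $\mathbf{S}$ is diagonal; the resulting expressions reassemble into the matrix products above. I would relegate this to a remark or omit it, since the differential argument is shorter.

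There is no real obstacle here; the only subtle point worth flagging in the writeup is that $\mathbf{S}$ is held constant throughout the differentiation (the lemma computes a conditional gradient given the sampled sketch), so the randomness of $\mathbf{S}\sim\mathcal{S}_i$ plays no role in this statement and is deferred to the later convergence analysis. I would make this explicit in one sentence to avoid any confusion about whether derivatives ``through'' $\mathbf{S}$ should appear.
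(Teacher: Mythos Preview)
Your proposal is correct and takes essentially the same approach as the paper: both apply the matrix chain rule to $\ell(\mathbf{W}_0+\mathbf{BSA},\xi)$ with $\mathbf{S}$ held fixed, the paper simply citing the identities $\nabla_{\mathbf{X}} g(\mathbf{XY}) = \nabla g(\mathbf{XY})\,\mathbf{Y}^\top$ and $\nabla_{\mathbf{Y}} g(\mathbf{XY}) = \mathbf{X}^\top \nabla g(\mathbf{XY})$ and substituting $\mathbf{X}=\mathbf{B}$, $\mathbf{Y}=\mathbf{SA}$ (resp.\ $\mathbf{X}=\mathbf{BS}$, $\mathbf{Y}=\mathbf{A}$), whereas you derive these same identities via the differential/trace route. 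Your additional remark that $\mathbf{S}$ is treated as constant is a nice clarification but not something the paper bothers to spell out.
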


In particular, a random-$k$ diagonal sketching matrix selectively samples $k$ rows or columns of a matrix through left product or right product, respectively. With \({\bf S}\) being a random-$k$ diagonal matrix, the gradients of $\ell({\bf W}_0 + {\bf B} {\bf S} {\bf A}, \xi)$ with respect to LoRA modules \(\bf B\) and \(\bf A\), as shown in \eqref{gradient_matrix}, naturally become structurally sparse matrices. 
This sparsity reduces computational and memory overhead during training, enabling faster gradient computation and parameter updates, while alleviating communication overhead across distributed clients by transmitting only non-zero elements.



\begin{remark}[Sparsity Level Control]
A key advantage of our formulation is its flexible control over the sparsity level of local gradients, achieved by configuring the parameter \(k_i\) of the sketching matrix set \( \mathcal{S}_i = \mathcal{S}(r, k_i) \). This mechanism allows each client to tailor its local updates according to its communication and computation resource constraints, ensuring efficient and scalable fine-tuning in heterogeneous federated systems. Lowering \( k_i \) helps resource-constrained clients reduce computation and communication overhead, while more capable clients can increase \( k_i \) to conduct more informative local updates. 
\end{remark}

\begin{remark}[Justification for the Choice of Random-$k$ Sketching]
We adopt Random-$k$ sketching because it is unbiased, induces structured sparsity, and exhibits strong empirical performance (see Section~\ref{sec:experiment}). In contrast, identifying an appropriate metric for quantifying the importance of individual LoRA rows or columns remains an open problem, even in the centralized LoRA setting. This lack of a principled importance measure prevents the reliable development of Top-$k$-style or importance-based sketching strategies. 
We empirically evaluate several heuristic importance-based sketching variants and find that they consistently underperform Random-$k$ sketching. 
Detailed theoretical intuition and empirical results can be found in Appendix~\ref{appen:discussion_alternative_sketching}.
\end{remark}

\subsection{FSLoRA Algorithm}

Based on the formulation in \eqref{lora_formulation_ours}, we propose a resource-adaptive algorithm, termed FSLoRA. In each round, FSLoRA enables each client to update only a client-specific pair of submatrices from the global LoRA modules $\mathbf{B}$ and $\mathbf{A}$.
The server maintains the global LoRA modules \(\mathbf{B}\) and \(\mathbf{A}\) and periodically updates them by aggregating sparse local updates from distributed clients. The per-round FSLoRA procedure is detailed below.
\begin{itemize}[topsep=0pt,itemsep=4pt,parsep=0pt, partopsep=0pt, leftmargin=*]
    \item \textbf{Sketch Sampling \& Broadcast.} The server begins by sampling sketching matrices \(\{{\bf S}_i^t \sim \mathcal{S}_i\}_{i=1}^{N}\) for all clients, where \(\mathcal{S}_i\) represents the set of possible sketching matrices for client \(i\). These sketches are then sent to the corresponding clients. Additionally, the server broadcasts the current global LoRA modules \([\mathbf{B}^{t}; \mathbf{A}^{t}]\) to all clients. Note that the communication load introduced by transmitting the sketching matrix is negligible compared to global LoRA modules, as it involves only \emph{binary sketching indices}.
    \item \textbf{Local Sparse Training.} Clients initialize local LoRA modules from the global model and perform sketch-guided updates using ${\bf S}_i^t$, given by:
    \begin{equation}\label{gradient_client_i}
\begin{bmatrix}
\mathbf{B}_i^{t,h+1} \\
\mathbf{A}_i^{t,h+1}
\end{bmatrix}
=
\begin{bmatrix}
\mathbf{B}_i^{t,h} \\
\mathbf{A}_i^{t,h}
\end{bmatrix}
-
\gamma 
\begin{bmatrix}
{\Delta} {\bf B}_i^{t,h} ({\bf S}_i^t)^{\top}\\
({\bf S}_i^t)^{\top} {\Delta} {\bf A}_i^{t,h}
\end{bmatrix},
\end{equation}
 where \(\gamma\) denotes the learning rate and \([{\Delta} {\bf B}_i^{t,h}; {\Delta} {\bf A}_i^{t,h}]\) is a shorthand representation for:
\begin{equation} 
\resizebox{0.9\hsize}{!}{$
\begin{bmatrix}
{\Delta} {\bf B}_i^{t,h} \\
{\Delta} {\bf A}_i^{t,h}
\end{bmatrix}
\!=\!
\begin{bmatrix}
\nabla \ell({\bf W}_0 + {\bf B}_i^{t,h} {\bf S}_i^t {\bf A}_i^{t,h}, \xi_i^{t,h})  ({\bf A}_i^{t,h})^{\top} \\
({\bf B}_i^{t,h})^{\top} \nabla \ell({\bf W}_0 + {\bf B}_i^{t,h} {\bf S}_i^t {\bf A}_i^{t,h}, {\xi}_i^{t,h})
\end{bmatrix}.
$} \nonumber
\end{equation}
The update direction in \eqref{gradient_client_i} corresponds to the negative stochastic gradient of \(\ell({\bf W}_0 + {\bf B} {\bf S} {\bf A}, \xi)\) with respect to \([{\bf B}; {\bf A}]\) for a given sketch \( {\bf S}_i^t\), as established in Lemma \ref{compute_grad}.   
The total update for client \(i\) in one round of training, consisting of \(H\) local steps, can be expressed as follows:
 \[ 
 \begin{bmatrix}
\mathbf{B}_i^{t,H} \!-\! \mathbf{B}_i^{t,0} \\
\mathbf{A}_i^{t,H} \!-\! \mathbf{A}_i^{t,0}
\end{bmatrix} 
\!=\! 
\begin{bmatrix}
\gamma \left( \sum_{h=0}^{H\!-\!1} {\Delta} {\bf B}_i^{t,h} \right) ({\bf S}_i^t)^{\top} \\
\gamma ({\bf S}_i^t)^{\top} \left(\sum_{h=0}^{H\!-\!1} {\Delta} {\bf A}_i^{t,h} \right) 
\end{bmatrix}. \]
From the above equation, we see that only the columns of \(\mathbf{B}\) and the rows of \(\mathbf{A}\) corresponding to the nonzero entries of \(\mathbf{S}_i^t\) are updated during the \(t\)-th round at client \(i\). In essence, \(\mathbf{S}_i^t\) \emph{selectively activates} specific columns of \(\mathbf{B}\) and rows of \(\mathbf{A}\) for each round. Afterward, clients transmit these nonzero columns and rows of the sparse local updates to the server. 
    \item   \textbf{Sparse Update Reconstruction~\&~Aggregation.} Using the sketch information, the server reconstructs the full-size sparse updates from the received columns and rows via zero padding, and aggregates them to update the global LoRA modules:
\begin{equation}\label{model_aggregation}
\begin{bmatrix}
\mathbf{B}^{t+1} \\
\mathbf{A}^{t+1}
\end{bmatrix}
=
\begin{bmatrix}
\mathbf{B}^{t} \\
\mathbf{A}^{t}
\end{bmatrix}
+
\frac{1}{N} \sum_{i=1}^N
\begin{bmatrix}
\mathbf{B}_i^{t,H} - \mathbf{B}_i^{t,0} \\
\mathbf{A}_i^{t,H} - \mathbf{A}_i^{t,0}
\end{bmatrix}.
\end{equation}
\end{itemize}

Algorithm~\ref{alo_flora} summarizes the procedure of FSLoRA under full participation, while the extension to partial participation is provided in Appendix~\ref{appen:broader_heterogeneity}.

Intuitively, Random-$k$ sketching ensures uniform expected exposure of all columns and rows of the global LoRA modules to each client, closely mirroring the behavior of vanilla Federated LoRA. As training progresses, every component in the global LoRA is sampled with sufficient frequency in expectation, and thus adequately optimized. The convergence of FSLoRA and the impact of Random-$k$ sketching on convergence are analyzed in Section~\ref{sec:analysis}.

\begin{remark}[Computation, memory, and communication]\label{remark:commnucation} 
Compared to vanilla Federated LoRA~\cite{zhang2024towards}, FSLoRA introduces only lightweight additional operations, including Random-$k$ sketch sampling, sketch index broadcasting, and zero padding. 
Consequently, FSLoRA incurs negligible additional overhead at the server, while maintaining comparable or lower overall complexity than existing heterogeneous LoRA baselines~\citep{bai2024federatedfinetuninglargelanguage,wang2024flora}. More detailed server- and client-side comparisons with baselines are provided in Appendix~\ref{appen:com_memo_comparison}.
\end{remark}

\begin{remark}[Aggregation]
Existing works on federated LoRA primarily adopt two aggregation strategies: (1) aggregating the LoRA modules as $[\mathbf{B}; \mathbf{A}]$ (e.g., vanilla Federated LoRA~\cite{zhang2024towards}), and (2) aggregating the product $\mathbf{B} \mathbf{A}$ (e.g., FlexLoRA \cite{bai2024federatedfinetuninglargelanguage}). Both methods have demonstrated effectiveness, as evidenced by their promising performance in prior studies.  In this work, we adopt the former, as it introduces minimal overhead and retains the simplicity of LoRA. 
Additionally, we prove that FSLoRA is compatible with secure aggregation in Appendix \ref{appen:secure_aggregation}.
\end{remark}

\begin{algorithm}[t]
\caption{Federated Sketching LoRA (FSLoRA)}
\label{alo_flora}
\begin{algorithmic}[1]
\REQUIRE Base model ${\bf W}_0$, LoRA modules ${\bf B}^0$ and ${\bf A}^0$, learning rate $\gamma$, and sketching set $\{\mathcal{S}_i\}_{i=1}^{N}$
\FOR{$t = 0, 1, \ldots, T-1$}
    \STATE Server samples sketching matrices $\{{\bf S}_i^t \sim \mathcal{S}_i \}_{i=1}^{N}$ and sends them back to the clients
    \STATE Server broadcasts the current global LoRA modules to the clients
    \FOR{$h = 0, 1, \ldots, H\!-\!1$}
        \STATE Clients update the local LoRA modules via \eqref{gradient_client_i}
    \ENDFOR
    \STATE Clients upload the non-zero columns of $({\bf B}_i^{t,H} \!-\! {\bf B}_i^{t,0})$ and the non-zero rows $({\bf A}_i^{t,H} \!-\! {\bf A}_i^{t,0})$
    \STATE Server updates the global LoRA modules via \eqref{model_aggregation}
\ENDFOR
\end{algorithmic}
\end{algorithm}
\vspace{-3mm}

\subsection{Comparison with Communication Compression}

Although both the sketching approach in FSLoRA and communication compression \citep{kuo2024federatedlorasparsecommunication} reduce communication overhead, the sketching approach fundamentally differs from traditional compression techniques. Compression methods focus solely on reducing the transmission load, leaving the gradient computation and model updates unchanged from the vanilla Federated LoRA. FSLoRA goes beyond communication savings by also reducing gradient computation and model update overhead through sparse training.  Notably, these two methods are orthogonal and can be combined to achieve greater efficiency. 
We validate the effectiveness of this combination in Appendix \ref{appen:compression_integration}.

\section{Analysis} \label{sec:analysis}

Our theoretical analysis focuses on the full-participation setting in order to isolate and clearly characterize the effect of the \emph{sketching mechanism}.
As sketching is independent of client sampling, the impact of partial participation on FSLoRA is analogous to that on FedAvg.
Accordingly, extending our analysis from full to partial participation follows standard techniques in FL~\citep{yang2021achieving}. 

We conduct our analysis under the following notations and assumptions.

 
\textbf{Notations:} We define \(\tilde{\ell} ({\bf B}, {\bf A}, \xi; {\bf S}) = \ell ({\bf W}_{0} + {\bf B} {\bf S} {\bf A}, \xi)\) and
\(\tilde{f}_{i} ({\bf B}, {\bf A}; {\bf S}) = \mathbb{E}_{\xi \sim \mathcal{D}_i} \left[ \ell ({\bf W}_{0} + {\bf B}\bf S{\bf A}, \xi)\right] \) for a given ${\bf S}$ 
and \( f_i^{\mathcal{S}}(\mathbf{B}, \mathbf{A}) = \mathbb{E}_{{\bf S} \sim \mathcal{S}_i} [\tilde{f}_{i} ({\bf B}, \mathbf{A}; {\bf S}) ] \). 
We denote \( \mathbf{X} = [\mathbf{B}; \mathbf{A}] \) and rewrite \( f(\mathbf{B}, \mathbf{A}) \), \( f_i(\mathbf{B}, \mathbf{A}) \), \( f^{\mathcal{S}}(\mathbf{B}, \mathbf{A}) \), \( f_i^{\mathcal{S}}(\mathbf{B}, \mathbf{A}) \), \( \tilde{f}_i({\bf B}, {\bf A}; {\bf S}) \), and \(\tilde{\ell} ({\bf B}, {\bf A}, \xi; {\bf S})\) as \( f(\mathbf{X}) \), \( f_i(\mathbf{X}) \), \( f^{\mathcal{S}}(\mathbf{X}) \), \( f_i^{\mathcal{S}}(\mathbf{X}) \), \( \tilde{f}_{i} (\mathbf{X}; \mathbf{S}) \), and \(\tilde{\ell} ({\bf X}, \xi; {\bf S})\) respectively. Additionally, we use \(\|\!\cdot\!\|\) to represent the Frobenius norm.
\begin{assumption}\label{smoothness_assumption}
    During fine-tuning, the iterates remain in a bounded region \(\mathcal{C}\). 
    For each client \(i\), \(f_i({\bf X})\) is differentiable and \(L\)-smooth on \(\mathcal{C}\), i.e., there exists a positive constant \(L\) such that for any \({\bf X},{\bf Y}\in \mathcal{C}\),
    \begin{equation}
    \|\nabla f_i({\bf X}) - \nabla f_i({\bf Y})\| \leq L \|{\bf X} - {\bf Y}\|, \quad \forall i. \nonumber
    \end{equation}
\end{assumption}
\begin{assumption}\label{varaince_assumption}
$\nabla_{\bf X}  \tilde{\ell} ({\bf X}, \xi; {\bf S}) $ is an unbiased estimate of $\nabla_{\bf X}  f_i^{\mathcal{S}} ({\bf X})$ and its variance is bounded as 
\begin{equation}\mathbb{E}\| \nabla_{\bf X}  \tilde{\ell} ({\bf X}, \xi; {\bf S})  - \nabla_{\bf X}  f_i^{\mathcal{S}} ({\bf X})\|^2 \leq \rho \|\nabla_{\bf X}  f_i^{\mathcal{S}} ({\bf X})\|^2 +  \sigma^2 \nonumber
\end{equation}
where the expectation is computed over $\xi \sim \mathcal{D}_i$ and ${\bf S}\sim \mathcal{S}_i$. 
\end{assumption}
\begin{assumption}\label{assumption:gradient_dissimilarity}
The gradient dissimilarity between the global loss \(f^{\mathcal{S}}(\bf X)\) and each local loss \( f_i^{\mathcal{S}}(\bf X) \) satisfies
\begin{equation}
\left\| \nabla_{\bf X} f_i^{\mathcal{S}}({\bf X}) \!-\! \nabla_{\bf X} f^{\mathcal{S}}(\bf X) \right\|^2 \! \leq \! c_h \| \nabla_{\bf X} f^{\mathcal{S}}({\bf X}) \|^2  \!+\! \delta_h^2, \nonumber
\end{equation}
where \(c_h \geq 0\) and \(f^{\mathcal{S}}(\mathbf{X}) = \frac{1}{N} \sum_{i=1}^N f_i^{\mathcal{S}}(\mathbf{X}) \).
\end{assumption}

Assumption \ref{smoothness_assumption} is standard in optimization literature \citep{bottou2018optimization,fang2024mtgc}. Assumptions \ref{varaince_assumption} and \ref{assumption:gradient_dissimilarity} are commonly adopted in FL to bound sampling randomness and data heterogeneity  \citep{fang2022communication,yi2022zeroth}. 
We further discuss Assumption \ref{smoothness_assumption} and provide an empirical validation for Assumptions \ref{varaince_assumption} and \ref{assumption:gradient_dissimilarity} in Appendix \ref{appen:justification_assumption}, showing that these assumptions are reasonable within the LLM fine-tuning scenario.
Building on these assumptions, we analyze the convergence behavior of FSLoRA and the results are summarized in the following theorem.
\begin{theorem}[Convergence of FSLoRA]\label{convergence_fed_slora_condense}
    Suppose that Assumptions \ref{smoothness_assumption}-\ref{assumption:gradient_dissimilarity} hold, then there exists a learning rate 
    \[
\resizebox{0.98\linewidth}{!}{$
\gamma \leq \min \Big\{
\frac{N}{24 \rho (c_h + 1) H\bar{L}},
\frac{1}{8\sqrt{\widetilde{L} L (\rho+1) (c_h +1)}\, H},
\frac{1}{H}
\Big\}
$}
\]
such that the iterates \(\{{\bf X}^t\}\) generated by FSLoRA satisfy 
\begin{align}
\frac{1}{T}\sum_{t=0}^{T-1} \mathbb{E} \left\| \nabla_{\bf X} f^{\mathcal{S}} ({\bf X}^t) \right\|^2 \! \leq \! 8 \frac{\sqrt{\bar{L} \mathcal{F}_0 \sigma_{\rho}^2} }{\sqrt{N TH}} \quad  & \nonumber \\
+ 10 (\widetilde{L} L)^{\frac{1}{3}} \left(\frac{\mathcal{F}_0 \sigma_{\rho}}{ T}\right)^{\frac{2}{3}} & + \frac{4 \mathcal{F}_0 }{T},\label{equa_in_thereom}
\end{align}
where $ \sigma_{\rho}^2 = \sigma^2 + 3(\rho+1) \sigma_h^2$, \(\bar{L} = \left(\frac{1}{N} \sum_{i=1}^N \frac{r}{ k_i}\right) L \), \(\widetilde{L} = \left(\frac{1}{N} \sum_{i=1}^N \frac{r^2}{ k^2_i}\right) L\) and \(\mathcal{F}_0 = f^{\mathcal{S}}({\bf X}^0)  -  f^*\) with $f^*$ denoting the lower bound of $f^{\mathcal{S}}({\bf X})$.
\end{theorem}
\textbf{Technical highlights of Theorem \ref{convergence_fed_slora_condense}:}
A key step in the proof of Theorem \ref{convergence_fed_slora_condense} is characterizing the impact of the sketching mechanism on the optimization landscape. Our analysis reveals how the sketching operation modifies the smoothness properties of the objective, introducing scaled smoothness constants, $\frac{r}{k_i} L$ and $\frac{r^2}{k_i^2} L$, which directly influence the convergence behavior.
Further details are presented in Appendix \ref{proof_theorem}.  

\newcommand{\com}[1]{\tiny$\pm$#1}

\begin{table*}[t]
\caption{Testing accuracy over 3 independent runs for fine-tuning the RoBERTa model on the GLUE benchmark. 
FSLoRA achieves a notable improvement in average performance compared to the baselines.
}
\vspace{-1mm}
\small
\centering
\begin{tabular}{l|c|ccccccc|c}
\toprule
Method  & GPU hours & QNLI & MRPC & CoLA & MNLI & RTE &  SST-2  & QQP  & Avg. \\
\midrule
Base model & - & 49.5 & 52.0 & 30.9 & 32.6 & 47.3 & 50.9 & 63.2 & 46.6 \\
\midrule
HeteroLoRA & 10.7h & 87.5 \com{0.5} & 84.4 \com{0.9} & 75.3 \com{1.2} & 66.3 \com{0.8} & 69.0 \com{1.7} & 89.5 \com{0.0} & 85.3 \com{0.1} & 79.6  \\
FlexLoRA & 12.6h & 88.5 \com{0.2} & 81.2 \com{0.4} & 77.5 \com{1.2} & 63.0 \com{0.5} & 62.2  \com{1.9} & 92.8  \com{0.4} & 87.4  \com{0.1} & 78.9  \\
FLoRA & 12.3h & 87.2 \com{0.3} & 78.1 \com{0.7} & 77.4  \com{1.7} & 74.6 \com{0.5} & 54.4 \com{2.1} & 93.4  \com{0.1} & 87.1  \com{0.3} & 78.9  \\
RAVAN & 12.3h & 83.4 \com{1.1} & 76.2 \com{1.6} & 71.8 \com{1.4}  & 66.3 \com{3.6} & 63.1 \com{2.8} & 92.1 \com{0.9} & 83.4 \com{1.7} & 76.6 \\
\midrule
\rowcolor{ours} FSLoRA & 10.9h & 88.0 \com{0.3} & 87.3  \com{0.2} & 82.2 \com{0.5} & 76.4  \com{0.2} & 69.8 \com{1.2} & 93.5 \com{0.1} & 85.8 \com{0.1} & 83.3  \\
\bottomrule
\end{tabular}
\label{tab:robert_het_accuracy}
\end{table*}
\begin{table*}[h]
\caption{
Testing accuracy over 3 independent runs for fine-tuning the LLaMA-3.2-3B-Instruct on the commonsense reasoning benchmark.
FSLoRA demonstrates consistent performance improvement across these tasks compared to baselines.
}
\label{tab:llama_het_accuracy}
\vspace{-1mm}
\centering
\resizebox{\linewidth}{!}{
\begin{tabular}{l|c|cccccccc|c}
    \toprule
    Method & GPU hours & ARC-c & ARC-e & BoolQ & HellaSwag & OBQA & PIQA & SIQA & WinoGrande & Avg. \\
    \midrule
Base model & - & 51.5 & 61.4 & 47.2 & 55.8 & 52.4 & 21.3 & 54.4 & 4.6 & 43.6 \\
\midrule
   HeteroLoRA & 43.7h & 73.4 \com{0.3} & 86.6 \com{0.2} & 65.8 \com{0.5} & 73.0 \com{0.5} & 71.4 \com{0.3} & 80.9 \com{0.7} & 73.8 \com{0.3} & 72.0 \com{0.3} & 74.6 \\
FlexLoRA & 68.3h & 74.2 \com{0.3} & 86.7 \com{0.6} & 68.6 \com{0.8} & 79.4 \com{0.7} & 75.8 \com{0.4} & 81.0 \com{0.3} & 75.9 \com{0.4} & 77.9 \com{0.3} & 77.4 \\
FLoRA & 49.8h & 68.3 \com{0.6} & 83.1 \com{0.5} & 65.8 \com{0.9} & 77.2 \com{0.5} & 74.2 \com{0.3} & 80.5 \com{0.6} & 76.1 \com{0.5} & 71.5 \com{0.5} & 74.6 \\
RAVAN & 51.2h & 71.9 \com{0.9} & 81.6 \com{1.3} & 65.7 \com{3.2} & 76.7 \com{1.4} & 75.6 \com{1.5} & 77.1 \com{1.2} & 74.8 \com{0.6} & 73.8 \com{0.6} & 74.7  \\
\midrule
\rowcolor{ours} FSLoRA & 44.3h & 76.1 \com{0.4} & 87.2 \com{0.5} & 69.3 \com{0.7} & 82.2 \com{1.1} & 80.7 \com{0.6} & 84.0 \com{0.2} & 76.8 \com{0.0} & 79.1 \com{0.2} & 79.4 \\
    \bottomrule
\end{tabular}
}
\end{table*}

\textbf{Discussion:} Theorem \ref{convergence_fed_slora_condense} establishes an upper bound on the convergence rate of the proposed FSLoRA algorithm. The parameters \(\bar{L}\) and \(\widetilde{L}\) provide insight into how client resource heterogeneity influences the convergence. When clients possess sufficient computational resources, they can adopt larger ranks for their local LoRA modules (i.e., larger $k_i$), which leads to faster convergence of FSLoRA (i.e., smaller $\bar{L}$ and $\tilde{L}$). This effect is also empirically validated in Section~\ref{sec:experiment}.
Moreover, the derived upper bound vanishes as $T \rightarrow \infty$ for any $k_i>0$, implying that FSLoRA converges to a stationary point under general sketching ratios. This highlights the robustness of FSLoRA to heterogeneous client resource constraints. 
In particular, as the sketching ratio $k_i/r \to 1$(i.e., $\bar{L} = L$), the decay rate of the dominating term (i.e., the first term on the right-hand-side of \eqref{equa_in_thereom}) recovers the convergence rate of FedAvg~\citep{yu2019parallel,karimireddy2020scaffold}. This demonstrates the tightness of our analysis and shows that FSLoRA retains the convergence guarantees of vanilla Federated LoRA in the limiting case.

\begin{remark}[Analysis Extension to Importance-Based Sketching]
    To this end, we provide a theoretically grounded framework for heterogeneous LoRA fine-tuning. Although our algorithm and analysis are based on Random-$k$ sketching, the framework naturally extends to importance-based sketching when a reliable importance metric is available. We investigate this extension in Appendix~\ref{appen:extention_importance}.
\end{remark}

\begin{figure*}
    \centering
    \includegraphics[width=\linewidth]{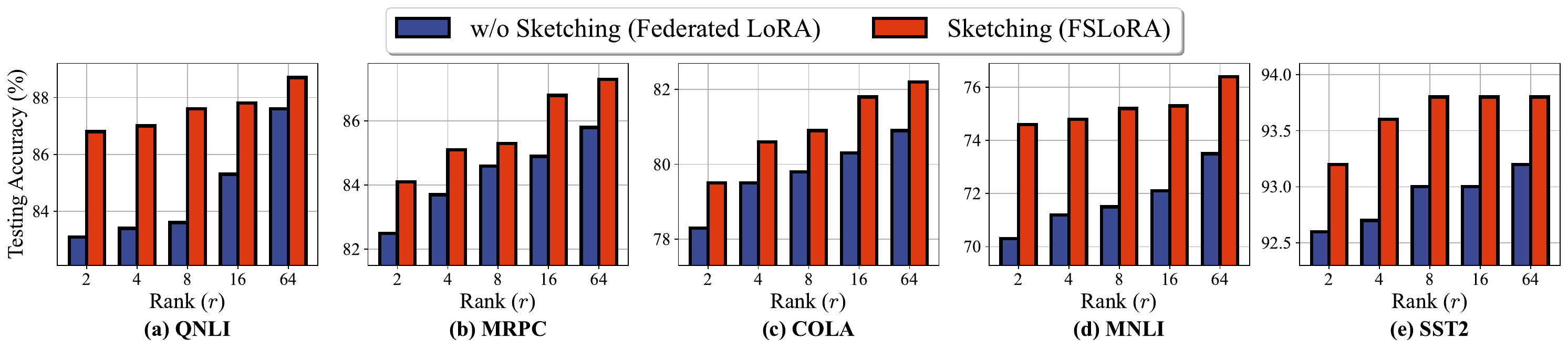}
    \vspace{-5mm}
    \caption{Comparison between FSLoRA with and without sketching (the latter equivalent to Federated LoRA) where the upload budget for clients is set to \(100 \times\) the size of the global LoRA modules at each rank. 
    FSLoRA obtains a better performance, validating its communication efficiency.
    }
    \vspace{-4mm}
    \label{fig:Results_GLUE_r}
\end{figure*}
 
\begin{figure*}[h]
    \centering
    \subfigure[Comparison of FSLoRA with and without sketching, with an upload budget \(80 \times\) the size of the global LoRA modules.]{
        \includegraphics[width=0.3\linewidth]{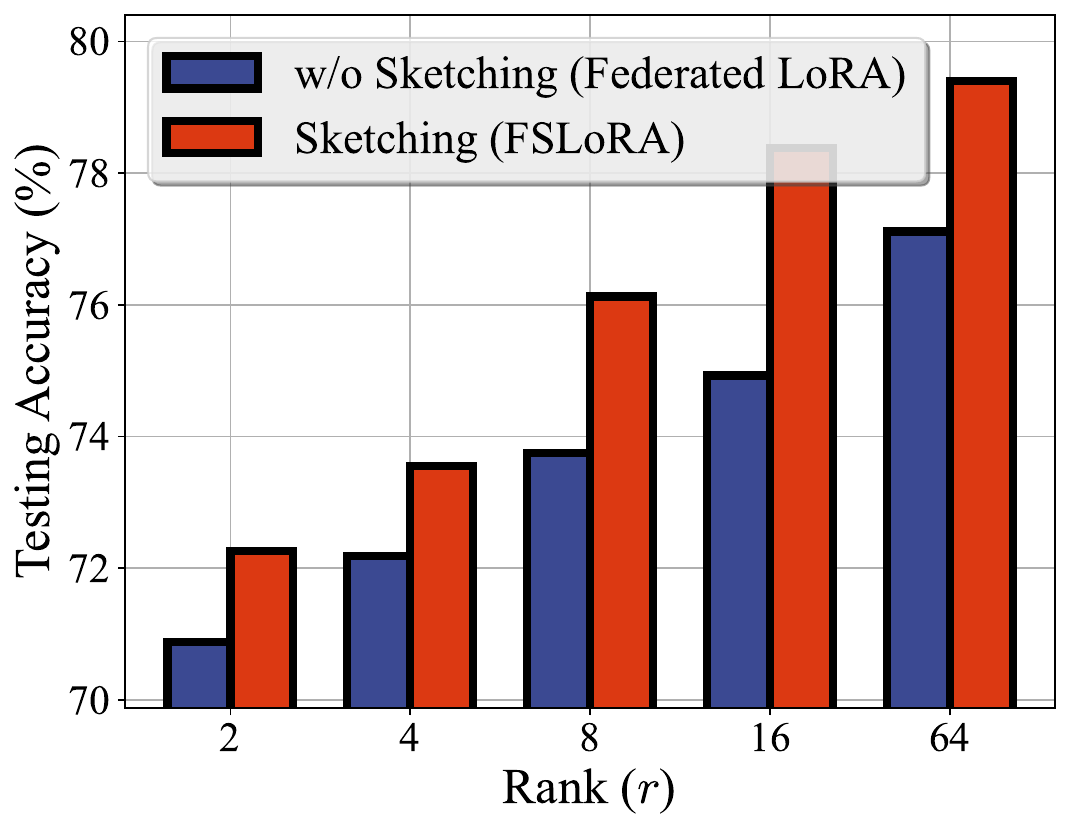} 
        \label{fig:rank_varying}
    }
    \hspace{1mm} 
    \subfigure[Impact of the rank of global LoRA modules on FSLoRA, given a fixed rank $k_i$ for the updated submatrices at the clients.]{
        \includegraphics[width=0.3\linewidth]{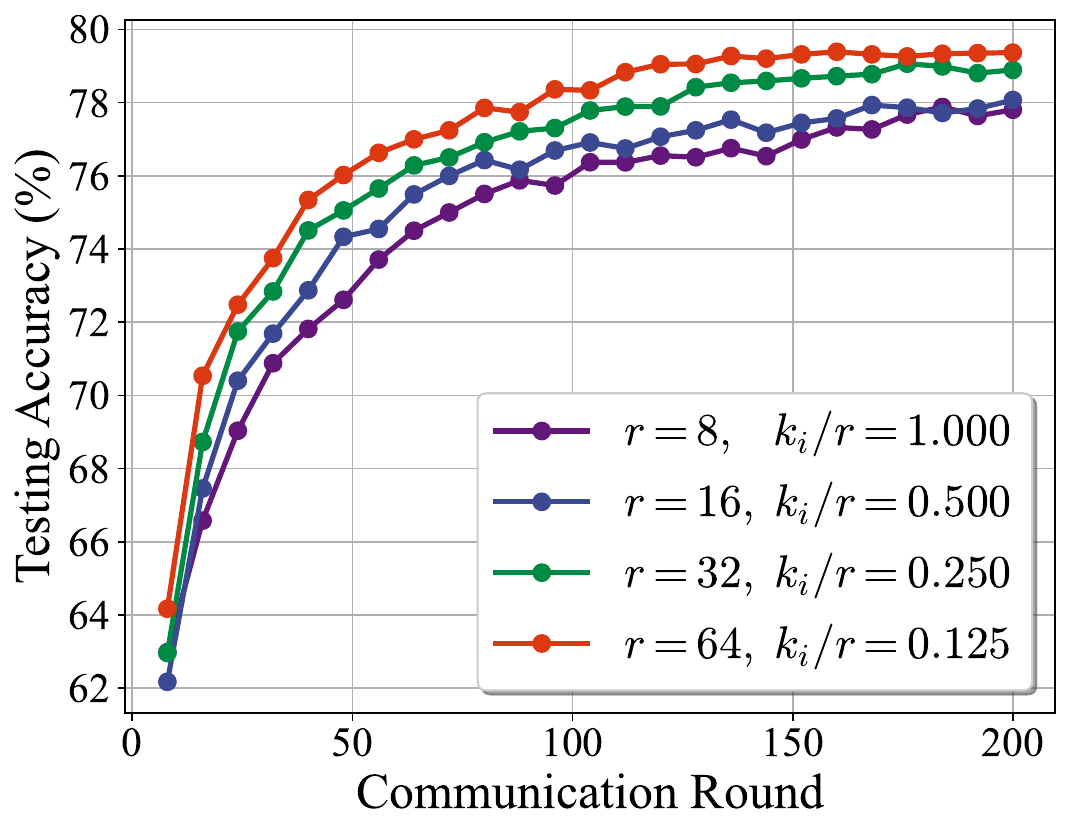} 
        \label{fig:global_rank}
    }
    \hspace{1mm} 
    \subfigure[Impact of the sketching ratio on FSLoRA's performance under a fixed rank $r=64$ for the global LoRA modules.]{
        \includegraphics[width=0.3\linewidth]{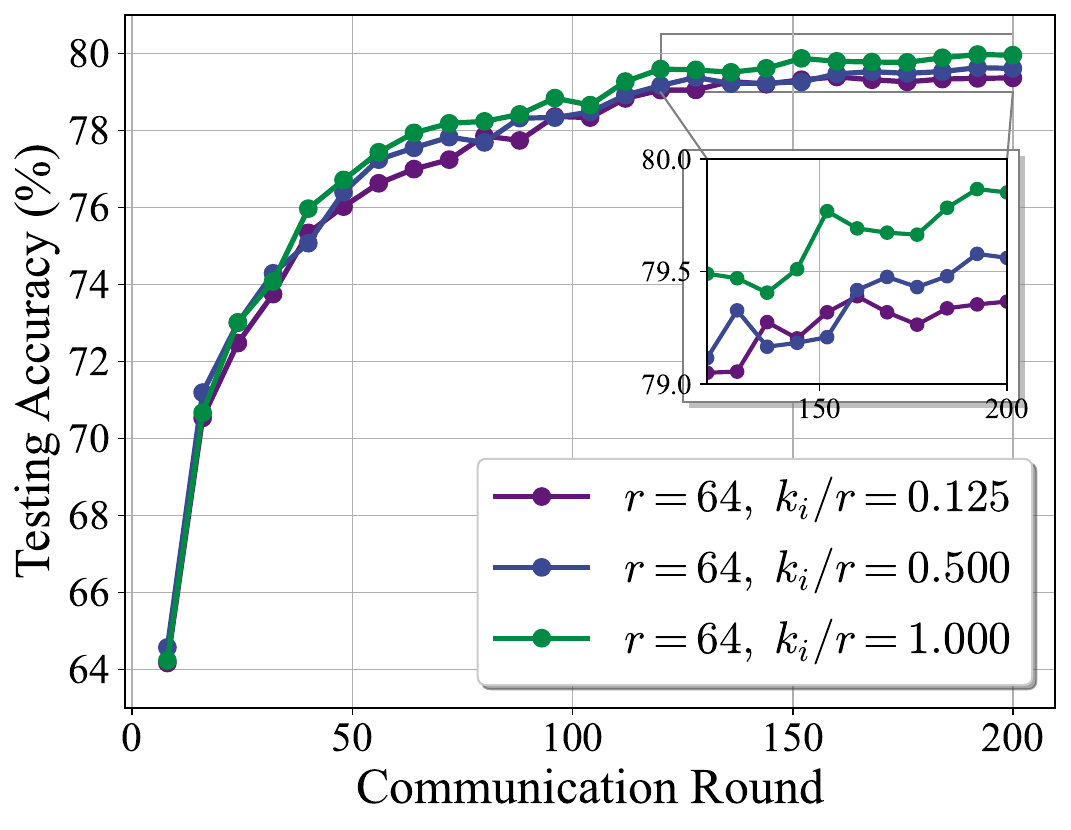} 
        \label{fig:sketching_ratio}
    }
    \vspace{-1mm}
    \caption{Fine-tuning the LLaMA-3.2-3B-Instruct model on the commonsense reasoning benchmark. The results are averaged over eight tasks, illustrating FSLoRA's ability to maintain strong performance while adapting to different rank and sketching configurations.}
    \label{fig:mainfigure}
    \vspace{-1mm}
\end{figure*}

\section{Experiments}\label{sec:experiment}

Our experiments focus on RoBERTa (125M) \citep{liu2019roberta} and LLaMA-3.2-3B-Instruct models \citep{llama3}, which represent typical model sizes suitable for client-side deployment, as well as the LLaMA-7B to reflect large-scale scenarios.
For RoBERTa and LLaMA-3.2-3B-Instruct models, we fine-tune and evaluate them on the GLUE \citep{wang2018glue} and commonsense reasoning benchmarks \citep{hu2023llm}, respectively. 
Similar to \citep{zhang2024towards,wang2024flora}, we adopt Dirichlet-based partitioning for dataset splits.
The number of clients is set to $20$ in the main manuscript and increased to $50$ and $100$ in Appendix \ref{appen:broader_heterogeneity}.
Further details are provided in the Appendix \ref{appen:implementation_details}. 
The implementation code for this project is available at \url{https://github.com/wenzhifang/Federated-Sketching-LoRA-Implementation}.

\subsection{Main Results Under Heterogeneous LoRA Setup}

\textbf{Performance comparison with baselines:} 
We consider four state-of-the-art baselines: three representative methods discussed in Section~\ref{sec:limitations_of_existing}, and RAVAN~\citep{raje2026ravan}, a recent federated LoRA fine-tuning method that addresses resource heterogeneity through partial LoRA head selection/freezing.
We also report the model's accuracy before fine-tuning (i.e., the base model). 
For FSLoRA, the rank of the global LoRA modules is fixed to $r=64$, while the sketching ratio for client $i$ is sampled from $\{0.125, 0.25, 0.5, 0.75\}$. For fair comparison, the same rank configuration is applied to all baseline methods.
Tables~\ref{tab:robert_het_accuracy} and~\ref{tab:llama_het_accuracy} present the performance of FSLoRA and the baselines. Across both benchmarks, FSLoRA consistently outperforms the base model in a wide margin, confirming the necessity of collaborative LLM fine-tuning. On the GLUE benchmark with RoBERTa, FSLoRA achieves the highest average accuracy among all methods. On the more challenging commonsense reasoning benchmark with LLaMA, FSLoRA still delivers the best overall performance. Notably, FSLoRA maintains computational efficiency comparable to HeteroLoRA in terms of GPU hours, highlighting its advantage in heterogeneous LoRA fine-tuning.

\textbf{Evaluation under broader heterogeneity, increased number of clients, and other models:} In Appendix \ref{appen:broader_heterogeneity}, we extend our evaluation to $50$ and $100$ clients under partial client participation, incorporating greater diversity in clients' communication and computation capabilities, as well as varying levels of data heterogeneity. In Appendix \ref{appen:larger_model}, we further assess the effectiveness of our method on Qwen2.5-1.5B-Instruct and LLaMA-7B models.

\subsection{Ablation Study}\label{sec:ablation_study}

\textbf{Impact of sketching:} 
In Figures \ref{fig:Results_GLUE_r} and \ref{fig:rank_varying}, we compare the performance of FSLoRA with and without sketching on fine-tuning RoBERTa and LLaMA-3.2-3B-Instruct, respectively. 
Note that FSLoRA without sketching is equivalent to the vanilla Federated LoRA.
For FSLoRA with sketching, we apply a uniform sketching ratio of \(k_i/r = 0.5\) across all clients.
The upload budget for each client is set to \(100\) and \(80\) times the size of the full global LoRA modules at the corresponding rank for the RoBERTa and the LLaMA-3.2-3B-Instruct models, respectively. 
As shown in Figures \ref{fig:Results_GLUE_r} and \ref{fig:rank_varying}, both FSLoRA with and without sketching achieve higher accuracy when the rank $r$ increases due to the availability of more tunable parameters. 
In addition, FSLoRA consistently outperforms its non-sketched counterpart across all the ranks and datasets. The use of sketching increases the communication frequency for clients under the same communication budget, thereby facilitating the optimization process and enhancing fine-tuning efficiency. 


\textbf{Impact of the global rank:} In Figure \ref{fig:global_rank}, we investigate the impact of the rank of the global LoRA modules on FSLoRA's performance. 
We vary the rank of the global LoRA modules while keeping the rank of submatrices updated by the clients to be consistent (i.e., $k_i=8$).
This ensures that the communication and computational resources on the client side remain unchanged.
As illustrated in Figure \ref{fig:global_rank}, FSLoRA maintains stable convergence across all the configurations. Moreover, FSLoRA demonstrates improved performance as the global rank increases.
This observation confirms that the proposed sketching mechanism enables resource-constrained systems to reap the benefits of a higher global rank, striking an effective balance between efficiency and performance.

\textbf{Impact of sketching ratio:} We investigate the impact of the sketching ratio on FSLoRA's performance by maintaining a constant global LoRA rank $r=64$ while varying the sketching ratio \(k_i/r\) in the range \(\{0.125, 0.5, 1\}\). 
As shown in Figure \ref{fig:sketching_ratio}, there is a slight performance degradation as the sketching ratio decreases, which is consistent with our theoretical analysis. 
This also reflects an tradeoff: while a larger sketching ratio improves convergence and accuracy, a smaller ratio reduces both computational and communication overhead. Notably, the observed degradation remains minor, highlighting FSLoRA's ability to maintain strong performance even under constrained resources. This demonstrates its effectiveness in balancing efficiency and accuracy, making it well-suited for resource-limited scenarios.


\textbf{Further experiments:} 
Results with more clients under broader heterogeneity, as well as with other models, are reported in Appendix \ref{appen:broader_heterogeneity} and Appendix \ref{appen:larger_model}, respectively. Appendix \ref{appen:further_experiments} provides detailed per-task comparisons on the commonsense reasoning benchmark corresponding to Figures \ref{fig:rank_varying} and \ref{fig:global_rank}. The impact of varying the number of local updates $H$ is studied in Appendix \ref{appen:local_updates}, while the empirical extension to dynamic sketching ratios is presented in Appendix \ref{appen:dynamic_sketching_ratio}. Finally, Appendix \ref{appen:compression_integration} demonstrates the synergistic effect of integrating compression with sketching.


\section{Conclusion}\label{sec:conclusion}

We proposed FSLoRA, a novel collaborative LLM fine-tuning framework that introduces a sketching mechanism to enhance training efficiency in resource-constrained systems. By maintaining large-rank LoRA modules on the server and allowing clients to selectively update submatrices based on the sketching ratios, FSLoRA effectively adapts to heterogeneous communication and computational constraints. We provided a rigorous convergence analysis of FSLoRA that characterizes how the sketching ratios (i.e., resource heterogeneity) affect the convergence rate. Finally, we confirmed the effectiveness of FSLoRA through extensive experiments across multiple datasets and models.

\section*{Limitations}
One limitation of our work is that, although we provide theory and extensive experiments, the analysis focuses on a standard federated setting and does not yet cover more realistic system effects such as asynchronous participation, unstable connections, client stragglers, and latency fluctuations. In addition, we focus on random-$k$ sketching because it is unbiased and theoretically tractable; more adaptive sketching strategies may be interesting future work if reliable importance measures for LoRA components can be developed.

\section*{Impact Statement}
This paper makes contributions to collaborative LLM fine-tuning by developing a resource-adaptive algorithm for resource-constrained distributed LLM systems. The focus of this work is on the technical advancement of LLM fine-tuning algorithms. While this research has potential societal impacts, it primarily addresses technical challenges and does not necessitate a specific discussion on societal consequences.

\section*{Acknowledgments}

This work was supported in part by the National Science Foundation (NSF) under Grants CNS-2146171, CNS-2212565, and ECCS-2543754; by the Office of Naval Research (ONR) under Grant N00014-22-1-2305; by the Air Force Office of Scientific Research (AFOSR) under Grant FA9550-24-1-0083; and by the National Research Foundation of Korea (NRF) grant funded by the Korea government (MSIT) (No. RS-2026-25470983).

\bibliography{refs}
\bibliographystyle{icml2026}

\onecolumn

\newpage
\appendix

\begin{center}
    {\bf\Large Appendix}
\end{center}

\startcontents[sections]
\printcontents[sections]{l}{1}{\setcounter{tocdepth}{3}}

\newpage

\section{Comparison of Computation, Memory, and Communication}\label{appen:com_memo_comparison}
\textbf{Computation and memory.}
Let $P$ and $q$ denote the memory cost of the full model and the global LoRA module (rank $r$), respectively. The computational cost is expressed with the big O notation. Forward and backward computations, as well as activation memory, are omitted as they are identical across all the considered methods. The results are summarized in Tables  \ref{tab:com_memo_comparison_client} and \ref{tab:com_memo_comparison_server}, where $m$ and $n$ denote the shape of the base model, $k_i$ denotes the LoRA rank for client $i$, $H$ denotes the number of iterations per round, and $N$ is the number of clients. Additionally, the results for the vanilla Federated LoRA, denoted as FedLoRA, are reported under the case of homogeneous LoRA ranks, i,e., $k_i = r$. 

\begin{table}[H]
\caption{
Client-side computation load and memory usage comparison.
}
\label{tab:com_memo_comparison_client}
\centering
\begin{tabular}{@{}lcc@{}}
\toprule
{Method} & {Memory} & {Computation (per round)} \\
\midrule
FedLoRA &
$P + q$ &
$\mathcal{O}(Hr(m + n))$ \\
HeteroLoRA &
$P + \frac{k_i}{r}q$ &
$\mathcal{O}(Hk_i(m + n))$ \\
FlexLoRA &
$P + \frac{k_i}{r}q$ &
$\mathcal{O}(Hk_i(m + n))$ \\
{FLoRA} &
$P + \max\left\{\sum_{i=1}^N \frac{k_i}{r}q, P\right\}$ &
$\mathcal{O}\left(Hk_i(m + n)) + (\sum_{i=1}^N k_i)mn + mn \right)$ \\
FSLoRA &
$P \!+\! \frac{k_i}{r}q$ &
$\mathcal{O}(Hk_i(m + n))$ \\
\bottomrule
\end{tabular}
\end{table}

\begin{table}[H]
\caption{
Server-side computation load and memory usage comparison.
}
\label{tab:com_memo_comparison_server}
\centering
\begin{tabular}{@{}lcc@{}}
\toprule
{Method} & {Memory} & {Computation (per round)} \\
\midrule
FedLoRA &
$N q$ &
$\mathcal{O}(N(m + n) r)$ \\
{HeteroLoRA} &
$\sum_{i=1}^N \frac{k_i}{r}q$ &
$\mathcal{O}(N(m + n)r)$ \\
{FlexLoRA} &
$\max\left\{\sum_{i=1}^N \frac{k_i}{r}q, 2P\right\}$ &
$\mathcal{O}\left((\sum_{i=1}^N k_i)mn + Nmn + \min\{m,n\}mn \right)$ \\
{FLoRA} &
$\sum_{i=1}^N \frac{k_i}{r}q$ &
$\mathcal{O}\left((\sum_{i=1}^N k_i)(m + n)\right)$ \\
 {FSLoRA} &
$\sum_{i=1}^N \frac{k_i}{r}q$ &
$\mathcal{O}(N(m + n)r)$ \\
\bottomrule
\end{tabular}
\vspace{-3mm}
\end{table}

As shown in Tables  \ref{tab:com_memo_comparison_client} and \ref{tab:com_memo_comparison_server}, FSLoRA matches HetLoRA in both computation and memory cost. FLoRA introduces additional client-side overhead due to merging LoRA modules. FlexLoRA incurs extra server-side costs from conducting SVD on the full model. In summary, FSLoRA guarantees convergence with minimum overhead.

\textbf{Communication.}
We detailed the communication load for baselines and our methods in Table \ref{tab:communication_comp}, where $q$ denotes the communication cost of a global LoRA module with rank $r$, $k_i$ denotes the local LoRA rank for client $i$, $m$ and $n$ denote the shape of the base model, and $N$ denotes the number of clients.

\begin{table}[h]
\centering
\caption{Communication complexity, assuming float 32 parameters and binary sketching indices.}
\begin{tabular}{lccccc}
\toprule
& {FedLoRA} & {HeteroLoRA} & {FlexLoRA} & {FLoRA} & {FSLoRA} \\
\midrule
{Uplink} & $q$ & $\frac{k_i}{r}q$ & $\frac{k_i}{r}q$ & $\frac{k_i}{r}q$ & $\frac{k_i}{r}q$ \\
{Downlink} & $q$ & $q$ & $q$ & $\sum_{i=1}^{N} \frac{k_i}{r}q $ & $q(1+\frac{Nr}{32mn})$ \\
\bottomrule
\end{tabular}
\label{tab:communication_comp}
\vspace{-2mm}
\end{table}

For the uplink, all four heterogeneous LoRA algorithms incur the same communication overhead for transmitting updated local LoRA modules, which is lower than that of FedLoRA.
For the downlink, FLoRA requires broadcasting the stacked LoRA modules, while HeteroLoRA and FlexLoRA broadcast the updated global LoRA modules. FSLoRA, on the other hand, broadcasts both the global LoRA modules and additional sketching matrices.
The extra communication introduced by the sketching matrices is negligible compared to that of the global LoRA modules, as it consists only of \emph{binary sketching indices} (i.e., the diagonal elements of the sketching matrix). For instance, in the case of the LLaMA-3.2-3B-Instruct model under our experimental LoRA configuration, the global LoRA modules contain 66,060,288 parameters, equivalent to approximately 252 MB when using float32. With a global rank of $r = 64$, the sketching indices require only 64 bits per client, covering all LoRA layers. Even with 100 clients, the total sketching overhead is merely 0.78 KB, which accounts for only 0.0003\% of the global LoRA modules.

\textbf{Resource-Efficient Implementation Details.} 
We uses a sparse full-rank formulation for analysis in Section~\ref{sec:fslora_formulation}, but the implementation can be written equivalently in dense local form with submatrix. Consider
\begin{equation}\label{formulation_sparse_form}
l(\mathbf{W}_0 + \mathbf{BSA}), \qquad
\mathbf{BSA} = \frac{r}{k}\sum_{j \in I} \mathbf{B} \mathbf{e}_j \mathbf{e}_j^\top \mathbf{A},
\end{equation}
where $I = \{j_1,\dots,j_k\}$ is the selected index set. Let
$\mathbf{E}_I = [\mathbf{e}_{j_1},\dots,\mathbf{e}_{j_k}]$, $\hat{\mathbf{B}} = \mathbf{B} \mathbf{E}_I$, and $\hat{\mathbf{A}} = \mathbf{E}_I^\top \mathbf{A}$, then $\hat{\mathbf{B}}$ and $\hat{\mathbf{A}}$ are exactly the selected local submatrices in dense form, and
$$
\mathbf{BSA}=\frac{r}{k}\hat{\mathbf{B}} \hat{\mathbf{A}}.
$$
Hence, updating the sparse full-rank parameters is equivalent to updating the dense local submatrices with loss
\begin{equation}\label{formulation_dense_form}
l(\mathbf{W}_0 + \frac{r}{k}\hat{\mathbf{B}} \hat{\mathbf{A}}).
\end{equation}
The forward passes in \eqref{formulation_dense_form} and \eqref{formulation_sparse_form} are identical, and the gradients on the selected entries are also identical because both parameterizations refer to the same variables through indexing. In other words, FSLoRA can be implemented in a dense form with submatrix, which does not require clients to maintain identical fixed-rank local LoRA adapters in the implementation.

\section{Justification for \texorpdfstring{Random-$k$}{Random-k} Sketching}\label{appen:discussion_alternative_sketching}

FSLoRA is built upon Random-$k$ diagonal sketching due to two key properties:
\begin{itemize}[leftmargin=*]
    \item \textbf{Submatrix selection.} Given a sparse diagonal matrix $\mathbf{S}_i$, we have
    \begin{equation}
        \mathbf{BS}_i\mathbf{A} = \sum_{j \in \mathcal{I}_i} s_j \mathbf{b}_j \mathbf{a}_j^\top 
        = [\mathbf{b}_j]_{j \in \mathcal{I}_i} \operatorname{diag}\{s_j\}_{j \in \mathcal{I}_i} [\mathbf{a}_j^\top]_{j \in \mathcal{I}_i},
    \end{equation}
    where \(\mathcal{I}_i\) corresponds to the index set of non-zero diagonal entries of \(\mathbf{S}_i\) and $\mathbf{b}_j$ and $\mathbf{a}_j^\top$ denote the $j$-th column of module $\mathbf{B}$ and the $j$-th row of module $\mathbf{A}$, respectively.
    In other words, with Random-$k$ sketching, only a subset of $\mathbf{B}$'s columns and $\mathbf{A}$'s rows are activated for client $i$. 
    The \emph{sparse diagonal structure} effectively reduces local training cost for each client. 
    \item \textbf{Unbiasedness for convergence.} When $\mathbf{S}_i$ is a Random-$k$ diagonal matrix with $k_i$ nonzero diagonal entries $s_j = \frac{r}{k_i}$, we have 
    \begin{equation}
        \mathbb{E}[\mathbf{BS}_i\mathbf{A}] = \mathbf{BA}.
    \end{equation}
    This unbiasedness is critical for our convergence analysis.
\end{itemize}

Random-$k$ sketching is easy to implement, which ensures uniform expected exposure of all columns and rows of the global LoRA modules to each client, closely mirroring the behavior of the standard LoRA. As training progresses, every component in the global LoRA is repeatedly sampled and thus sufficiently optimized in expectation. In particular, we theoretically and empirically validated its effectiveness in Sections~\ref{sec:analysis} and \ref{sec:experiment}.

In contrast, identifying a principled metric to quantify the importance of individual LoRA rows or columns remains an open problem, even in the centralized LoRA setting. This limitation hinders the reliable development of Top-$k$ or importance-based sketching strategies. Existing approaches typically rely on performing an SVD of the composite matrix $\mathbf{BA}$ (e.g., $\mathbf{BA} = \mathbf{U\Sigma V}^{\top}$) and defining importance scores for the columns of $\mathbf{U}$ and the rows of $\mathbf{V}^{\top}$. However, this departs from our intended setting, as we operate directly in the original $\mathbf{B}$ and $\mathbf{A}$ parameter space.

In addition, we empirically compare Random-$k$ sketching with several heuristic importance-based sketching strategies, as a rigorous importance metric is currently unavailable. Specifically, we consider two heuristic importance scores for each LoRA component $\mathbf{b}_j \mathbf{a}_j^\top$: (i) the spectral norm $\|\mathbf{b}_j\|_2 \cdot \|\mathbf{a}_j\|_2$ inspired by matrix analysis, and (ii) a simpler magnitude-based metric $\|\mathbf{b}_j\|_2 + \|\mathbf{a}_j\|_2$.
Using these scores, we sample (sketch) $k_i$ components from $\{\mathbf{b}_j \mathbf{a}_j^\top \}_{j=1}^r$ with probabilities proportional to the corresponding importance scores. 

The results are reported in Table~\ref{tab:alternative_sketching}. We observe that Random-$k$ sketching consistently outperforms these heuristic importance-based alternatives. A potential reason is that these importance measures do not reliably capture the true contribution of individual components. As a result, importance-based sketching may systematically under-sample certain components that are critical for optimization but poorly reflected by such heuristics, leading to biased training. On the other hand, Random-$k$ sketching provides uniform expected exposure of all rank-one components throughout training, which prevents under-training of any individual column or row.
While more principled importance-based sketching strategies may further improve performance and remain an interesting direction for future work, our current empirical evidence favors Random-$k$ sketching.

\begin{table}[h]
\centering
\caption{Accuracy comparison of Random-$k$ sketching and importance-based sketching under the commonsense reasoning benchmark with the LLaMA-3.2-3B-Instruct model. The experimental settings are the same as those in Table~\ref{tab:llama_het_accuracy}. Random-$k$ sketching achieves better performance.}
\small
\begin{tabular}{lccccccccc}
\toprule
\textbf{Importance metric} & ARC-c & ARC-e & BoolQ & HSwag & OBQA & PIQA & SIQA & Wino & Avg. \\
\midrule
Spectral Norm: $\|a\|_2 \|b\|_2$ & 73.4 & 86.2 & 60.5 & 77.9 & 75.8 & 81.6 & 74.6 & 73.1 & 75.4 \\    
Magnitude Sum: $\|a\|_2 + \|b\|_2$ & 72.1 & 86.5 & 57.4 & 75.3 & 73.5 & 81.3 & 72.6 & 69.9 & 73.6 \\
\rowcolor{ours} Random-$k$              & 75.8 & 86.7 & 69.7 & 81.4 & 80.4 & 83.9 & 76.2 & 78.8 & 79.1 \\
\bottomrule
\end{tabular}
\label{tab:alternative_sketching}
\end{table}

Notably, although our algorithm and analysis are based on Random-$k$ sketching, the framework naturally extends to importance-based sketching when a reliable importance metric is available. We theoretically discussed this extension in Appendix~\ref{appen:extention_importance}.

\section{Justification for the Assumptions in the Convergence Analysis}\label{appen:justification_assumption}

\textbf{Smoothness.} Assumption~\ref{smoothness_assumption}, although global $L$-smoothness may be difficult to guarantee for the LoRA fine-tuning objective, it is reasonable to assume local $L$-smoothness within the neighborhood explored during fine-tuning. Since LoRA updates are typically low-rank and initialized near zero, the optimization trajectory often remains in a local region around the pretrained model, where the loss function can be approximated as smooth.

\textbf{Variance and Data Heterogeneity.}
In the context of LLM fine-tuning, both the magnitude of stochastic gradients and gradients in LLM fine-tuning are in a mild range since:
\begin{itemize}[leftmargin=*]
    \item Transformer-based LLMs use \emph{LayerNorm} and \emph{scaled softmax attention}, which stabilize activations and suppress gradient spikes.
    \item Fine-tuning starts from a well-pretrained model already near a local minimum, leading to smaller gradients.
    \item The fine-tuning dataset does not typically contain strong contradictory signals to what the model already knows, resulting in a relatively flat loss surface. 
\end{itemize}
To further support this empirically, we report the statistics of the expected norm of the stochastic gradients 
$\mathbb{E}\|\nabla_{\mathbf{X}} \tilde{\ell}(\mathbf{X},\xi;\mathbf{S})\|$ 
over approximately $4500$ samples for 4 representative clients with different sketching ratios. The table below reports the minimum and maximum expected norm among 30 randomly sampled model states $\mathbf{X} = [\mathbf{B}; \mathbf{A}]$.

\begin{table}[h!]
\caption{Statistics of the expected norm of stochastic gradients across clients.\label{tab:gradient_norm}}
\centering
\small
\begin{tabular}{ccccc}
\toprule
\textbf{Client ID} & \textbf{Number of samples} & \textbf{Rank} & \textbf{Min} & \textbf{Max} \\
\midrule
1 & 4580 & 4  & 0.1286 & 0.9499 \\
2 & 4216 & 19 & 0.1284 & 0.7069 \\
3 & 4873 & 9  & 0.1237 & 0.5774 \\
4 & 5124 & 32 & 0.1499 & 0.8066 \\
\bottomrule
\end{tabular}
\end{table}

As we can see from Table \ref{tab:gradient_norm}, the expected norm, i.e., $\mathbb{E}_{\xi \sim \mathcal{D}_i, \mathbf{S}\sim \mathcal{S}_i}\|\nabla_{\mathbf{X}} \tilde{\ell}(\mathbf{X},\xi;\mathbf{S})\|$, is in a moderate range. Notably, the variance is upper-bounded by the expected squared gradient norm:
\[
\mathbb{E}_{\xi \sim \mathcal{D}_i, \mathbf{S}\sim \mathcal{S}_i} \|\nabla_{\mathbf{X}} \tilde{\ell}(\mathbf{X},\xi;\mathbf{S}) - \nabla_{\mathbf{X}} f_i^{\mathcal{S}}(\mathbf{X})\|^2 
\leq \mathbb{E}_{\xi \sim \mathcal{D}_i, \mathbf{S}\sim \mathcal{S}_i}\|\nabla_{\mathbf{X}} \tilde{\ell}(\mathbf{X},\xi;\mathbf{S})\|^2.
\]
Therefore, it is generally not hard to find a $\sigma$ and $\rho$ to make Assumption~\ref{varaince_assumption} hold.

On the other hand, we have
\begin{equation}
\begin{aligned}
& \left\| \nabla_{\bf X} f_i^{\mathcal{S}}({\bf X}) \!-\! \nabla_{\bf X} f^{\mathcal{S}}(\bf X) \right\|^2 \\
\leq &  2 \left\| \nabla_{\bf X} f_i^{\mathcal{S}}({\bf X})  \right\|^2 +   2 \left\| \nabla_{\bf X} f^{\mathcal{S}}(\bf X) \right\|^2 \\
\leq & 2 \left\| \nabla_{\bf X} f_i^{\mathcal{S}}({\bf X})  \right\|^2 +   2 \frac{1}{N}\sum_{i=1}^N \left\| \nabla_{\bf X} f_i^{\mathcal{S}}({\bf X}) \right\|^2 \\
\leq & 2 \mathbb{E}_{\xi \sim \mathcal{D}_i, \mathbf{S}\sim \mathcal{S}_i}\|\nabla_{\mathbf{X}} \tilde{\ell}(\mathbf{X},\xi;\mathbf{S})\|^2 +   2 \frac{1}{N}\sum_{i=1}^N \mathbb{E}_{\xi \sim \mathcal{D}_i, \mathbf{S}\sim \mathcal{S}_i}\|\nabla_{\mathbf{X}} \tilde{\ell}(\mathbf{X},\xi;\mathbf{S})\|^2,
\end{aligned}
\end{equation}
where the first inequality follows Cauchy-Schwarz inequality, while the last inequality follows Jensen's inequality.
Thus, the deviation can be controlled by the expected gradient norm, which we empirically found to be moderate (see Table \ref{tab:gradient_norm}). Hence, it is reasonable to impose an upper bound on $\left\| \nabla_{\bf X} f_i^{\mathcal{S}}({\bf X}) \!-\! \nabla_{\bf X} f^{\mathcal{S}}(\bf X) \right\|^2 $ as in Assumption \ref{assumption:gradient_dissimilarity}.

\section{Compatibility of FSLoRA with Secure Aggregation}\label{appen:secure_aggregation}
The aggregation of FSLoRA is compatible with secure aggregation. Taking the aggregation of module $\mathbf{B}$ as an example, we illustrate this below. 

In FSLoRA, client updates are sparse matrices with non-zero values only in columns indexed by $\mathcal{I}_i \subset [r]$, size $|\mathcal{I}_i| = k_i$. With secure aggregation, each client apply \emph{additive masking}:
\begin{equation}
\tilde{\mathbf{B}}_i = \Delta \mathbf{B}_i + \mathbf{R}_i,
\end{equation}
where mask $\mathbf{R}_i$ satisfies $\operatorname{supp}(\mathbf{R}_i) \subseteq (u,v): u \in [m], v \in \mathcal{I}_i$ and $\sum_{i=1}^N \mathbf{R}_i = 0$. That is, the mask has non-zero entries only in the client's active columns, and all masks together sum to zero to preserve correctness. Such masks can be constructed following the classical protocol: for each pair of clients $(i,j)$, define a random matrix
\begin{equation}
\mathbf{M}_{ij} = - \mathbf{M}_{ji} \in \mathbb{R}^{m \times r}, 
\quad \operatorname{supp}(\mathbf{M}_{ij}) \subseteq (u,v) \;|\; u \in [m], \, v \in \mathcal{I}_i \cap \mathcal{I}_j,
\end{equation}
and then construct its total mask
\begin{equation}
\mathbf{R}_i = \sum_{j>i} \mathbf{M}_{ij} - \sum_{j<i} \mathbf{M}_{ji}.
\end{equation}

During uploading, client $i$ sends the masked $k_i$ non-zero columns of $\tilde{\mathbf{B}}_i$, and then the server adds the corresponding padding and averages them as:
\begin{equation}
\sum_{i=1}^N \tilde{\mathbf{B}}_i 
= \sum_{i=1}^N \left( \Delta \mathbf{B}_i + \mathbf{R}_i \right) 
= \sum_{i=1}^N \Delta \mathbf{B}_i,
\end{equation}
which matches the aggregation of module $\mathbf{B}$ in \eqref{model_aggregation}. The aggregation of module $\mathbf{B}$ in FSLoRA is thus compatible with secure aggregation. 

We can draw the same conclusion for module $\mathbf{A}$ under the same derivation.

\section{Evaluation under Broader Heterogeneity and Increased Number of Clients with Partial Participation}\label{appen:broader_heterogeneity}

To accommodate a larger number of clients, we extend FSLoRA (Algorithm~\ref{alo_flora}) to support \emph{partial client participation}. Specifically, at each round, the server samples a subset of clients, distributes the current global LoRA modules to them, and aggregates only the updates from these clients. The detailed algorithm procedure is summarized in Algorithm~\ref{alo_flora_pp}. 

\begin{algorithm}[h]
\caption{Federated Sketching LoRA (FSLoRA) with Partial Participation}
\label{alo_flora_pp}
\begin{algorithmic}[1]
\REQUIRE Base model ${\bf W}_0$, LoRA modules ${\bf B}^0$ and ${\bf A}^0$, learning rate $\gamma$, sketching sets $\{\mathcal{S}_i\}_{i=1}^{N}$, client sampling size $M$
\FOR{$t = 0, 1, \ldots, T-1$}
    \STATE Server samples a participating client set $\mathcal{C}_t \subseteq \{1,\ldots,N\}$ with $|\mathcal{C}_t|=M$
    \STATE Server samples sketching matrices $\{{\bf S}_i^t \sim \mathcal{S}_i\}_{i \in \mathcal{C}_t}$ and sends them to participating clients
    \STATE Server broadcasts the current global LoRA modules $({\bf B}^t,{\bf A}^t)$ to all $i \in \mathcal{C}_t$
    \FORALL{clients $i \in \mathcal{C}_t$ \textbf{in parallel}}
        \STATE Initialize local LoRA modules: ${\bf B}_i^{t,0} \leftarrow {\bf B}^t,\;\; {\bf A}_i^{t,0} \leftarrow {\bf A}^t$
        \FOR{$h = 0, 1, \ldots, H\!-\!1$}
            \STATE Client $i$ updates local LoRA modules: $\begin{bmatrix}
\mathbf{B}_i^{t,h+1};
\mathbf{A}_i^{t,h+1}
\end{bmatrix}
\leftarrow
\begin{bmatrix}
\mathbf{B}_i^{t,h};
\mathbf{A}_i^{t,h}
\end{bmatrix}
-
\gamma 
\begin{bmatrix}
{\Delta} {\bf B}_i^{t,h} ({\bf S}_i^t)^{\top}; 
({\bf S}_i^t)^{\top} {\Delta} {\bf A}_i^{t,h}
\end{bmatrix}$
        \ENDFOR
        \STATE Client $i$ uploads the non-zero columns of $({\bf B}_i^{t,H} - {\bf B}_i^{t,0})$ and the non-zero rows of $({\bf A}_i^{t,H} - {\bf A}_i^{t,0})$
    \ENDFOR
    \STATE Server updates the global LoRA modules: $\begin{bmatrix}
\mathbf{B}^{t+1};
\mathbf{A}^{t+1}
\end{bmatrix}
\leftarrow
\begin{bmatrix}
\mathbf{B}^{t};
\mathbf{A}^{t}
\end{bmatrix}
+
\frac{1}{M} \sum_{i\in \mathcal{C}_t}
\begin{bmatrix}
\mathbf{B}_i^{t,H} - \mathbf{B}_i^{t,0};
\mathbf{A}_i^{t,H} - \mathbf{A}_i^{t,0}
\end{bmatrix}$.
\ENDFOR
\end{algorithmic}
\end{algorithm}

Our theoretical analysis focuses on the full-participation setting to isolate the effect of the sketching mechanism. Since sketching is independent of client sampling, partial participation affects FSLoRA in the same manner as FedAvg. Extending the analysis therefore follows standard techniques widely studied in the FL literature~\citep{yang2021achieving,wang2020tackling}, and we empirically validate FSLoRA in this setting.

Throughout this section, we fix the partial participation size to $M=10$, i.e., 10 clients are sampled in each round.

\subsection{Increasing Resource Heterogeneity and the Number of Clients}\label{appen:more_devices_more_heterogeneity}

We extend our experiments on LLaMA-3.2-3B-Instruct with the commonsense reasoning benchmark to $50$ clients. We adopt Dirichlet-based partitioning for dataset splits.
Specifically, the commonsense reasoning benchmark includes $8$ tasks, and we partitioned them based on the Dirichlet distribution to construct task heterogeneity among $50$ clients. The Dirichlet concentration parameter is set to $\alpha = 0.1$.
We simulate client resource heterogeneity via different LoRA rank distributions (beyond the limited sketching ratio considered in Section \ref{sec:experiment}). More capable clients are assigned higher ranks, reflecting varying compute capacities. We consider two different rank distributions: normal and heavy-tail distributions in the range $[4, 64]$.

\textbf{Normal distribution:} Ranks are sampled from a normal distribution with mean $\mu = \frac{a + b}{2}$ and standard deviation $\sigma = \frac{b - a}{6}$, where $a=4$ and $b=64$.
This models a balanced distribution of client capabilities centered around the middle of the range.


\textbf{Heavy-tail distribution:} We sample ranks using an inverse log-normal distribution. Specifically, we draw $x_i \sim \text{LogNormal}(\mu, \sigma)$ with $\mu = \log\left(\frac{a + b}{4}\right)$ and $\sigma = 1.0$, then set $k_i = 1/x_i$ and apply min-max normalization to scale values into the range $[a, b]$. This results in a heavy-tailed distribution where most clients receive low ranks, reflecting a scenario with many low-capability clients and a few high-capability ones.

\begin{table}[h]
\centering
\caption{Accuracy comparison under different client heterogeneity settings. FSLoRA outperforms baseline methods across both normal and heavy-tail LoRA rank distributions.}
\small
\begin{tabular}{llccccccccc}
\toprule
{Rank setup} & Method & ARC-c & ARC-e & BoolQ & HellaSwag & OBQA & PIQA & SIQA & WinoGrande & Avg. \\
\midrule

\multirow{4}{*}{{Normal}} 
& HeteroLoRA     & 73.38 & 85.82 & 62.17 & 71.23 & 77.40 & 80.14 & 74.72 & 72.53 & 74.67 \\
& FlexLoRA       & 74.23 & 87.84 & 68.37 & 79.77 & 76.00 & 82.97 & 75.90 & 78.13 & 77.90 \\
& FLoRA   & 68.17 & 83.75 & 64.93 & 75.67 & 71.40 & 77.20 & 71.24 & 70.09 & 72.81 \\
\rowcolor{ours}& {FSLoRA}       & {75.77} & {86.95} & {69.67} & {81.53} & {80.60} & {84.06} & {76.20} & {78.85} & {79.20} \\

\midrule

\multirow{4}{*}{{Heavy-tail}} 
& HeteroLoRA     & 72.44 & 86.78 & 63.60 & 73.10 & 72.00 & 81.34 & 71.65 & 68.75 & 73.71 \\
& FlexLoRA       & 73.04 & 86.70 & 62.23 & 75.57 & 78.00 & 81.12 & 74.77 & 73.32 & 75.59 \\
& FLoRA   & 67.92 & 81.90 & 64.90 & 72.77 & 74.00 & 80.41 & 75.28 & 70.24 & 73.43 \\
\rowcolor{ours}& {FSLoRA}       & {75.77} & {86.70} & {69.67} & {81.40} & {80.40} & {83.90} & {76.15} & {78.77} & {79.10} \\

\bottomrule
\end{tabular}
\label{tab:divice_het}
\end{table}

As shown in Table \ref{tab:divice_het}, FSLoRA outperforms other methods under both heterogeneity settings. As we move from normal to heavy-tail, where more clients are low-resource, overall performance decreases for all methods. However, FSLoRA exhibits the smallest drop, demonstrating stronger robustness to extreme client heterogeneity.


\begin{figure}[h]
    \centering
    \includegraphics[width=0.8\linewidth]{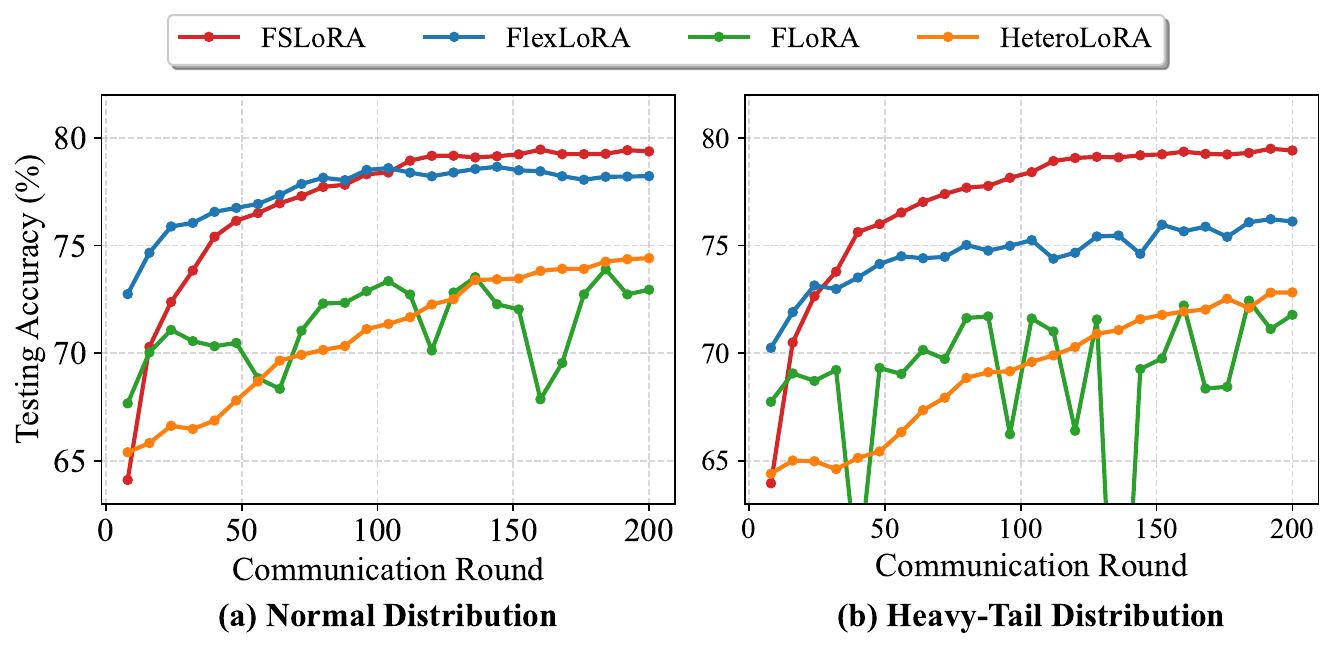}
    \caption{Convergence behavior of FSLoRA and baselines on the commonsense reasoning benchmark with the LLaMA-3.2-3B-Instruct model. Notably, FSLoRA's per-round communication cost is at most equal to the baselines (as detailed in Appendix \ref{appen:com_memo_comparison}). Testing accuracy is averaged over eight tasks.
    }
    \label{fig:comvergence_comparison_device}
\end{figure}
In Figure \ref{fig:comvergence_comparison_device}, we compare the convergence behavior of FSLoRA and three baseline methods under the aforementioned two types of client heterogeneity. Under the normal distribution, FlexLoRA exhibits fast initial progress but falls behind FSLoRA in final accuracy, likely due to approximation errors introduced by truncated SVD. This issue is exacerbated in the heavy-tail distribution, where low-rank clients dominate and SVD truncation causes greater distortion, further degrading FlexLoRA's performance. Similarly, HeteroLoRA's reliance on zero-padding reduces optimization efficiency, preventing it from achieving higher accuracy. 
FLoRA fails to show steady improvement as communication progresses. One potential reason is that frequent model merging and random reinitialization of LoRA modules in each round disrupt the convergence continuity. 
In contrast, FSLoRA demonstrates robust and stable convergence across both scenarios, achieving the highest overall accuracy.

\subsection{Further Increasing the Number of Clients}

We further evaluated the performance of FSLoRA by increasing the number of clients to $100$. The results are presented in Table \ref{tab:more_devices}. In this setting, local ranks follow a heavy-tailed distribution as described in the previous subsection, and all other experimental configurations remain unchanged. As shown in the table, FSLoRA maintains its advantage in terms of the average performance when scaling to more clients.

\begin{table}[h]
\centering
\caption{Accuracy comparison when the number of clients is $N=100$. FSLoRA maintains its advantage in terms of the average accuracy.}
\small
\begin{tabular}{lccccccccc}
\toprule
Method & ARC-c & ARC-e & BoolQ & HellaSwag & OBQA & PIQA & SIQA & WinoGrande & Avg. \\
\midrule

 HeteroLoRA     & 71.76 & 86.24 & 62.57 & 68.07 & 76.60 & 79.38 & 74.10 & 69.69 & 73.55 \\
 FlexLoRA       & 73.38 & 87.54 & 69.03 & 75.27 & 78.60 & 80.47 & 74.16 & 73.80 & 76.53 \\
 FLoRA   & 69.97 & 83.25 & 67.10 & 71.67 & 73.60 & 78.94 & 72.21 & 70.80 & 73.44 \\
\rowcolor{ours} {FSLoRA}  & {74.40} & {87.54} & {70.13} & {79.90} & {79.40} & {83.57} & {76.51} & {78.93} & {78.80} \\
\bottomrule
\end{tabular}
\label{tab:more_devices}
\end{table}

\subsection{Varying the Level of Data Heterogeneity}

In Table \ref{tab:data_het}, we investigate the impact of the degree of data heterogeneity on performance. We increase the heterogeneity by decreasing the Dirichlet concentration parameter from $\alpha = 1$ to $\alpha = 0.1$. 
The local ranks follow the heavy-tail distribution described in the previous subsection, and all other experimental configurations remain consistent with Appendix \ref{appen:more_devices_more_heterogeneity}. 
As observed from Table \ref{tab:data_het}, the performance of all methods degrades as heterogeneity increases. FSLoRA consistently achieves higher accuracy.

\begin{table}[h]
\centering
\caption{Accuracy comparison under different data heterogeneity settings. FSLoRA maintains its advantage over the baselines as the data heterogeneity increases. The number of clients is set to $50$.}
\small
\begin{tabular}{llccccccccc}
\toprule
{Data setup} & Method & ARC-c & ARC-e & BoolQ & HellaSwag & OBQA & PIQA & SIQA & WinoGrande & Avg. \\
\midrule

\multirow{4}{*}{Dir(1)} 
& HeteroLoRA     & 72.18 & 86.11 & 62.57 & 73.10 & 77.60 & 79.82 & 74.26 & 69.46 & 74.39 \\
& FlexLoRA       & 74.06 & 87.25 & 65.67 & 74.90 & 78.80 & 81.01 & 74.16 & 74.27 & 76.27 \\
& FLoRA   & 70.14 & 83.29 & 67.27 & 71.60 & 73.60 & 78.73 & 72.16 & 70.96 & 73.47 \\
\rowcolor{ours}& {FSLoRA}       & {75.85} & {87.50} & {70.93} & {81.47} & {81.00} & {82.86} & {76.66} & {78.53} & {79.35} \\

\midrule

\multirow{4}{*}{Dir(0.1)} 
& HeteroLoRA     & 72.44 & 86.78 & 63.60 & 73.10 & 72.00 & 81.34 & 71.65 & 68.75 & 73.71 \\
& FlexLoRA       & 73.04 & 86.70 & 62.23 & 75.57 & 78.00 & 81.12 & 74.77 & 73.32 & 75.59 \\
& FLoRA   & 67.92 & 81.90 & 64.90 & 72.77 & 74.00 & 80.41 & 75.28 & 70.24 & 73.43 \\
\rowcolor{ours}& {FSLoRA}       & {75.77} & {86.70} & {69.67} & {81.40} & {80.40} & {83.90} & {76.15} & {78.77} & {79.10} \\

\bottomrule
\end{tabular}
\label{tab:data_het}
\end{table}



\section{Experiments on Qwen2.5-1.5B-Instruct and LLaMA-7B}\label{appen:larger_model}

We further extended our evaluation to the Qwen model. Specifically, we fine-tuned Qwen2.5-1.5B-Instruct on the commonsense reasoning benchmark using the same setup as in Table~\ref{tab:llama_het_accuracy}. As shown in Table~\ref{tab:qwen_1_5b}, FSLoRA achieves competitive performance compared with the heterogeneous LoRA baselines, suggesting its effective across different model architectures.
\begin{table}[h]
\centering
\caption{Performance comparison on the Qwen2.5-1.5B-Instruct commonsense reasoning benchmark. FSLoRA consistently outperforms the baselines, demonstrating robust effectiveness across model architectures.}
\small
\begin{tabular}{lccccccccc}
\toprule
Method & ARC-c & ARC-e & BoolQ & HellaSwag & OBQA & PIQA & SIQA & WinoGrande & Avg. \\
\midrule
 HeteroLoRA & 70.36 & 86.53 & 65.50 & 68.55 & 79.01 & 74.93 & 68.25 & 68.04 & 72.65 \\
FlexLoRA   & 73.41 & 88.07 & 63.32 & 71.30 & 76.44 & 78.02 & 70.94 & 68.56 & 73.76 \\
FLoRA      & 70.28 & 84.51 & 60.90 & 68.82 & 69.58 & 73.45 & 67.53 & 66.27 & 70.17 \\
\rowcolor{ours} {FSLoRA}  & {75.82} & {88.16} & {65.08} & {74.93} & {79.82} & {80.72} & {72.62} & {71.29} & {76.06} \\
\bottomrule
\end{tabular}
\label{tab:qwen_1_5b}
\end{table}
\begin{table}[h]
\centering
\caption{Performance comparison on the LLaMA-7B model across Wizard, Dolly-15k, and Alpaca datasets. FSLoRA achieves the highest average performance, validating its scalability to larger and more complex models.}
\begin{tabular}{lcccc}
\toprule
{Method} & {Wizard} & {Dolly-15k} & {Alpaca} & {Avg} \\
\midrule
HeteroLoRA         & 27.15           & 26.70              & 28.74           & 27.53        \\
FlexLoRA         & 28.25           & 35.60             & 30.40           & 31.42        \\
FLoRA    & 27.91           & 28.50              & 29.54           & 28.65        \\
\rowcolor{ours} FSLoRA & 30.33  & 40.79     & 30.68  & 33.93 \\
\bottomrule
\end{tabular}
\label{tab:llama7b_results}
\end{table}

Although our primary focus is on models suitable for client-side deployment, such as RoBERTa and LLaMA-3.2-3B-Instruct models, we also include experiments on the larger LLaMA-7B model to demonstrate the scalability of FSLoRA in more complex models. Specifically, we fine-tune the LLaMA-7B model on the Wizard, Dolly-15k, and Alpaca datasets and evaluate it on $1444$ MMLU samples (available at: https://github.com/ATP-1010/FederatedLLM). For Wizard and Dolly-15k, we adopt the same heterogeneous data partitioning as \citep{wang2024flora}. Since the Alpaca dataset lacks a clear task or domain structure, we apply a uniform random partitioning strategy to distribute the data across clients. We tune the \text{q\_proj} and \text{v\_proj} modules and set the local LoRA ranks $k_i\!=\![64, 32, 16, 16, 8, 8, 4, 4, 4, 4]$ for $10$ clients. The parameter settings are aligned with those in \citep{wang2024flora}. 
As shown in Table \ref{tab:llama7b_results}, FSLoRA achieves the highest average performance across all three datasets compared to baselines. These results demonstrate FSLoRA's potential for effective fine-tuning with the large-scale LLaMA-7B model under heterogeneous client settings.

\section{Further Experiments}

In this section, we provide additional results, including detailed per-task comparisons from the commonsense reasoning benchmark corresponding to Figures \ref{fig:rank_varying} and \ref{fig:global_rank}. In addition, we further investigate the impact of the number of local updates $H$ on the convergence, the robustness of FSLoRA under dynamic sketching ratio, and the integration of communication compression and sketching.


\subsection{Further Details on the Ablation Study}\label{appen:further_experiments}
\begin{figure}[h]
    \centering
    \includegraphics[width=\linewidth]{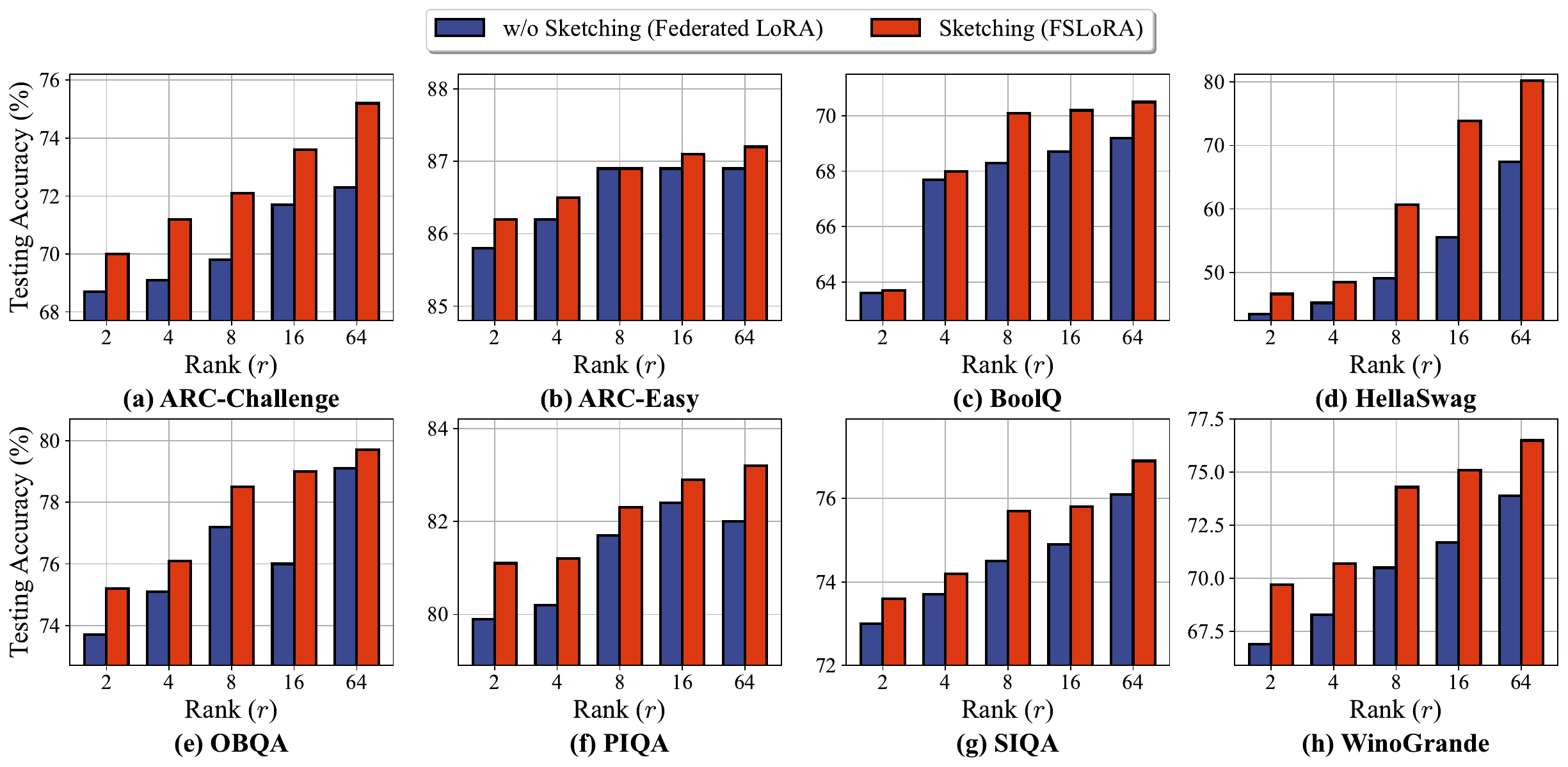}
    \caption{Comparison of FSLoRA with and without sketching, with an upload budget \(200 \times\) the global LoRA module size at each rank. This is based on the commonsense reasoning benchmark and the LLaMA-3.2-3B-Instruct model. We observe that the sketching mechanism improves performance across all considered tasks. The average accuracy of the eight tasks is shown in Figure \ref{fig:rank_varying}.}
    \label{fig:llama_detail_fix_budget}
\end{figure}

\textbf{Impact of sketching:} In Figure \ref{fig:llama_detail_fix_budget}, we compare the performance of FSLoRA with and without sketching on eight tasks from the commonsense reasoning benchmark using the LLaMA-3.2-3B-Instruct model. Notably, FSLoRA without sketching is equivalent to the vanilla Federated LoRA. For FSLoRA with sketching, we apply a uniform sketching ratio of \(k_i/r = 0.5\) across all distributed clients. 
The uploading budget for each client is set to \(200\) times the size of the full global LoRA modules at the corresponding rank. It is clear that FSLoRA with sketching consistently outperforms its non-sketched counterpart across these eight tasks, demonstrating the effectiveness of sketching in improving performance.

\begin{figure}[h!]
    \vspace{-2mm}
    \centering
    \includegraphics[width=\linewidth]{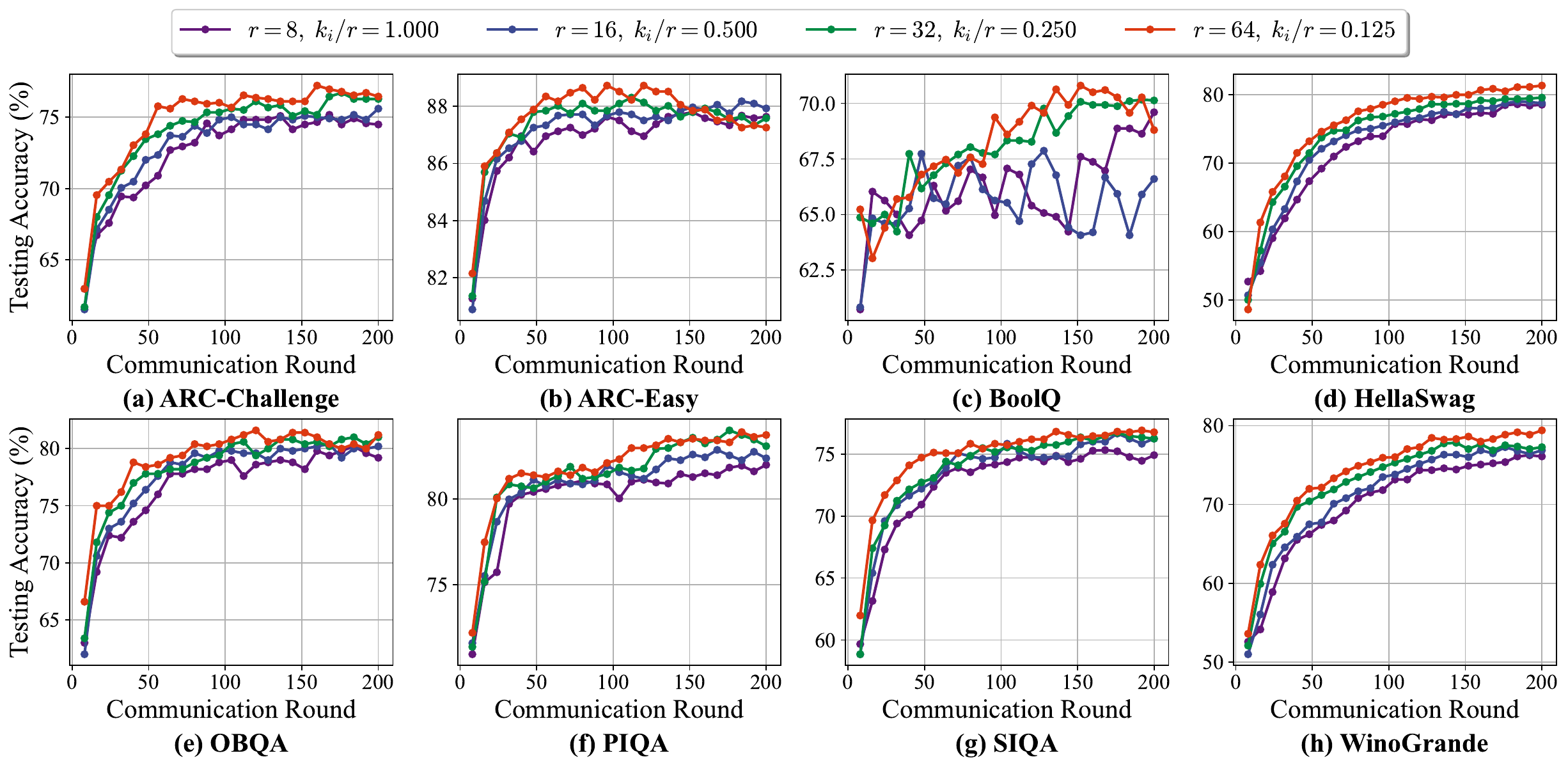}
    \caption{Impact of the rank of global LoRA modules on FSLoRA, given a fixed rank for the updated submatrices at the clients. This is based on the commonsense reasoning benchmark and the LLaMA-3.2-3B-Instruct model. Overall, FSLoRA demonstrates improved performance as the global rank increases. The average accuracy of the eight tasks is shown in Figure \ref{fig:global_rank}.
    }
    \label{fig:llama_detail_global_rank}
\end{figure}

\textbf{Impact of the global rank:} In Figure \ref{fig:llama_detail_global_rank}, we present the impact of the rank of the global LoRA modules on FSLoRA's performance across eight tasks from the commonsense reasoning benchmark. We consider four configurations: 1) \(r = 8,~ k_i/r = 1\), 2) \(r = 16,~ k_i/r = 0.5\), 3) \(r = 32,~ k_i/r = 0.25\), and 4) \(r = 64,~ k_i/r = 0.125\). The rank of submatrices updated by the clients at each iteration remains consistent across all configurations (i.e., \(k_i = 8\)), ensuring that the communication and computational resources on the client side are kept fixed for all cases. In the ARC-Easy task, performance decreases as the rank increases to $64$, potentially due to overfitting. 
In general, FSLoRA shows improved performance as the rank increases. 

\subsection{Impact of Local Updates}\label{appen:local_updates}
Based on the commonsense reasoning benchmark and the LLaMA-3.2-3B-Instruct model, we evaluated the convergence behavior of FSLoRA under varying numbers of local updates (i.e., $H$). The experimental results are presented in Figure~\ref{fig:impact_H}. In the low-to-moderate regime of local updates (i.e., $H \in {10, 20, 100}$), FSLoRA demonstrates a clear acceleration in convergence as $H$ increases. For example, moving from $H=10$ to $H=20$ substantially reduces the number of communication rounds required to reach the same level of testing accuracy, while further increasing $H$ to $100$ yields even faster progress toward convergence. This observation indicates that a moderate increase in local updates allows clients to improve communication efficiency. However, when the number of local updates is pushed beyond this range (e.g., $H=200$), no additional convergence gain is observed. These findings align well with our theoretical analysis in Section \ref{sec:analysis}, which shows that FSLoRA can achieve a convergence speedup under certain conditions on $H$. 
\begin{figure}[h!]
    \centering
    \includegraphics[width=0.4\linewidth]{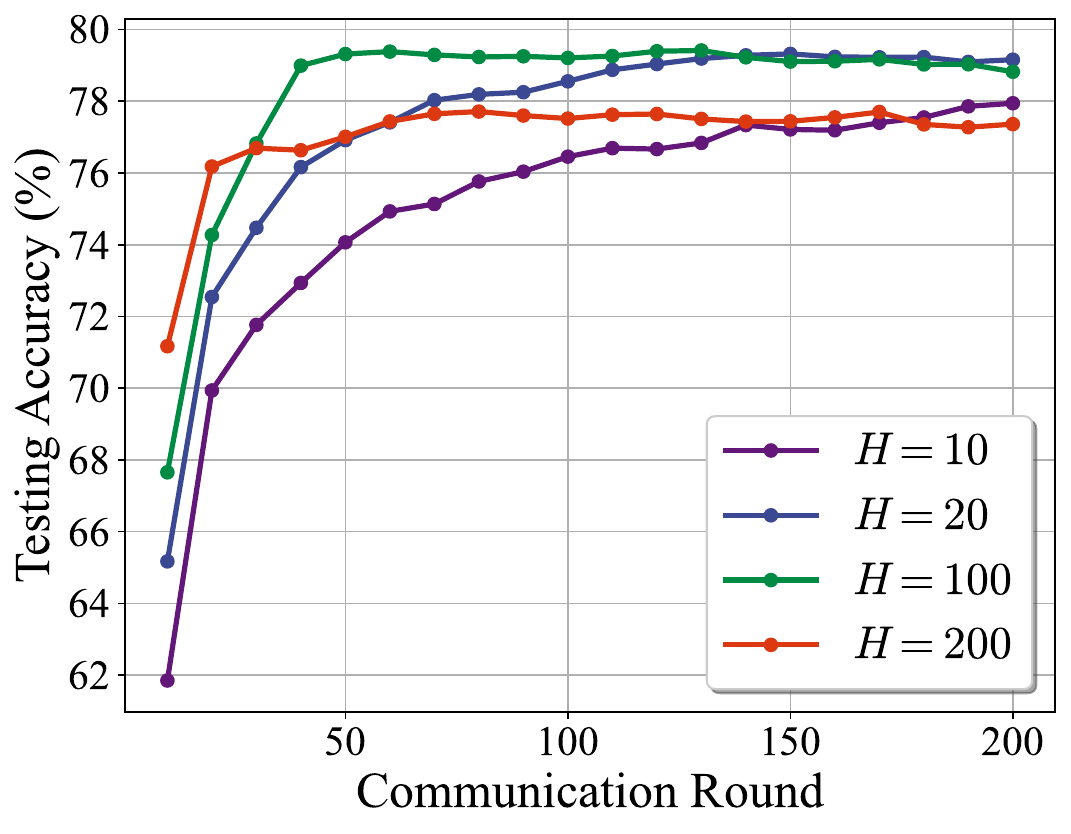}
    \caption{Impact of the number of local updates on FSLoRA's convergence. This is based on the commonsense reasoning benchmark and the LLaMA-3.2-3B-Instruct model. In a certain range, i.e., from 10 to 100, FSLoRA achieves a fast convergence as $H$ increases.
    }
\label{fig:impact_H}
\end{figure}

\subsection{Dynamic Sketching Ratios}\label{appen:dynamic_sketching_ratio}

While our primary focus is on developing a heterogeneous federated LoRA method under a standard static setup, following prior works \citep{wang2024flora,cho2024heterogeneouslorafederatedfinetuning,bai2024federatedfinetuninglargelanguage}, the proposed FSLoRA algorithm can be naturally extended to dynamic, time-varying resource environments. The modification is straightforward: we allow the sparsity levels, corresponding to the sketching ratios of FSLoRA, of the matrices in the sketching set $\mathcal{S}_i$ in Algorithm~\ref{alo_flora} to become time-varying, while keeping the remaining steps unchanged. 

\begin{table}[h]
\centering
\caption{The performance of FSLoRA under static and dynamic sketching ratios. This is based on the commonsense reasoning benchmark and the LLaMA-3.2-3B-Instruct model. FSLoRA maintains comparable performance when moving from the static to the dynamic case.}
\label{tab:dynamic_ratios}
\begin{tabular}{lccccccccc}
\toprule
Ratios & ARC-c & ARC-e & BoolQ & HSwag & OBQA & PIQA & SIQA & Wino & Avg. \\
\midrule
Static  & 76.1 & 87.1 & 70.0 & 81.7 & 81.4 & 82.6 & 76.4 & 78.9 & 79.3 \\
\rowcolor{ours} Dynamic & 75.5 & 87.7 & 69.2 & 81.3 & 81.2 & 82.2 & 76.0 & 78.8 & 79.0 \\
\bottomrule
\end{tabular}
\end{table}

We empirically validate the effectiveness of FLoRA under this dynamic setting. In the simulation, we group clients into three capability levels \emph{low}, \emph{medium}, and \emph{high}, assigned sketching ratio ranges $[0.125, 0.25]$, $[0.25, 0.5]$, and $[0.5, 1.0]$, respectively, to balance local training latencies across groups. Within each range, the sketching ratios are allowed to vary dynamically. The results, reported in Table~\ref{tab:dynamic_ratios}, show that FSLoRA maintains comparable performance when moving from the static to the dynamic case, demonstrating its robustness under time-varying sketching ratios.
\subsection{Comparison with FFA-LoRA}

FFA-LoRA~\citep{sun2024improving} freezes the LoRA matrix $\mathbf{A}$ and tunes only matrix $\mathbf{B}$, which reduces training cost. However, FFA-LoRA requires all clients to use the same LoRA rank as the server, and therefore does not apply to our targeted heterogeneous-rank setting, where clients may use different local ranks. Hence, we compare the proposed FSLoRA with FFA-LoRA under the homogeneous setting based on Llama-3.2-3B. Since FFA-LoRA requires the client rank to be the same as the server rank, while FSLoRA allows different local and global ranks, we carefully configured the local/global rank combinations so that FSLoRA, FFA-LoRA, and FedLoRA (i.e., vanilla federated LoRA) have the same number of locally trainable parameters for a fair comparison.

\begin{table}[t]
\centering
\caption{Comparison with FFA-LoRA under the homogeneous-rank setting on Llama-3.2-3B}
\label{tab:performance_comparison_FFA_LoRA}
\begin{tabular}{lccccccccc}
\toprule
Method & ARC-c & ARC-e & BoolQ & HellaSwag & OBQA & PIQA & SIQA & WinoGrande & Avg. \\
\midrule
FFA-LoRA, $k=8; r=8$ & 68.17 & 84.47 & 64.07 & 64.20 & 69.20 & 78.51 & 67.40 & 63.30 & 69.92 \\
FedLoRA, $k=4; r=4$ & 74.66 & 87.33 & 67.57 & 77.53 & 77.26 & 81.50 & 75.49 & 75.93 & 77.16 \\
\rowcolor{ours} FSLoRA, $k=4; r=8$ & 75.00 & 87.75 & 68.90 & 77.70 & 77.60 & 82.32 & 75.69 & 76.16 & 77.64 \\
\bottomrule
\end{tabular}
\end{table}

As shown in Table~\ref{tab:performance_comparison_FFA_LoRA}, FSLoRA outperforms FFA-LoRA under the same local budget in the homogeneous setting. The potential reason is that freezing one LoRA factor limits the expressive power of the adaptation module. In contrast, FSLoRA randomly selects partial rows and columns for training in each round, so that all rows and columns are trained with approximately uniform exposure across clients over the course of training.

\subsection{Integration of Sketching and Top-k Compression}\label{appen:compression_integration} 
Building on the commonsense reasoning benchmark and the LLaMA-3.2-3B-Instruct model, we further explore the integration of two orthogonal techniques, sketching and top-k compression, to further reduce the uplink communication overhead of clients in FSLoRA. 

\begin{figure}[h]
    \centering
    \includegraphics[width=0.4\linewidth]{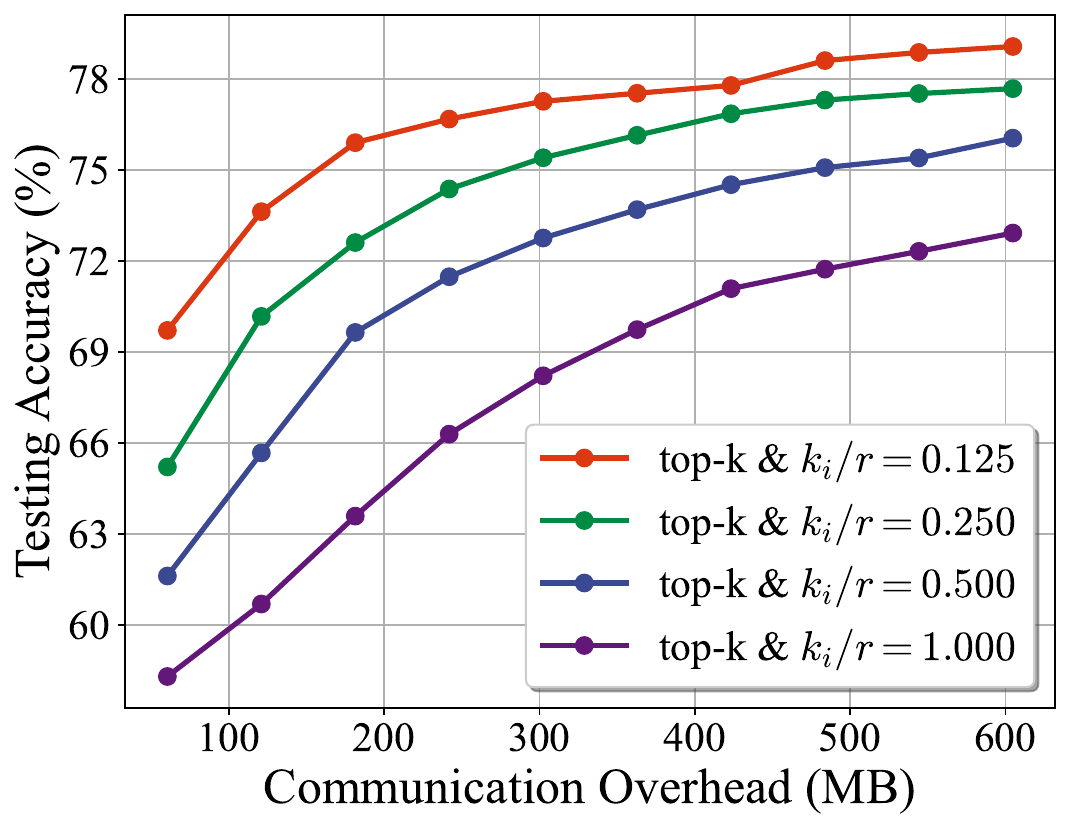}
    \caption{Testing accuracy versus communication overhead using float 32 precision. Lower sketching ratios achieve higher accuracy at the same communication cost, demonstrating that combining sketching with top-$k$ compression leads to more communication-efficient training.
    }
\label{fig:integration_compression_sketching}
\end{figure}

Specifically, with sketching, each client activates and updates submatrices of the global LoRA weights, 
$[\mathbf{b}_j]_{j \in \mathcal{I}_i}, [\mathbf{a}_j^\top]_{j \in \mathcal{I}_i}$, which are selected at the beginning of each round:
\begin{equation}
\mathbf{BS}_i\mathbf{A} = \sum_{j \in \mathcal{I}_i} \frac{r}{k_i} \mathbf{b}_j \mathbf{a}_j^\top 
= \frac{r}{k_i} [\mathbf{b}_j]_{j \in \mathcal{I}_i} [\mathbf{a}_j^\top]_{j \in \mathcal{I}_i},
\end{equation}
where $\mathbf{b}_j$ and $\mathbf{a}_j^\top$ denote the $j$-th column of module $\mathbf{B}$ and the $j$-th row of module $\mathbf{A}$, respectively. By limiting updates to submatrices $[\mathbf{b}_j]_{j \in \mathcal{I}_i}$ and $[\mathbf{a}_j^\top]_{j \in \mathcal{I}_i}$, 
FSLoRA reduces communication and computation. To further reduce communication cost, we can apply Top-$k$ compression to the uploading stage. For instance, instead of sending the full 
$\Delta [\mathbf{b}_j]_{j \in \mathcal{I}_i}$, each client transmits the compressed update 
$\operatorname{Top}_k(\Delta [\mathbf{b}_j]_{j \in \mathcal{I}_i})$.  
Sketching selects the update submatrix at the beginning of each round, 
while compression further reduces its transmission cost at the uploading stage. 
These two techniques operate at different stages and can jointly improve 
communication efficiency.

In our setup, the compression ratio is fixed at \(0.5\) for all methods, while the sketching ratio \(k_i/r\) varies over \(\{0.125, 0.25, 0.5, 1\}\). Notably, FSLoRA with sketching ratio \(k_i/r = 1\) corresponds to the vanilla Federated LoRA (i.e., without sketching).
Figure \ref{fig:integration_compression_sketching} plots testing accuracy versus communication overhead, where the x-axis represents the amount of data uploaded per client (in MB), assuming parameters are stored in float 32 precision. 
The results clearly show that integrating sketching with top-k compression further improves communication efficiency: methods with lower sketching ratios consistently achieve higher accuracy under the same communication budget, highlighting the potential of FSLoRA for scalable and communication-efficient collaborative LLM fine-tuning.

\section{LoRA structure over training}

To characterize the learned LoRA structure, we compare the stable-rank dynamics of the LoRA adapter under the proposed FSLoRA and representative baselines. Specifically, we analyze the stable rank of the server-side global LoRA update on Llama-3.2-3B under the same heavy-tail setting as in Table~\ref{tab:divice_het}. The stable rank is defined as
\begin{equation}
\mathrm{srank}(\Delta \mathbf{W}_t)
=
\frac{\|\Delta \mathbf{W}_t\|_F^2}{\|\Delta \mathbf{W}_t\|_2^2},
\qquad
\Delta \mathbf{W}_t = \mathbf{B}_t \mathbf{A}_t.
\end{equation}

For representative illustration, we report a middle layer (L15, \texttt{q\_proj} and \texttt{down\_proj}) with global rank $r=32$.

\begin{table}[t]
\centering
\caption{Stable-rank dynamics of the server-side global LoRA update at layer 15 of Llama-3.2-3B with global rank $r=32$.}
\label{tab:stable_rank_dynamics}
\resizebox{\linewidth}{!}{
\begin{tabular}{lcccccccccc}
\toprule
\multicolumn{11}{c}{\textbf{L15, \texttt{q\_proj}}} \\
\midrule
Rounds & 20 & 40 & 60 & 80 & 100 & 120 & 140 & 160 & 180 & 200 \\
\midrule
HeterLoRA & 1.158 & 1.115 & 1.097 & 1.098 & 1.102 & 1.105 & 1.107 & 1.112 & 1.121 & 1.124 \\
FlexLoRA  & 1.348 & 1.308 & 1.109 & 1.030 & 1.016 & 1.008 & 1.009 & 1.006 & 1.005 & 1.002 \\
\rowcolor{ours} FSLoRA    & 1.493 & 1.694 & 1.781 & 1.870 & 1.958 & 2.047 & 2.102 & 2.124 & 2.131 & 2.137 \\
\midrule
\multicolumn{11}{c}{\textbf{L15, \texttt{down\_proj}}} \\
\midrule
Rounds & 20 & 40 & 60 & 80 & 100 & 120 & 140 & 160 & 180 & 200 \\
\midrule
HeterLoRA & 1.402 & 1.327 & 1.423 & 1.532 & 1.665 & 1.790 & 1.911 & 2.009 & 2.105 & 2.183 \\
FlexLoRA  & 1.243 & 1.167 & 1.093 & 1.028 & 1.012 & 1.006 & 1.007 & 1.006 & 1.005 & 1.002 \\
\rowcolor{ours} FSLoRA    & 1.608 & 1.859 & 2.032 & 2.153 & 2.272 & 2.421 & 2.493 & 2.517 & 2.531 & 2.536 \\
\bottomrule
\end{tabular}
}
\end{table}

As we can see from Table~\ref{tab:stable_rank_dynamics}, the stable-rank trajectories differ markedly across methods. FlexLoRA quickly collapses toward rank $1$, while FSLoRA maintains a richer higher-rank adaptation structure throughout training.

\section{Implementation Details}
\label{appen:implementation_details}
\subsection{Details on Hyperparameters}
Unless stated otherwise, the hyperparameters used in this work are as follows.
\begin{table}[H]
\caption{
The hyperparameters for RoBERTa \& GLUE and LLaMA-3.2-3B-Instruct \& commonsense reasoning benchmarks.
}
\label{tab:hyperparameters}

\centering
\resizebox{\linewidth}{!}{
\begin{tabular}{l|cc}
    \toprule
    Hyperparameter & RoBERTa \& GLUE & LLaMA-3.2-3B-Instruct \& commonsense reasoning \\
   \midrule
   Dirichlet parameter & 0.1 & 0.1 \\
   \midrule
    Batch size & 16 & 16 \\
    \midrule
   LoRA dropout rate & 0.1 & 0.1 \\ 
    \midrule
    Learning rate, $\gamma$ & 5e-4 & 3e-4 \\
    \midrule
    Communication round, $T$ & 200 & 200 \\
    \midrule
    Local iteration number, $H$& 50 & 20 \\
    \midrule
    Target module  & [``query'', ``value'', ``classification head''] & [``q\_proj'', ``k\_proj'', ``v\_proj'', ``up\_proj'', ``down\_proj''] \\
    \bottomrule
\end{tabular}
}
\end{table}


\subsection{Details on Datasets}

\subsubsection{GLUE Benchmark}
GLUE is a widely recognized benchmark designed to assess the natural language understanding capabilities of language models \citep{wang2018glue}.
\begin{itemize}[topsep=0pt,itemsep=4pt,parsep=0pt, partopsep=0pt, leftmargin=*]
    \item \textbf{CoLA} focuses on whether a given sentence is acceptable according to linguistic rules. It evaluates a model's ability to recognize well-formed sentences.
    \begin{itemize}
        \item[$\rhd$] Input: A single sentence.
        \item[\ding{73}] Output: A label indicating whether the sentence is acceptable or unacceptable.
    \end{itemize}
    \item \textbf{SST-2} is designed for sentiment classification on movie reviews or short texts. It tests whether a model can correctly identify positive or negative sentiment in a given sentence.
    \begin{itemize}
        \item[$\rhd$] Input: A single sentence. 
        \item[\ding{73}] Output: A label indicating positive or negative sentiment.
    \end{itemize}
    \item \textbf{MRPC} checks if two sentences are paraphrases of each other, i.e., if they mean the same thing.
    \begin{itemize}
        \item[$\rhd$] Input: Two sentences (`sentence1' and `sentence2'). 
        \item[\ding{73}] Output: A label indicating either equivalent or not equivalent. 
    \end{itemize}
    \item \textbf{QQP} tests a model's ability to determine if two questions ask the same thing.
    \begin{itemize}
        \item[$\rhd$] Input: Two questions. 
        \item[\ding{73}] Output: A label indicating duplicate or not duplicate.
    \end{itemize}
    \item \textbf{MNLI} tests whether a given hypothesis is entailed, contradicted, or neutral with respect to a premise. 
    \begin{itemize}
        \item[$\rhd$] Input: A premise (first sentence) and a hypothesis (second sentence). 
        \item[\ding{73}] Output: A label indicating entailment, contradiction, or neutral.
    \end{itemize}
    \item \textbf{QNLI} aims to determine if a context sentence correctly answers a given question. 
    \begin{itemize}
        \item[$\rhd$] Input: A question and a sentence.  
        \item[\ding{73}] Output: A label indicating the sentence answers the question or it does not.
    \end{itemize}
    \item \textbf{RTE} provides pairs of sentences to see if one implies the other. 
    \begin{itemize}
        \item[$\rhd$] Input: Two sentences (`sentence1' and `sentence2')
        \item[\ding{73}] Output: A label indicating whether the meaning of one sentence is entailed from the other one.
    \end{itemize}
\end{itemize}


\begin{table}[H]
\caption{The prompt template of the commonsense reasoning datasets \citep{hu2023llm}.}
\label{tab:input_template}

\begin{tabularx}{\textwidth}{lX}
\toprule
Dataset & Input Template \\
\midrule

ARC-c/e & 
\begin{minipage}{\linewidth}
\begin{tcolorbox}[colback=gray!10,colframe=black!75!black, left=1mm, right=1mm, top=1mm, bottom=1mm]
Please choose the correct answer to the question: [QUESTION] 

Answer1: [ANSWER\_1]

Answer2: [ANSWER\_2]

Answer3: [ANSWER\_3]

Answer4: [ANSWER\_4]

Answer format: answer1/answer2/answer3/answer4

the correct answer is [ANSWER]
\end{tcolorbox}
\end{minipage} \\
\midrule

BoolQ & 
\begin{minipage}{\linewidth}
\begin{tcolorbox}[colback=gray!10,colframe=black!75!black, left=1mm, right=1mm, top=1mm, bottom=1mm]
Please answer the following question with true or false, question: [QUESTION]

Answer format: true/false

the correct answer is [ANSWER]
\end{tcolorbox}
\end{minipage} \\
\midrule

HellaSwag & 
\begin{minipage}{\linewidth}
\begin{tcolorbox}[colback=gray!10,colframe=black!75!black, left=1mm, right=1mm, top=1mm, bottom=1mm]
Please choose the correct ending to complete the given sentence: [ACTIVITY\_LABEL]: [CONTEXT]

Ending1: [ENDING\_1] 

Ending2: [ENDING\_2] 

Ending3: [ENDING\_3] 

Ending4: [ENDING\_4] 

Answer format: ending1/ending2/ending3/ending4 

the correct answer is [ANSWER]
\end{tcolorbox}
\end{minipage} \\
\midrule

OBQA & 
\begin{minipage}{\linewidth}
\begin{tcolorbox}[colback=gray!10,colframe=black!75!black, left=1mm, right=1mm, top=1mm, bottom=1mm]
Please choose the correct answer to the question: [QUESTION] 

Answer1: [ANSWER\_1]

Answer2: [ANSWER\_2]

Answer3: [ANSWER\_3]

Answer4: [ANSWER\_4]

Answer format: answer1/answer2/answer3/answer4

the correct answer is [ANSWER]
\end{tcolorbox}
\end{minipage} \\
\midrule

PIQA & 
\begin{minipage}{\linewidth}
\begin{tcolorbox}[colback=gray!10,colframe=black!75!black, left=1mm, right=1mm, top=1mm, bottom=1mm]
Please choose the correct solution to the question: [QUESTION]

Solution1: [SOLUTION\_1]

Solution2: [SOLUTION\_2]

Answer format: solution1/solution2 

the correct answer is [ANSWER]
\end{tcolorbox}
\end{minipage} \\
\midrule

SIQA & 
\begin{minipage}{\linewidth}
\begin{tcolorbox}[colback=gray!10,colframe=black!75!black, left=1mm, right=1mm, top=1mm, bottom=1mm]
Please choose the correct answer to the question: [QUESTION]

Answer1: [ANSWER\_1]

Answer2: [ANSWER\_2]

Answer3: [ANSWER\_3]

Answer format: answer1/answer2/answer3 

the correct answer is [ANSWER]
\end{tcolorbox}
\end{minipage} \\
\midrule

WinoGrande & 
\begin{minipage}{\linewidth}
\begin{tcolorbox}[colback=gray!10,colframe=black!75!black, left=1mm, right=1mm, top=1mm, bottom=1mm]
Please choose the correct answer to fill in the blank to complete the given sentence: [SENTENCE]

Option1: [OPTION\_1]

Option2: [OPTION\_2]

the correct answer is [ANSWER]
\end{tcolorbox}
\end{minipage} \\
\bottomrule

\end{tabularx}
\end{table}

\subsubsection{Commonsense Reasoning Benchmark}
The training set of the commonsense reasoning benchmark is a mixture of multiple datasets including about 170K training samples from ARC-c/e \citep{clark2018think}, BoolQ \citep{clark2019boolq}, HellaSwag \citep{zellers2019hellaswag}, OBQA \citep{mihaylov2018can}, PIQA \citep{bisk2020piqa}, SIQA \citep{sap2019socialiqa}, and WinoGrande \citep{sakaguchi2021winogrande} datasets. 
\begin{itemize}[leftmargin=*]
    \item \textbf{ARC-c/e} contains the challenge and easy question set from the ARC dataset of genuine grade-school level, multiple-choice science questions.
    \item \textbf{BoolQ} is a question-answering dataset with yes/no questions derived from natural, real-world scenarios. 
    \item \textbf{HellaSwag} includes questions for commonsense natural language inference, where a context and multiple endings are given, requiring the most coherent ending to be selected.
    \item \textbf{OBQA} involves multi-step problem-solving that combines commonsense knowledge, reasoning, and comprehension of accompanying textual information.
    \item \textbf{PIQA} focuses on questions requiring physical commonsense to solve. Each question offers two answer choices. 
    \item \textbf{SIQA} targets reasoning about human actions and their social implication. 
    \item \textbf{WinoGrande} is designed as a binary-choice fill-in-the-blank task, this dataset evaluates the ability to resolve ambiguous sentences through commonsense reasoning.
\end{itemize}
The input template, i.e., prompt format for these datasets is detailed in Table \ref{tab:input_template}.

\section{Proof of the Theoretical Results}
\subsection{Preliminaries}
Before presenting the proof of the main results, we first introduce some preliminary facts that will be used later. Throughout this work, $\|\cdot\|$ denotes the Frobenius norm when applied to a matrix and the $\ell_2$ norm when applied to a vector.

\begin{lemma}\label{pre:lemma:variable:0}
    Suppose a sequence of independent random matrices $\left\{{\bf P}_i\right\}_{i=1}^N$ satisfy $\mathbb{E}[{\bf P}_i] = {\bf 0}, \forall i.$ Then, 
    \begin{equation}
        \mathbb{E}\left\|\frac{1}{N} \sum_{i=1}^N {\bf P}_i\right\|^2 = \frac{1}{N^2} \sum_{i=1}^N \mathbb{E} \left\|{\bf P}_i\right\|^2.
    \end{equation}
\end{lemma}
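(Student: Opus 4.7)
The plan is to expand the Frobenius norm squared via its inner-product representation and exploit independence together with the zero-mean hypothesis to kill all cross terms. Recall that the Frobenius inner product $\langle {\bf A}, {\bf B}\rangle = \operatorname{tr}({\bf A}^\top {\bf B})$ satisfies $\|{\bf A}\|^2 = \langle {\bf A}, {\bf A}\rangle$ and is bilinear, so we can rewrite
\begin{equation*}
\mathbb{E}\left\|\frac{1}{N}\sum_{i=1}^N {\bf P}_i\right\|^2
= \frac{1}{N^2}\, \mathbb{E}\Big\langle \sum_{i=1}^N {\bf P}_i,\ \sum_{j=1}^N {\bf P}_j \Big\rangle
= \frac{1}{N^2}\sum_{i=1}^N\sum_{j=1}^N \mathbb{E}\langle {\bf P}_i, {\bf P}_j \rangle.
\end{equation*}

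Next, I would split the double sum into the diagonal terms ($i=j$) and the off-diagonal terms ($i\neq j$). For $i\neq j$, independence of ${\bf P}_i$ and ${\bf P}_j$ together with linearity of expectation yields $\mathbb{E}\langle {\bf P}_i, {\bf P}_j\rangle = \langle \mathbb{E}{\bf P}_i, \mathbb{E}{\bf P}_j\rangle = 0$ since both factors have zero mean. This eliminates all cross terms, leaving only $\sum_i \mathbb{E}\langle {\bf P}_i, {\bf P}_i\rangle = \sum_i \mathbb{E}\|{\bf P}_i\|^2$, which gives the claim.

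There is no real obstacle here; the only subtlety is justifying the exchange $\mathbb{E}\langle {\bf P}_i,{\bf P}_j\rangle = \langle\mathbb{E}{\bf P}_i,\mathbb{E}{\bf P}_j\rangle$ for independent matrices. This is immediate from the entrywise formula $\langle {\bf P}_i,{\bf P}_j\rangle = \sum_{k,\ell} ({\bf P}_i)_{k\ell}({\bf P}_j)_{k\ell}$ and the fact that independence of matrices implies independence of all pairs of their entries, so $\mathbb{E}[({\bf P}_i)_{k\ell}({\bf P}_j)_{k\ell}] = \mathbb{E}[({\bf P}_i)_{k\ell}]\,\mathbb{E}[({\bf P}_j)_{k\ell}] = 0$. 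Assuming the second moments $\mathbb{E}\|{\bf P}_i\|^2$ are finite (which is tacitly required for the statement to be meaningful), Fubini's theorem justifies all interchanges of sum and expectation.
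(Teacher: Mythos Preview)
Your proof is correct and is the standard argument; the paper itself states this lemma as a preliminary fact without proof, so there is nothing to compare against. Your expansion via the Frobenius inner product, elimination of cross terms by independence and zero mean, and the entrywise justification are exactly what is needed.
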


\begin{lemma} {\normalfont \citep[Lemma 2]{jianyu}}\label{pre:lemma:jianyu}
Suppose a sequence of random matrices $\left\{{\bf P}_i\right\}_{i=1}^N$ satisfy $\mathbb{E} \left[{\bf P}_i \mid {\bf P}_{i-1}, \!{\bf P}_{i-2}, \ldots, {\bf P}_1 \right]={\bf 0}, \forall i.$ Then,
\begin{equation}
\mathbb{E}\left[\left\|\sum_{i=1}^N {\bf P}_i\right\|^2\right]=\sum_{i=1}^N \mathbb{E}\left[\left\|{\bf P}_i\right\|^2\right].
\end{equation}
\end{lemma}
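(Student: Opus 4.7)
The plan is to treat $\{{\bf P}_i\}_{i=1}^N$ as a matrix-valued martingale difference sequence with respect to the filtration $\mathcal{F}_i \eqdef \sigma({\bf P}_1,\ldots,{\bf P}_i)$ and to exploit the orthogonality of martingale increments under the Frobenius inner product $\langle {\bf A},{\bf B}\rangle \eqdef \operatorname{tr}({\bf A}^{\top}{\bf B})$, which induces the Frobenius norm $\|\cdot\|$. The whole identity will then follow once I show that every cross term in the expansion of $\|\sum_i {\bf P}_i\|^2$ vanishes in expectation.

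First, I would expand
\[
\left\| \sum_{i=1}^N {\bf P}_i \right\|^2
= \sum_{i=1}^N \|{\bf P}_i\|^2 + 2 \sum_{1 \le i < j \le N} \langle {\bf P}_i, {\bf P}_j \rangle,
\]
which is the Frobenius analogue of the standard vector identity. Next, for any pair $i < j$, I would condition on $\mathcal{F}_{j-1}$ and use that ${\bf P}_i$ is $\mathcal{F}_{j-1}$-measurable; by linearity of the trace,
\[
\mathbb{E}\bigl[\langle {\bf P}_i,{\bf P}_j \rangle \,\big|\, \mathcal{F}_{j-1}\bigr]
= \bigl\langle {\bf P}_i,\, \mathbb{E}[{\bf P}_j \mid \mathcal{F}_{j-1}] \bigr\rangle = 0,
\]
where the last equality uses the martingale-difference hypothesis. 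The tower property then gives $\mathbb{E}\langle {\bf P}_i,{\bf P}_j\rangle = 0$, and plugging back into the expansion yields the claim.

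An equivalent and perhaps cleaner route I would prefer to write out is a short induction on $N$. The base $N=1$ is trivial. For the inductive step, set $S_{N-1} = \sum_{i=1}^{N-1} {\bf P}_i$ and expand
\[
\|S_N\|^2 = \|S_{N-1}\|^2 + 2\langle S_{N-1},{\bf P}_N\rangle + \|{\bf P}_N\|^2.
\]
Conditioning on $\mathcal{F}_{N-1}$ kills the cross term, because $S_{N-1}$ is $\mathcal{F}_{N-1}$-measurable and $\mathbb{E}[{\bf P}_N\mid\mathcal{F}_{N-1}] = {\bf 0}$; the inductive hypothesis then handles $\mathbb{E}\|S_{N-1}\|^2$, completing the step.

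There is essentially no serious obstacle here, since the statement is the well-known matrix Pythagorean identity for martingale differences; the only point that deserves a sentence of care is the interchange $\mathbb{E}[\operatorname{tr}({\bf P}_i^{\top}{\bf P}_j)\mid\mathcal{F}_{j-1}] = \operatorname{tr}({\bf P}_i^{\top}\mathbb{E}[{\bf P}_j\mid\mathcal{F}_{j-1}])$, which is valid because $\operatorname{tr}$ is linear and ${\bf P}_i$ is $\mathcal{F}_{j-1}$-measurable (together with the implicit integrability assumption $\mathbb{E}\|{\bf P}_i\|^2 < \infty$ needed for Cauchy--Schwarz on the cross term). Since the lemma is attributed to \citet{jianyu}, I would keep the presentation brief and defer to that reference for any measure-theoretic fine print.
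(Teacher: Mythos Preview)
Your proof is correct; this is the standard martingale-difference orthogonality argument, and either the direct cross-term expansion or the induction you outline works cleanly. Note that the paper does not actually prove this lemma itself but simply cites it from \citet{jianyu}, so there is no paper proof to compare against beyond observing that your argument is the expected one.
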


\begin{lemma}\label{lemma:tune_stepsize}
{\normalfont \citep[Lemma 17]{koloskova2020unified}}\label{converging_iterates} For any $a_0 \geq 0, b\geq 0, c\geq 0, d >0$, there exist a constant $\eta \leq \frac{1}{d}$ such that
\begin{align}
\frac{a_0}{T \eta } + b \eta + c \eta^2 \leq 2 \left( \frac{a_0 b}{T} \right)^{\frac{1}{2}} + 2 c^{\frac{1}{3}} \left(\frac{a_0}{T}\right)^{\frac{2}{3}} + \frac{d a_0}{T}.
\end{align}
\end{lemma}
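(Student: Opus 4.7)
The plan is to establish the inequality by choosing the stepsize $\eta$ as the minimum of three carefully constructed candidates, each of which balances a pair of the three terms on the left-hand side (or enforces the constraint $\eta \leq 1/d$). Concretely, I would introduce
\[
\eta_1 \eqdef \sqrt{\frac{a_0}{T b}}, \qquad \eta_2 \eqdef \left(\frac{a_0}{T c}\right)^{1/3}, \qquad \eta_3 \eqdef \frac{1}{d},
\]
and set $\eta \eqdef \min\{\eta_1,\eta_2,\eta_3\}$, with the usual convention that a candidate is dropped (set to $+\infty$) when the associated constant $b$ or $c$ vanishes. By construction $\eta \leq 1/d$, so the stepsize lies in the admissible range.

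Next, I would bound the three terms on the left-hand side separately. Since $\eta \leq \eta_1$, the second term obeys $b\eta \leq b\eta_1 = \sqrt{a_0 b/T}$; since $\eta \leq \eta_2$, the third term obeys $c\eta^2 \leq c\eta_2^2 = c^{1/3}(a_0/T)^{2/3}$. These two bounds are immediate and produce the ``balanced'' half of the target inequality.

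The main obstacle is the first term $a_0/(T\eta)$, because $\eta$ appears in the denominator and $\eta \leq \eta_j$ goes the wrong way. The key device is the inequality
\[
\frac{1}{\eta} = \max\!\left\{\tfrac{1}{\eta_1},\tfrac{1}{\eta_2},\tfrac{1}{\eta_3}\right\} \leq \frac{1}{\eta_1} + \frac{1}{\eta_2} + \frac{1}{\eta_3},
\]
which turns the minimum of candidates into a sum of reciprocals. Multiplying by $a_0/T$ and substituting the definitions of $\eta_1,\eta_2,\eta_3$ yields
\[
\frac{a_0}{T\eta} \leq \sqrt{\frac{a_0 b}{T}} + c^{1/3}\!\left(\frac{a_0}{T}\right)^{2/3} + \frac{d a_0}{T}.
\]

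Finally, I would sum the three term-wise bounds. The contributions from $a_0/(T\eta)$ and $b\eta$ combine to $2\sqrt{a_0 b/T}$, the contributions from $a_0/(T\eta)$ and $c\eta^2$ combine to $2c^{1/3}(a_0/T)^{2/3}$, and the remaining $d a_0/T$ from the $\eta_3$-branch provides the last term, matching the stated bound exactly. A brief comment should handle the degenerate cases $a_0 = 0$, $b=0$, or $c=0$, where the corresponding candidate is removed from the minimum and the associated term on the right vanishes, so the inequality remains valid.
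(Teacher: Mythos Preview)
Your proposal is correct and is precisely the standard argument for this stepsize-tuning lemma: choose $\eta=\min\{\eta_1,\eta_2,\eta_3\}$ with the balancing candidates you list, bound $b\eta$ and $c\eta^2$ via $\eta\le\eta_1$ and $\eta\le\eta_2$, and control $a_0/(T\eta)$ through $1/\eta\le 1/\eta_1+1/\eta_2+1/\eta_3$. The paper does not supply its own proof of this lemma---it simply invokes \cite[Lemma~17]{koloskova2020unified}---so there is nothing to compare against beyond noting that your argument is exactly the one used in that reference.
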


\begin{lemma}[Random sketching bounds]\label{lamma:sketching_bound}
Let $\mathbf{S}$ be a random diagonal sketching matrix of the form
\begin{align}
\mathbf{S}
\;=\;
\frac{r}{k}\,\sum_{j \in \mathcal{I}} \mathbf{e}_j\,\mathbf{e}_j^{\top},
\end{align}
where \(\mathbf{e}_1, \dots, \mathbf{e}_r \in \mathbb{R}^r\) are standard unit basis vectors and $\mathcal{I}\subseteq \{1,\dots,r\}$ is chosen uniformly at random with $|\mathcal{I}|=k$. Then for any matrix $\mathbf{X}$, we have
\begin{equation}\label{diag:worstcase_bound}
\|\mathbf{X}\,\mathbf{S}\|^{2}
~\le~
\frac{r^{2}}{k^{2}}
\,\|\mathbf{X}\|^{2},
\end{equation}
and in expectation we have
\begin{equation}\label{diag:expected_bound}
\mathbb{E}_{\mathbf{S}}\!\Bigl[\|\mathbf{X}\,\mathbf{S}\|^{2}\Bigr]
~\le~
\frac{r}{k}
\,\|\mathbf{X}\|^{2}.
\end{equation}
\end{lemma}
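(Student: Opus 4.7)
The plan is to exploit the fact that right-multiplication by the diagonal matrix $\mathbf{S}$ simply selects the columns of $\mathbf{X}$ indexed by $\mathcal{I}$ and rescales them by $r/k$. Writing $\mathbf{X}=[\mathbf{x}_1,\dots,\mathbf{x}_r]$ in terms of its columns, I would first establish the identity
\[
\|\mathbf{X}\mathbf{S}\|^{2} \;=\; \frac{r^{2}}{k^{2}}\sum_{j\in\mathcal{I}}\|\mathbf{x}_{j}\|^{2},
\]
which follows directly from the definition of the Frobenius norm and the sparse diagonal structure of $\mathbf{S}$.

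From this identity the worst-case bound \eqref{diag:worstcase_bound} is immediate: since $\sum_{j\in\mathcal{I}}\|\mathbf{x}_j\|^2 \le \sum_{j=1}^{r}\|\mathbf{x}_j\|^2 = \|\mathbf{X}\|^{2}$, we obtain $\|\mathbf{X}\mathbf{S}\|^{2} \le (r^{2}/k^{2})\|\mathbf{X}\|^{2}$.

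For the expectation bound \eqref{diag:expected_bound}, the key observation is that because $\mathcal{I}$ is drawn uniformly from the $\binom{r}{k}$ subsets of $[r]$ of cardinality $k$, each index $j\in[r]$ belongs to $\mathcal{I}$ with probability exactly $k/r$. By linearity of expectation,
\[
\mathbb{E}_{\mathbf{S}}\!\Bigl[\sum_{j\in\mathcal{I}}\|\mathbf{x}_{j}\|^{2}\Bigr]
\;=\;\sum_{j=1}^{r}\Pr[j\in\mathcal{I}]\,\|\mathbf{x}_{j}\|^{2}
\;=\;\frac{k}{r}\,\|\mathbf{X}\|^{2}.
\]
Multiplying by $r^{2}/k^{2}$ yields $\mathbb{E}_{\mathbf{S}}[\|\mathbf{X}\mathbf{S}\|^{2}] = (r/k)\|\mathbf{X}\|^{2}$.

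No step here is really an obstacle; the only subtlety is being precise about the sampling model so that $\Pr[j\in\mathcal{I}]=k/r$ is justified without appealing to independence of the indicators (which fails under sampling without replacement, but is not needed since only the marginal probabilities enter the calculation). The proof is therefore short and self-contained once the columnwise decomposition of $\|\mathbf{X}\mathbf{S}\|^{2}$ is written out.
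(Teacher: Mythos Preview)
Your proof is correct. The approach differs slightly from the paper's in the direction of decomposition: you decompose $\|\mathbf{X}\mathbf{S}\|^{2}$ by the \emph{columns} of $\mathbf{X}$, obtaining the exact identity $\|\mathbf{X}\mathbf{S}\|^{2}=\tfrac{r^{2}}{k^{2}}\sum_{j\in\mathcal{I}}\|\mathbf{x}_{j}\|^{2}$ and then applying marginal inclusion probabilities; the paper instead decomposes by the \emph{rows} of $\mathbf{X}$ and routes through the matrix inequalities $\mathbf{S}\mathbf{S}^{\top}\preceq\tfrac{r^{2}}{k^{2}}\mathbf{I}$ and $\mathbb{E}[\mathbf{S}^{2}]=\tfrac{r}{k}\mathbf{I}$, applying these to each row as a quadratic form. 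Your columnwise argument is arguably more elementary and even yields \eqref{diag:expected_bound} as an equality rather than an inequality; the paper's spectral formulation has the advantage of making the constants $\tfrac{r^{2}}{k^{2}}$ and $\tfrac{r}{k}$ visibly arise from the operator norm and second moment of $\mathbf{S}$, which generalizes more readily to non-diagonal sketches. Both routes are short and equivalent in substance.
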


\begin{proof}
Since $\mathbf{S}$ is diagonal with exactly $k$ diagonal entries equal to $\tfrac{r}{k}$ and the rest zero, its largest eigenvalue is $\tfrac{r}{k}$.  Squaring gives
\begin{align}
\mathbf{S}\,\mathbf{S}^{\!\top}
~=~
\mathbf{S}^{2}
~\preceq~
\frac{r^{2}}{k^{2}}
\,\mathbf{I},
\end{align}
Equivalently,
\begin{align}
\mathbf{x}^{\top}
\bigl(\mathbf{S}\,\mathbf{S}^{\!\top}\bigr)
\mathbf{x}
~\le~
\frac{r^{2}}{k^{2}}
\,\|\mathbf{x}\|^{2},
 \forall \mathbf{x}.
\end{align}
Setting $\mathbf{x} = \mathbf{x}_j$ to be the $j$-th column of $\mathbf{X}$ and summing over $j$ implies
\begin{align}
\|\mathbf{X}\,\mathbf{S}\|^{2}
~
=\;
\sum_{j}
\,\|\mathbf{S}^{\!\top}\,\mathbf{x}_j\|^{2}
~
=\;
\sum_{j}
\,\mathbf{x}_j^{\top}
\,(\mathbf{S}\,\mathbf{S}^{\!\top})
\,\mathbf{x}_j
~\le~
\frac{r^{2}}{k^{2}}
\,\sum_{j}
\,\|\mathbf{x}_j\|^{2}
~
=\;
\frac{r^{2}}{k^{2}}
\,\|\mathbf{X}\|^{2},
\end{align}
which proves~\eqref{diag:worstcase_bound}.

For the expected bound~\eqref{diag:expected_bound}, note that each diagonal index $j\in\{1,\dots,r\}$ is included in $\mathcal{I}$ with probability $\tfrac{k}{r}$.  Hence the expectation of $\mathbf{S}^{2}$ satisfies
\begin{align}
\mathbb{E}_{\mathbf{S}}\bigl[\mathbf{S}^{2}\bigr]
~
=\;
\frac{r^{2}}{k^{2}}
\,\mathbb{E}\Bigl[\sum_{j\in \mathcal{I}} \mathbf{e}_j\,\mathbf{e}_j^{\top}\Bigr]
~
=\;
\frac{r^{2}}{k^{2}}
\,
\frac{k}{r}\,\mathbf{I}
~
=\;
\frac{r}{k}\,\mathbf{I}.
\end{align}
Thus for any vector $\mathbf{x}$,
\begin{align}
\mathbb{E}_{\mathbf{S}}
\!\Bigl[\|\mathbf{S}^{\!\top}\mathbf{x}\|^{2}\Bigr]
~
=\;
\mathbb{E}_{\mathbf{S}}
\!\Bigl[\mathbf{x}^{\top}\,\mathbf{S}\,\mathbf{S}^{\!\top}\,\mathbf{x}\Bigr]
~
=\;
\mathbf{x}^{\top}
\Bigl(\mathbb{E}[\mathbf{S}^{2}]\Bigr)
\mathbf{x}
~
=\;
\frac{r}{k}\,\|\mathbf{x}\|^{2}.
\end{align}
Summing over columns of $\mathbf{X}$ again establishes
\begin{align}
\mathbb{E}_{\mathbf{S}}
\!\bigl[\|\mathbf{X}\,\mathbf{S}\|^{2}\bigr]
~
=\;
\sum_{j}
\,\mathbb{E}_{\mathbf{S}}
\!\bigl[\|\mathbf{S}^{\!\top}\mathbf{x}_j\|^{2}\bigr]
~
=\;
\sum_{j}
\,\mathbf{x}_j^{\top}
\Bigl(\mathbb{E}[\mathbf{S}^{2}]\Bigr)
\mathbf{x}_j
~
=\;
\frac{r}{k}
\,\|\mathbf{X}\|^{2}.
\end{align}
This completes the proof of Lemma \ref{lamma:sketching_bound}.
\end{proof}

\subsection{Proof of Lemma \ref{compute_grad}}\label{proof_compute_grad}
From the chain rule for matrix calculus, we know that: 
    \begin{align}
    \nabla_{\bf Y} g({\bf X}{\bf Y}) ={\bf X}^{\top} \nabla g({\bf X}{\bf Y}), ~\nabla_{\bf X} g({\bf X}{\bf Y}) =  \nabla g({\bf X} {\bf Y}){\bf Y}^{\top}, 
    \end{align}
    where \(\nabla g({\bf X} {\bf Y})\) denotes the gradient of \(g\) to \({\bf X} {\bf Y}\). Applying this to \(\ell({\bf W}_0 + {\bf B} {\bf S} {\bf A}, \xi)\), 
    we proceed as follows: \\
    To compute the gradient with respect to \(\mathbf{B}\), set \({\bf X} = \mathbf{B}\) and \({\bf Y} = \mathbf{S}\mathbf{A}\):
   \begin{align}
   \nabla_{\mathbf{B}} \ell({\bf W}_0 + {\bf BSA}, \xi) = \nabla \ell({\bf W}_0 + {\bf BSA}, \xi) (\mathbf{S}\mathbf{A})^\top.
   \end{align}
   Similarly, to compute the gradient with respect to \(\mathbf{A}\), set \({\bf X} = \mathbf{B}\mathbf{S}\) and \({\bf Y} = \mathbf{A}\):
   \begin{align}
   \nabla_{\mathbf{A}} \ell({\bf W}_0 + {\bf BSA}, \xi) = \mathbf{S}^\top  \mathbf{B}^\top  \nabla \ell({\bf W}_0 + {\bf BSA}, \xi).
   \end{align}

\subsection{Proof of Theorem \ref{convergence_fed_slora_condense}}\label{proof_theorem}
The proof of Theorem \ref{convergence_fed_slora_condense} relies on the following proposition. 
\begin{proposition}\label{smooth_derivation}
    Under Assumption \ref{smoothness_assumption}, \(\tilde{f}_{i} ({\bf X}; {\bf S}) = f_i({\bf B} {\bf S}, {\bf A}),~{\bf S} \in \mathcal{S}_i \), \(f_i^{\mathcal{S}}({\bf X}) = \mathbb{E}_{{\bf S} \sim \mathcal{S}_i} [\tilde{f}_{i} ({\bf X}; {\bf S}) ]\), and \(f^{\mathcal{S}} ({\bf X}) = \frac{1}{N} \sum_{i=1}^N f_i^{\mathcal{S}}(\mathbf{X})\) are smooth with parameters \(L \frac{r^2}{k_i^2}\), \(L \frac{r}{k_i}\), and \(\left( \frac{1}{N} \sum_{i=1}^N \frac{r}{k_i} \right) L \), respectively. 
\end{proposition}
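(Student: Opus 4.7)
The plan is to prove all three smoothness claims via the descent-lemma (quadratic upper bound) form of smoothness, which follows from Assumption \ref{smoothness_assumption} and is the version actually invoked downstream in the convergence analysis. The starting point is the standard consequence of $L$-smoothness of $f_i$: for any $(\mathbf{B},\mathbf{A}),(\mathbf{B}',\mathbf{A}')$,
\begin{equation*}
f_i(\mathbf{B},\mathbf{A}) \le f_i(\mathbf{B}',\mathbf{A}') + \langle \nabla f_i(\mathbf{B}',\mathbf{A}'), (\mathbf{B}-\mathbf{B}',\mathbf{A}-\mathbf{A}')\rangle + \frac{L}{2}\bigl(\|\mathbf{B}-\mathbf{B}'\|^2+\|\mathbf{A}-\mathbf{A}'\|^2\bigr).
\end{equation*}
I would substitute $(\mathbf{B},\mathbf{A})\!\to\!(\mathbf{BS},\mathbf{A})$ and $(\mathbf{B}',\mathbf{A}')\!\to\!(\mathbf{B}'\mathbf{S},\mathbf{A}')$, then use the chain-rule identity $\langle \nabla_{\tilde{\mathbf{B}}} f_i(\mathbf{B}'\mathbf{S},\mathbf{A}'), (\mathbf{B}-\mathbf{B}')\mathbf{S}\rangle = \langle \nabla_{\tilde{\mathbf{B}}} f_i(\mathbf{B}'\mathbf{S},\mathbf{A}')\mathbf{S}^{\top}, \mathbf{B}-\mathbf{B}'\rangle$ from Lemma \ref{compute_grad} to recognize the resulting linear term as $\langle \nabla \tilde{f}_i(\mathbf{Y};\mathbf{S}), \mathbf{X}-\mathbf{Y}\rangle$. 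The remaining task is then to control the quadratic term $\|(\mathbf{B}-\mathbf{B}')\mathbf{S}\|^2$ using Lemma \ref{lamma:sketching_bound}.

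For the first claim I apply the deterministic bound \eqref{diag:worstcase_bound} at each fixed $\mathbf{S}$ to obtain $\|(\mathbf{B}-\mathbf{B}')\mathbf{S}\|^2 \le (r^2/k_i^2)\|\mathbf{B}-\mathbf{B}'\|^2$; since $r/k_i\ge 1$, this coefficient also dominates the $\mathbf{A}$-coordinate, giving the smoothness constant $Lr^2/k_i^2$ for $\tilde f_i(\cdot;\mathbf{S})$. For the second claim I take expectation over $\mathbf{S}\sim\mathcal{S}_i$: by definition, the function value and gradient terms pass through expectation to yield $f_i^{\mathcal{S}}$ and $\nabla f_i^{\mathcal{S}}$, while the sharper expected bound \eqref{diag:expected_bound} gives $\mathbb{E}_{\mathbf{S}}[\|(\mathbf{B}-\mathbf{B}')\mathbf{S}\|^2]\le (r/k_i)\|\mathbf{B}-\mathbf{B}'\|^2$, producing the improved constant $Lr/k_i$ for $f_i^{\mathcal{S}}$. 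The third claim follows by averaging the descent-lemma inequality for $f_i^{\mathcal{S}}$ across $i=1,\dots,N$, which yields the client-averaged constant $\bigl(\frac{1}{N}\sum_{i=1}^{N} r/k_i\bigr)L$ for $f^{\mathcal{S}}$.

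The hard part is making the $\mathbf{S}$-randomness decouple cleanly in the second claim: a direct Lipschitz-gradient argument would start from $\nabla_{\mathbf{B}} f_i^{\mathcal{S}}(\mathbf{X}) = \mathbb{E}_{\mathbf{S}}[\nabla_{\tilde{\mathbf{B}}} f_i(\mathbf{BS},\mathbf{A})\mathbf{S}^{\top}]$ and would need to peel off the outer $\mathbf{S}^{\top}$, but both factors depend on $\mathbf{S}$, so only the worst-case bound applies and one ends up with the inferior constant $Lr^2/k_i^2$ rather than $Lr/k_i$. Working at the level of the descent inequality sidesteps this coupling, because the randomness of $\mathbf{S}$ enters only through the scalar quantity $\|(\mathbf{B}-\mathbf{B}')\mathbf{S}\|^2$, whose expectation is exactly what the sharper bound in Lemma \ref{lamma:sketching_bound} controls. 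This is why the proof proceeds through the quadratic-upper-bound characterization rather than the gradient-Lipschitz formulation.
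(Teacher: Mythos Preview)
Your proposal is correct and follows essentially the same route as the paper: apply the descent-lemma consequence of Assumption~\ref{smoothness_assumption} to $f_i$ at the pair $(\mathbf{BS},\mathbf{A})$, rewrite the linear term via the chain rule so it becomes $\langle\nabla\tilde f_i(\mathbf{Y};\mathbf{S}),\mathbf{X}-\mathbf{Y}\rangle$, and then control the quadratic term $\|(\mathbf{B}-\mathbf{B}')\mathbf{S}\|^2$ with the deterministic bound \eqref{diag:worstcase_bound} for part~(i) and with the expected bound \eqref{diag:expected_bound} after taking $\mathbb{E}_{\mathbf{S}}$ for part~(ii). The only cosmetic difference is in part~(iii): the paper switches to the gradient-Lipschitz formulation and averages $\|\nabla f_i^{\mathcal S}(\mathbf{X})-\nabla f_i^{\mathcal S}(\mathbf{Y})\|\le (r/k_i)L\|\mathbf{X}-\mathbf{Y}\|$ via the triangle inequality, whereas you average the descent-lemma inequalities directly; both yield the same constant $\bar L$.
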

The proof of Proposition \ref{smooth_derivation} is deferred to Appendix \ref{proof_smooth}. With this proposition, we are ready to prove Theorem \ref{convergence_fed_slora_condense}. 

In FSLoRA, the update direction in \eqref{gradient_client_i} corresponds to the negative stochastic gradient of \(\ell({\bf W}_0 + {\bf B} {\bf S} {\bf A}, \xi)\) with respect to \([{\bf B}; {\bf A}]\) for a given sketch \( {\bf S}_i^t\).
We have defined \(\tilde{\ell} ({\bf X}, \xi; {\bf S}) = \ell({\bf W}_0 + {\bf B}{\bf S}{\bf A}, \xi)\). The iterative equation for the proposed FSLoRA algorithm thus can be written as
\begin{equation}
   {\bf X}^{t+1} ={\bf X}^{t} - \gamma \frac{1}{N} \sum_{i=1}^{N} \sum_{h=0}^{H-1} \nabla_{\bf X} \tilde{\ell} ({\bf X}^{t,h}_i, \xi_i^{t,h}; {\bf S}_i^t),
\end{equation}
where \({\bf g}^{t,h}_i\) denotes the stochastic gradient \(\nabla_{\bf X} \tilde{\ell} ({\bf X}^{t,h}_i, \xi_i^{t,h}; {\bf S}_i^t)\).
Based on the smoothness of $f^{\mathcal{S}}(\bf X)$, i.e., Proposition \ref{smooth_derivation}, we have
\begin{align}\label{expansion_smooth}
   \mathbb{E} [f^{\mathcal{S}}({\bf X}^{t+1})] \leq \mathbb{E}[f^{\mathcal{S}}({\bf X}^t)] \underbrace{-\mathbb{E}\left \langle
    \nabla_{\bf X} f^{\mathcal{S}} ({\bf X}^t) , \gamma \frac{1}{N} \sum_{i=1}^{N} \sum_{h=0}^{H-1} {\bf g}^{t,h}_i
    \right \rangle}_{T_1} +
    \frac{\gamma^2 \bar{L}}{2} \underbrace{ \mathbb{E}\left \| \frac{1}{N} \sum_{i=1}^{N} \sum_{h=0}^{H-1}{\bf g}^{t,h}_i \right \|^2}_{T_2},
\end{align}
where \(\bar{L} = \left( \frac{1}{N} \sum_{i=1}^N \frac{r}{k_i} \right) L \).

For \(T_1\), we have 
\begin{align}
    T_1 =& -H\mathbb{E}\left \langle
    \nabla_{\bf X} f^{\mathcal{S}} ({\bf X}^t), \gamma \frac{1}{NH} \sum_{i=1}^{N} \sum_{h=0}^{H-1} {\bf g}^{t,h}_i
    \right \rangle \nonumber \\
    =& -H\mathbb{E} \left \langle
    \nabla_{\bf X} f^{\mathcal{S}} ({\bf X}^t), \gamma \frac{1}{NH} \sum_{i=1}^{N} \sum_{h=0}^{H-1} \nabla_{\bf X} f_i^{\mathcal{S}} ({\bf X}_i^{t,h})
    \right \rangle \nonumber\\
    =&  -\frac{\gamma H}{2} \mathbb{E} \left\| \nabla_{\bf X} f^{\mathcal{S}} ({\bf X}^t) \right\|^2 - \frac{\gamma H}{2} \mathbb{E} \left\| \frac{1}{NH} \sum_{i=1}^{N} \sum_{h=0}^{H-1} \nabla_{\bf X} f_i^{\mathcal{S}} ({\bf X}_i^{t,h}) \right\|^2
    \nonumber\\
    & + \frac{\gamma H}{2} \mathbb{E} \left \|
    \nabla_{\bf X} f^{\mathcal{S}} ({\bf X}^t) - \frac{1}{NH} \sum_{i=1}^{N} \sum_{h=0}^{H-1} \nabla_{\bf X} f_i^{\mathcal{S}} ({\bf X}_i^{t,h}) 
    \right \|^2 \nonumber\\
    \leq&  - \frac{\gamma H}{2} \mathbb{E} \left\| \nabla_{\bf X} f^{\mathcal{S}} ({\bf X}^t) \right\|^2 - \frac{\gamma H}{2} \mathbb{E} \left\| \frac{1}{N} \sum_{i=1}^{N} \sum_{h=0}^{H-1} \nabla_{\bf X} f_i^{\mathcal{S}} ({\bf X}_i^{t,h}) \right\|^2
    \nonumber\\
    & + \frac{\gamma}{2} \sum_{h=0}^{H-1} \mathbb{E} \left \|\frac{1}{N} \sum_{i=1}^{N} \nabla_{\bf X} f_i^{\mathcal{S}} ({\bf X}^t) - \frac{1}{N} \sum_{i=1}^{N} \nabla_{\bf X} f_i^{\mathcal{S}} ({\bf X}_i^{t,h}) 
    \right \|^2 \nonumber\\
    \leq&  - \frac{\gamma H}{2} \mathbb{E} \left\| \nabla_{\bf X} f^{\mathcal{S}} ({\bf X}^t) \right\|^2 - \frac{\gamma }{2H} \mathbb{E} \left\| \frac{1}{N} \sum_{i=1}^{N} \sum_{h=0}^{H-1} \nabla_{\bf X} f_i^{\mathcal{S}} ({\bf X}_i^{t,h}) \right\|^2 \nonumber \\
    & + \frac{\gamma H L^2}{2}  \frac{1}{NH} \sum_{i=1}^{N} \frac{r^2}{k_i^2} \sum_{h=0}^{H-1} \mathbb{E} \left \|{\bf X}_i^{t,h} -{\bf X}^t
    \right \|^2, \label{T_1}
\end{align}
where the last inequalities follow Jensen's inequality and Proposition \ref{smooth_derivation}. 

For $T_2$, we have
\begin{align}
    T_2 = & \mathbb{E}\left \| \frac{1}{N} \sum_{i=1}^{N} \sum_{h=0}^{H-1} ({\bf g}^{t,h}_i \mp \nabla_{\bf X} f_i^{\mathcal{S}} ({\bf X}_i^{t,h}))\right \|^2 \nonumber\\
    \leq & \frac{2}{N^2} \sum_{i=1}^{N} \mathbb{E}\left \|  \sum_{h=0}^{H-1} ({\bf g}^{t,h}_i -  \nabla_{\bf X} f_i^{\mathcal{S}} ({\bf X}_i^{t,h})) \right \|^2 + 2\mathbb{E}\left \| \frac{1}{N} \sum_{i=1}^{N} \sum_{h=0}^{H-1} \nabla_{\bf X} f_i^{\mathcal{S}} ({\bf X}_i^{t,h}) \right \|^2, 
\end{align}
where the inequality follows the fact that $\mathbb{E}[\sum_{h=0}^{H-1} ({\bf g}^{t,h}_i -  \nabla_{\bf X} f_i^{\mathcal{S}} ({\bf X}_i^{t,h}))] = 0$ and the independence between clients.

Furthermore, we bound the first term on the right-hand side of the above inequality as
\begin{align}
    \mathbb{E}\left \|  \sum_{h=0}^{H-1} ({\bf g}^{t,h}_i -  \nabla_{\bf X} f_i^{\mathcal{S}} ({\bf X}_i^{t,h})) \right \|^2 = \sum_{h=0}^{H-1} \mathbb{E}\left \|{\bf g}^{t,h}_i -  \nabla_{\bf X} f_i^{\mathcal{S}} ({\bf X}_i^{t,h}) \right \|^2 
    \leq H \sigma^2 + \rho \sum_{h=0}^{H-1} \mathbb{E} \left \|\nabla_{\bf X} f_i^{\mathcal{S}} ({\bf X}_i^{t,h}) \right \|^2, \nonumber
\end{align}
where the equality follows Lemma \ref{pre:lemma:jianyu} and the inequality follows Assumption \ref{varaince_assumption}. 
For $\left \|\nabla_{\bf X} f_i^{\mathcal{S}} ({\bf X}_i^{t,h}) \right \|^2$, utilizing Assumption \ref{assumption:gradient_dissimilarity} and Proposition \ref{smooth_derivation}, we have
\begin{align}\label{bound_gradient_i}
& \left \|\nabla_{\bf X} f_i^{\mathcal{S}} ({\bf X}_i^{t,h}) \right \|^2 = \left \|\nabla_{\bf X} f_i^{\mathcal{S}} ({\bf X}_i^{t,h}) \mp \nabla_{\bf X} f_i^{\mathcal{S}} ({\bf X}^{t})  \mp \nabla_{\bf X} f^{\mathcal{S}} ({\bf X}^t)   \right \|^2 \nonumber \\
\leq & 3 \left \|\nabla_{\bf X} f_i^{\mathcal{S}} ({\bf X}_i^{t,h}) - \nabla_{\bf X} f_i^{\mathcal{S}} ({\bf X}^{t}) \right \|^2 + 3 \left \|\nabla_{\bf X} f_i^{\mathcal{S}} ({\bf X}^{t})  - \nabla_{\bf X} f^{\mathcal{S}} ({\bf X}^t)   \right \|^2 + 3 \left \|\nabla_{\bf X} f^{\mathcal{S}} ({\bf X}^t)   \right \|^2 \nonumber \\
\leq & 3 \frac{r^2}{k_i^2} L^2 \left \|{\bf X}_i^{t,h} - {\bf X}^{t} \right \|^2 + 3 (c_h + 1) \| \nabla_{\bf X} f^{\mathcal{S}}({\bf X}^t) \|^2  \!+\! 3 \rho \delta_h^2.
\end{align}
Combining the above three inequalities gives rise to
\begin{align}\label{T_2}
    T_2 
    \leq & \frac{2H}{N} (\sigma^2 + 3 \rho \delta_h^2) + 2\mathbb{E}\left \| \frac{1}{N} \!\sum_{i=1}^{N} \sum_{h=0}^{H-1} \nabla_{\bf X} f_i^{\mathcal{S}} ({\bf X}_i^{t,h}) \right \|^2 + \frac{6\rho (c_h + 1) H}{N} \mathbb{E} \| \nabla_{\bf X} f^{\mathcal{S}}({\bf X}^t) \|^2 \nonumber \\
    &+ \frac{6 \rho H L^2}{N} T_3,
\end{align}
where $T_3 = \frac{1}{NH} \sum_{i=1}^{N} \frac{r^2}{k_i^2} \sum_{h=0}^{H-1} \mathbb{E} \left \|{\bf X}_i^{t,h} -{\bf X}^t
    \right \|^2$.
Combining \eqref{expansion_smooth}, \eqref{T_1}, and \eqref{T_2} yields
\begin{align}
   \mathbb{E} [f^{\mathcal{S}}({\bf X}^{t+1})] \leq & \mathbb{E}[f^{\mathcal{S}}({\bf X}^t)]   - (\frac{\gamma H}{2} - 3\gamma^2 \rho (c_h+1) \frac{H}{N}\bar{L} ) \mathbb{E} \left\| \nabla_{\bf X} f^{\mathcal{S}} ({\bf X}^t) \right\|^2 + \gamma^2 \bar{L} \frac{H}{N} (\sigma^2 + 3\rho \sigma_h^2)
    \nonumber\\
    & - (\frac{\gamma}{2H} -  \gamma^2 \bar{L} ) \mathbb{E} \left\| \frac{1}{N} \sum_{i=1}^{N} \sum_{h=0}^{H-1} \nabla_{\bf X} f_i^{\mathcal{S}} ({\bf X}_i^{t,h}) \right\|^2 
   + (\frac{\gamma H L^2}{2} +  3 \gamma^2 \rho \bar{L} L^2 \frac{H}{N} ) T_3, 
\end{align}
where \(\bar{L} = \left( \frac{1}{N} \sum_{i=1}^N \frac{r}{k_i} \right) L \).
Let $\gamma \leq \min \{\frac{N}{24 \rho (c_h + 1) H\bar{L}}, \frac{1}{2H\bar{L}}, \frac{N}{6\rho \bar{L}} \}$, we have
\begin{align}\label{T_1_T_2}
   \mathbb{E} [f^{\mathcal{S}}({\bf X}^{t+1})] \leq & \mathbb{E}[f^{\mathcal{S}}({\bf X}^t)]   - \frac{3\gamma H}{8} \mathbb{E} \left\| \nabla_{\bf X} f^{\mathcal{S}} ({\bf X}^t) \right\|^2 + \gamma^2 \bar{L} \frac{H}{N} (\sigma^2 + 3\rho \sigma_h^2)
   + \frac{5\gamma}{8} H L^2 T_3.
\end{align}

For $T_3$, we have
\begin{align}
    T_3 =& \frac{1}{NH} \sum_{i=1}^{N} \frac{r^2}{k_i^2} \sum_{h=0}^{H-1} \mathbb{E} \left \| \gamma \sum_{\tau=0}^{h-1} {\bf g}^{t,\tau}_i
    \right \|^2 \nonumber\\
    =& \gamma^2 \frac{1}{NH} \sum_{i=1}^{N} \frac{r^2}{k_i^2} \sum_{h=0}^{H-1} \mathbb{E} \left \| \sum_{\tau=0}^{h-1} ( {\bf g}^{t,\tau}_i \mp  \nabla_{\bf X} f_i^{\mathcal{S}} ({\bf X}_i^{t,\tau}) )
    \right \|^2 \nonumber\\
    \leq & 2 \gamma^2 \frac{1}{NH} \sum_{i=1}^{N} \frac{r^2}{k_i^2} \sum_{h=0}^{H-1}  \sum_{\tau=0}^{h-1} \mathbb{E} \left \| {\bf g}^{t,\tau}_i - \nabla_{\bf X} f_i^{\mathcal{S}} ({\bf X}_i^{t,\tau})
    \right \|^2 +  2\gamma^2 \frac{1}{NH} \sum_{i=1}^{N} \frac{r^2}{k_i^2} \sum_{h=0}^{H-1} h \sum_{\tau=0}^{h-1} \mathbb{E} \left \|\nabla_{\bf X} f_i^{\mathcal{S}} ({\bf X}_i^{t,\tau})
    \right \|^2 \nonumber\\
    \leq & 2\gamma^2 H \sigma^2 \left( \frac{1}{N} \sum_{i=1}^{N} \frac{r^2}{k_i^2} \right) 
    +  \frac{2 \rho \gamma^2}{NH} \sum_{i=1}^{N} \frac{r^2}{k_i^2} \sum_{h=0}^{H-1}  \sum_{\tau=0}^{h-1} \mathbb{E} \left \|\nabla_{\bf X} f_i^{\mathcal{S}} ({\bf X}_i^{t,\tau})
    \right \|^2 \nonumber \\
    &+ \frac{2\gamma^2}{NH} \sum_{i=1}^{N} \frac{r^2}{k_i^2} \sum_{h=0}^{H-1} h \sum_{\tau=0}^{h-1} \mathbb{E} \left \| \nabla_{\bf X} f_i^{\mathcal{S}} ({\bf X}_i^{t,\tau})
    \right \|^2 \nonumber\\
    \leq & 2\gamma^2 H \sigma^2 \left( \frac{1}{N} \sum_{i=1}^{N} \frac{r^2}{k_i^2} \right) 
    + \frac{2 (\rho+1) \gamma^2 H}{N} \sum_{i=1}^{N} \frac{r^2}{k_i^2} \sum_{h=0}^{H-1} \mathbb{E} \left \| \nabla_{\bf X} f_i^{\mathcal{S}} ({\bf X}_i^{t,\tau})
    \right \|^2. \label{T_3_intermiate_1}
\end{align}

Plugging inequality \eqref{bound_gradient_i} into inequality \eqref{T_3_intermiate_1} yeilds
    \begin{align}
    T_3 \leq & 2 \gamma^2 H \left( \frac{1}{N} \sum_{i=1}^{N} \frac{r^2}{k_i^2} \right) \sigma^2 + 6 (\rho+1) \gamma^2 H^2 \left( \frac{1}{N} \sum_{i=1}^{N} \frac{r^2}{k_i^2} \right) \sigma^2_h  \nonumber \\
    & + 6 (\rho+1) \gamma^2 L^2 H^2 T_3 + 6(\rho+1) \left( \frac{1}{N} \sum_{i=1}^{N} \frac{r^2}{k_i^2} \right) (c_h + 1)\gamma^2 H^2\mathbb{E} \left \| \nabla_{\bf X} f^{\mathcal{S}} ({\bf X}^t)
    \right \|^2.
\end{align}
Denote $\kappa = \frac{1}{N} \sum_{i=1}^{N} \frac{r^2}{k_i^2}$, we simplify the above inequality as 
\begin{align}
    (1- 6(\rho +1) \gamma^2 L^2 H^2) T_3 \leq 2 \kappa \gamma^2 H^2 (\sigma_g^2 + 3(\rho+1) \sigma_h^2) + 6\kappa (\rho+1)(c_h +1) \gamma^2 H^2 \mathbb{E} \left \| \nabla_{\bf X} f^{\mathcal{S}} ({\bf X}^t)  
    \right \|^2. \nonumber
\end{align}
Let $\gamma \leq \frac{1}{\sqrt{12(\rho+1)}HL}$, we get the bound for $T_3$
\begin{align} \label{T_3}
    T_3 \leq 4 \kappa \gamma^2 H^2 (\sigma^2 + 3(\rho+1) \sigma_h^2) + 12 \kappa (\rho+1) (c_h +1) \gamma^2 H^2 \mathbb{E} \left \| \nabla_{\bf X} f^{\mathcal{S}} ({\bf X}^t)
    \right \|^2.
\end{align}
Plugging the bound for $T_3$ into inequality \eqref{T_1_T_2} gives rise to
\begin{align}
   \mathbb{E} [f^{\mathcal{S}}({\bf X}^{t+1})] \leq & \mathbb{E}[f^{\mathcal{S}}({\bf X}^t)]   - ( \frac{3\gamma H}{8} - \frac{5\gamma H}{8} L^2 \left(12 \kappa (\rho+1) (c_h +1) \gamma^2 H^2) \right) \mathbb{E} \left\| \nabla_{\bf X} f^{\mathcal{S}} ({\bf X}^t) \right\|^2 \nonumber \\
   &+ \gamma^2 \bar{L} \frac{H}{N} (\sigma^2 + 3\rho \sigma_h^2) + \frac{5\gamma}{8} H L^2 \cdot 4 \kappa \gamma^2 H^2 (\sigma^2 + 3(\rho+1) \sigma_h^2).
\end{align}
Let $\gamma \leq \frac{1}{8\sqrt{\kappa (\rho+1) (c_h +1)} HL}$, we obtain
\begin{align}
   \mathbb{E} [f^{\mathcal{S}}({\bf X}^{t+1})] \leq & \mathbb{E}[f^{\mathcal{S}}({\bf X}^t)]   - \frac{\gamma H}{4} \mathbb{E} \left\| \nabla_{\bf X} f^{\mathcal{S}} ({\bf X}^t) \right\|^2 + \gamma^2 \bar{L} \frac{H}{N} \sigma_{\rho}^2 + \frac{5}{2} \kappa \gamma^3 H^3 L^2 \sigma_{\rho}^2,
\end{align}
where $ \sigma_{\rho}^2 = \sigma^2 + 3(\rho+1) \sigma_h^2$.

Telescoping the above inequality from \(t=0\) to \(T-1\), we have
\begin{align}
 \frac{1}{T}\sum_{t=0}^{T-1} \!\mathbb{E} \left\| \nabla_{\bf X} f^{\mathcal{S}} ({\bf X}^t) \right\|^2  \leq 4 \frac{ f^{\mathcal{S}}({\bf X}^0)  -  f^*}{\gamma TH}  + \gamma \frac{4 \bar{L}}{N} \sigma_{\rho}^2 + 10 \gamma^2 H^2 \widetilde{L} L \sigma_{\rho}^2,
\end{align}
where $f^*$ denotes the lower bound of $f^{\mathcal{S}}({\bf X})$ and $\widetilde{L} = \kappa L$. 


Applying Lemma \ref{converging_iterates} to the above inequality and letting $d=H$, 
it follows that there exists a learning rate 
$$\gamma \leq \min \{\frac{N}{24 \rho (c_h + 1) H\bar{L}}, \frac{1}{8\sqrt{\widetilde{L} L (\rho+1) (c_h +1)} H}, \frac{1}{H}\}$$ 
such that
\begin{align}
\frac{1}{T}\sum_{t=0}^{T-1} \!\mathbb{E} \left\| \nabla_{\bf X} f^{\mathcal{S}} ({\bf X}^t) \right\|^2 \! \leq \! 8 \frac{\sqrt{\bar{L} \mathcal{F}_0 \sigma_{\rho}^2} }{\sqrt{N TH}} +
    10 (\widetilde{L} L)^{\frac{1}{3}} \left(\frac{\mathcal{F}_0 \sigma_{\rho}}{ T}\right)^{\frac{2}{3}} + \frac{4 \mathcal{F}_0 }{T}.
\end{align}
This completes the proof of Theorem \ref{convergence_fed_slora_condense}.

\subsection{Proof of Proposition  \ref{smooth_derivation}}\label{proof_smooth}



i) For illustration, we need to recover \(\bf X\) to \([\bf B; {\bf A}]\) in this proof. According to the definition of \(\tilde{f}_{i} ({\bf X}; {\bf S})\) and \(f_{i} ({\bf B},{\bf A})\), we have
\begin{align}
    \tilde{f}_{i} ({\bf X}; {\bf S}) =& \tilde{f}_{i} ({\bf B}, {\bf A}; {\bf S}) \label{BS_B_A_S_added_1} \\ 
    =&  \mathbb{E}_{\xi \sim \mathcal{D}_i} \left[ \ell ({\bf W}_{0} + {\bf B} {\bf S}{\bf A}, \xi)\right]  \nonumber \\
    = & f_{i} ({\bf B} {\bf S}, {\bf A}). \label{BS_B_A_S}
\end{align}
As \(f_{i} ({\bf B},{\bf A}) \) is \(L\)-smooth, we have
\begin{align}\label{deriving_smoothness_S}
    f_{i} ({\bf B} {\bf S} + \Delta {\bf B} {\bf S}, {\bf A}+ \Delta{\bf A}) \leq f_{i} ({\bf B} {\bf S},{\bf A}) + 
    \left \langle \begin{bmatrix}
       \nabla_{{\bf B}\bf S} f_{i} ({\bf B} {\bf S},{\bf A}) \\
       \nabla_{\bf A} f_{i} ({\bf B} {\bf S},{\bf A}) 
    \end{bmatrix}, 
    \begin{bmatrix}
        \Delta {\bf B} {\bf S}\\
         \Delta {\bf A}  
    \end{bmatrix}
    \right \rangle + \frac{L}{2} \left \| \begin{bmatrix}
       \Delta {\bf B} {\bf S}\\
         \Delta {\bf A}
    \end{bmatrix} \right \|^2. 
\end{align}
According to \eqref{BS_B_A_S_added_1} and \eqref{BS_B_A_S}, we have \(\tilde{f}_{i} ({\bf B} + \Delta {\bf B} , {\bf A}+ \Delta{\bf A}; {\bf S}) =  f_{i} ({\bf B} {\bf S} + \Delta {\bf B} {\bf S}, {\bf A}+ \Delta{\bf A})\) and \( \tilde{f}_{i} ({\bf B},{\bf A}; {\bf S}) = f_{i} ({\bf B} {\bf S},{\bf A})\). Combining these with \eqref{deriving_smoothness_S} gives rise to
\begin{align}\label{B_A_S_new}
    \tilde{f}_{i} ({\bf B} + \Delta {\bf B} , {\bf A}+ \Delta{\bf A}; {\bf S}) \leq \tilde{f}_{i} ({\bf B},{\bf A}; {\bf S}) + 
    \left \langle \begin{bmatrix}
       \nabla_{{\bf B}\bf S} f_{i} ({\bf B} {\bf S},{\bf A}) \\
       \nabla_{\bf A} f_{i} ({\bf B} {\bf S},{\bf A}) 
    \end{bmatrix}, 
    \begin{bmatrix}
        \Delta {\bf B} {\bf S}\\
         \Delta {\bf A}  
    \end{bmatrix}
    \right \rangle + \frac{L}{2} \left \| \begin{bmatrix}
       \Delta {\bf B} {\bf S}\\
         \Delta {\bf A} 
    \end{bmatrix} \right \|^2. 
\end{align}
We denote 
\begin{align}\label{notation_function_L}
    L({\bf W}_{0} + {\bf B}{\bf S}{\bf A}) = \tilde{f}_{i} ({\bf B}, {\bf A}; {\bf S}) = \mathbb{E}_{\xi \sim \mathcal{D}_i} \left[ \ell ({\bf W}_{0} + {\bf B} {\bf S}{\bf A}, \xi)\right].
\end{align}
Note that \(\nabla_{{\bf B}{\bf S}} f_{i} ({\bf B} {\bf S},{\bf A}) = \nabla L({\bf W}_{0} + {\bf B}\bf S{\bf A}){\bf A}^{\top}\) and \(\nabla_{\bf A} f_{i} ({\bf B} {\bf S},{\bf A}) = {\bf S}^{\top} {\bf B}^{\top}  \nabla L({\bf W}_{0} + {\bf B} {\bf S} {\bf A})\). We thus have
\begin{align}
 \left \langle \begin{bmatrix}
       \nabla_{{\bf B}{\bf S}} f_{i} ({\bf B} {\bf S}; {\bf A}) \\
       \nabla_{\bf A} f_{i} ({\bf B} {\bf S}; {\bf A}) 
    \end{bmatrix}, 
    \begin{bmatrix}
        \Delta {\bf B} {\bf S}\\
        \Delta {\bf A}  
    \end{bmatrix}
    \right \rangle  
    =& \left \langle \begin{bmatrix}
      \nabla L({\bf W}_{0} + {\bf B}{\bf S}{\bf A}){\bf A}^{\top} \\
       {\bf S}^{\top} \bf B^{\top} \nabla L({\bf W}_{0} + {\bf B}{\bf S}{\bf A})
    \end{bmatrix}, 
    \begin{bmatrix}
        \Delta {\bf B} {\bf S}\\
         \Delta {\bf A} 
    \end{bmatrix}
    \right \rangle  \nonumber\\
    =&  \left \langle \begin{bmatrix}
      \nabla L({\bf W}_{0} + {\bf B}\bf S{\bf A}){\bf A}^{\top} {\bf S}^{\top} \\
       {\bf S}^{\top} \bf B^{\top} \nabla L({\bf W}_{0} + {\bf B}\bf S{\bf A})
    \end{bmatrix}, 
    \begin{bmatrix}
        \Delta {\bf B} \\
         \Delta {\bf A} 
    \end{bmatrix}
    \right \rangle  \nonumber\\
    =&  \left \langle \begin{bmatrix}
      \nabla_{\bf B} \tilde{f}_i({\bf B},{\bf A}; {\bf S}) \\
        \nabla_{\bf A} \tilde{f}_i({\bf B}, {\bf A}; {\bf S})
    \end{bmatrix}, 
    \begin{bmatrix}
        {\Delta {\bf B}} \\
        {\Delta {\bf A}} 
    \end{bmatrix}
    \right \rangle,  \label{gradient_transformation}
\end{align}
where the last equality follows the fact that \(\tilde{f}_i({\bf B},{\bf A}; {\bf S}) = L({\bf W}_{0} + {\bf B}\bf S{\bf A}) \) defined in \eqref{notation_function_L} and
\begin{align}
\begin{bmatrix}
      \nabla_{\bf B} \tilde{f}_i({\bf B},{\bf A}; {\bf S}) \\
        \nabla_{\bf A} \tilde{f}_i({\bf B},{\bf A}; {\bf S})
    \end{bmatrix}
=
\begin{bmatrix}
      \nabla L({\bf W}_{0} + {\bf B} {\bf S} {\bf A}){\bf A}^{\top} {\bf S}^{\top} \\
       {\bf S}^{\top} {\bf B}^{\top} \nabla L({\bf W}_{0} + {\bf B} {\bf S} {\bf A})
    \end{bmatrix}.
\end{align}
Plugging \eqref{gradient_transformation} into \eqref{B_A_S_new} gives rise to
\begin{align}\label{later_derivation_S}
    \tilde{f}_{i} ({\bf B} + \Delta {\bf B} , {\bf A}+ \Delta{\bf A}; {\bf S}) \leq \tilde{f}_{i} ({\bf B},{\bf A}; {\bf S}) + 
    \left \langle \begin{bmatrix}
      \nabla_{\bf B} \tilde{f}_i({\bf B},{\bf A}; {\bf S})\\
        \nabla_{\bf A} \tilde{f}_i({\bf B},{\bf A}; {\bf S})
    \end{bmatrix}, 
    \begin{bmatrix}
        {\Delta {\bf B}} \\
         \Delta {\bf A}  
    \end{bmatrix}
    \right \rangle + \frac{L}{2} \left \| \begin{bmatrix}
       {\Delta {\bf B}} {\bf S}\\
        \Delta {\bf A}
    \end{bmatrix} \right \|^2.  
\end{align}
In particular, \(\left \| \begin{bmatrix}
       \Delta {\bf B} {\bf S}\nonumber\\
        \Delta {\bf A}
    \end{bmatrix} \right \|^2 = \|{\Delta {\bf B}} {\bf S}\|^2 + \|\Delta {\bf A}\|^2. \) 
From \eqref{diag:worstcase_bound}, we know \(\|\Delta {\bf B} {\bf S}\|^2 \leq \frac{r^2}{k_i^2} \|\Delta {\bf B}\|^2\).
Therefore, we have
 \(\left \| \begin{bmatrix}
       {\Delta {\bf B}} {\bf S}\\
        \Delta {\bf A}
    \end{bmatrix} \right \|^2 = \frac{r^2}{k_i^2}\left \| \begin{bmatrix}
       \Delta {\bf B} \\
        \Delta {\bf A}
    \end{bmatrix} \right \|^2.
    \)
As a result, \(\tilde{f}_{i} ({\bf B},{\bf A}; {\bf S})\) (i.e., \(\tilde{f}_{i} ({\bf X}, {\bf S})\)) is \(L \frac{r^2}{k_i^2}\)-smooth. 

ii) Note that \(f_i^{\mathcal{S}}(\mathbf{X}) = \mathbb{E}_{{\bf S} \sim \mathcal{S}_i} [\tilde{f}_{i} ({\bf X}, {\bf S})] \).
Therefore, we further take expectation for \eqref{later_derivation_S} over \({\bf S} \sim \mathcal{S}_i\), leading to
\begin{align}
    f_i^{\mathcal{S}} ({\bf B} + \Delta {\bf B} , {\bf A}+ \Delta{\bf A}) \leq f_i^{\mathcal{S}} ({\bf B},{\bf A}) + 
    \left \langle \begin{bmatrix}
      \nabla_{\bf B} f_i^{\mathcal{S}}(\bf B,{\bf A}) \\
        \nabla_{\bf A} f_i^{\mathcal{S}} (\bf B,{\bf A})
    \end{bmatrix}, 
    \begin{bmatrix}
        \Delta {\bf B} \\
        \Delta {\bf A}  
    \end{bmatrix}
    \right \rangle + \frac{L}{2} \mathbb{E}_{{\bf S}\sim \mathcal{S}_i}\left \| \begin{bmatrix}
       \Delta {\bf B} {\bf S}\\
        \Delta {\bf A}
    \end{bmatrix} \right \|^2.
\end{align}
In particular, \(\mathbb{E}_{{\bf S}\sim \mathcal{S}_i}\left \| \begin{bmatrix}
       \Delta {\bf B} {\bf S}\nonumber\\
        \Delta {\bf A}
    \end{bmatrix} \right \|^2 = \mathbb{E}_{{\bf S}\sim \mathcal{S}_i}\|{\Delta {\bf B}} {\bf S}\|^2 + \|\Delta {\bf A}\|^2. \) 
From \eqref{diag:expected_bound}, we know \(\mathbb{E}_{{\bf S}\sim \mathcal{S}_i}\|\Delta {\bf B} {\bf S}\|^2 \leq \frac{r}{k_i} \|\Delta {\bf B}\|^2\). 
In other words,
 \( \mathbb{E}_{{\bf S}\sim \mathcal{S}_i} \left \| \begin{bmatrix}
       {\Delta {\bf B}} {\bf S}\\
        \Delta {\bf A}
    \end{bmatrix} \right \|^2 \leq \frac{r}{k_i}\left \| \begin{bmatrix}
       \Delta {\bf B} \\
        \Delta {\bf A}
    \end{bmatrix} \right \|^2.
    \)
We thus claim that \(f_i^{\mathcal{S}} ({\bf B},{\bf A}) \) (i.e., \( f_i^{\mathcal{S}}(\mathbf{X}) \)) is \( L \frac{r}{k_i} \)-smooth.

iii) Finally, for \( f^{\mathcal{S}}(\mathbf{X}) = \frac{1}{N} \sum_{i=1}^N f_i^{\mathcal{S}}(\mathbf{X}) \), we have 
\begin{equation}
\nabla f^{\mathcal{S}}(\mathbf{X}) = \frac{1}{N} \sum_{i=1}^N \nabla f_i^{\mathcal{S}}(\mathbf{X}).
\end{equation}
Since  \( f_i^{\mathcal{S}}(\mathbf{X}) \) is \( L \frac{r}{k_i} \)-smooth, we thus have
\begin{equation}
   \|\nabla f_i^{\mathcal{S}}(\mathbf{X}) - \nabla f_i^{\mathcal{S}}(\mathbf{Y})\| \leq L \frac{r}{k_i} \|\mathbf{X} - \mathbf{Y}\|, \quad \forall \mathbf{X}, \mathbf{Y}.
\end{equation}
To find the Lipschitz constant of \( f^{\mathcal{S}}(\mathbf{X}) \), we analyze the difference between the gradients at two points \( \bf X \) and \( \bf Y \):
\begin{equation}
    \begin{aligned}
        \|\nabla f^{\mathcal{S}}(\mathbf{X}) - \nabla f^{\mathcal{S}}(\mathbf{Y})\| =& \left\|\frac{1}{N} \sum_{i=1}^N \left(\nabla f_i^{\mathcal{S}}(\mathbf{X}) - \nabla f_i^{\mathcal{S}}(\mathbf{Y})\right)\right\| \\
        \leq & \frac{1}{N} \sum_{i=1}^N \left\| \nabla f_i^{\mathcal{S}}(\mathbf{X}) - \nabla f_i^{\mathcal{S}}(\mathbf{Y})\right\| \\
        \leq & \left( \frac{1}{N} \sum_{i=1}^N \frac{r}{k_i} L \right) \left\| \mathbf{X} - \mathbf{Y}\right\|.
    \end{aligned}
\end{equation}
Therefore, \( f^{\mathcal{S}}(\mathbf{X}) \) is \( \left( \frac{1}{N} \sum_{i=1}^N \frac{r}{k_i} L \right) \)-smooth.

\subsection{Extension to Importance-Based Sketching}\label{appen:extention_importance}
In this work, we provide a theoretically grounded framework for heterogeneous LoRA fine-tuning. Although our algorithm and analysis are based on Random-$k$ sketching, the framework can be extended to importance-based sketching when reliable importance scores for the rows and columns of the global LoRA modules are available.


Without loss of generality, we assume that each column-row pair $(\mathbf{b}_i, \mathbf{a}_i)$ of $\mathbf{B}$ and $\mathbf{A}$ is associated with a static importance weight $w_i$, for $i = 1, \ldots, r$, where
\(w_1 \geq w_2 \geq \cdots \geq w_r\). This assumption is reasonable because permuting the column-row pairs does not change the product \(\mathbf{BA}\), as long as the same permutation is applied to both \(\mathbf{B}\) and \(\mathbf{A}\). Consequently, the global LoRA modules can always be reordered so that the components are indexed in descending order of importance prior to sketching in each training round. 

With the aforementioned importance scores, we sample \(k\) indices from \(\{1, \ldots, r\}\) without replacement according to the weights $\{\frac{w_i}{\sum_{j=1}^r w_j}\}_{i=1}^r$, and denote the selected index set by \(\mathcal{I}\).
The diagonal entries of the sketching matrix \(\mathbf{S}\) are defined as
\begin{equation}\label{importance_sampling_prob}
    s_i =
\begin{cases}
\frac{1}{\pi_i}, & i \in \mathcal{I}, \\[4pt]
0, & i \notin \mathcal{I},
\end{cases}
\qquad
\text{where} \quad
\pi_i = \frac{k w_i}{\sum_{j=1}^r w_j}.
\end{equation}
Here, \(\pi_i\) denotes the marginal probability that index \(i\) is selected.
With this construction, the sketch remains unbiased:
\[
\mathbb{E}[\mathbf{BSA}]
= \mathbf{B} \, \mathbb{E}[\mathbf{S}] \, \mathbf{A}
= \mathbf{BA}.
\]
The \emph{sparsity} of the stochastic sketching matrix can be controlled by the choice of \(k\).

Consequently, the convergence analysis for Random-$k$ sketching extends directly to the importance-based setting, with the sketch-dependent quantities modified to reflect the non-uniform sampling probabilities.
In particular, Lemma~\ref{lamma:sketching_bound} is updated as follows.
\begin{lemma}\label{lamma:sketching_bound_importance}
Let $\mathbf{S}$ be a stochastic diagonal sketching matrix defined in \eqref{importance_sampling_prob}. 
Then for any matrix $\mathbf{X}$, we have
\begin{equation}
\|\mathbf{X}\,\mathbf{S}\|^{2}
~\le~
\left( \frac{\sum_{j=1}^r w_j}{k w_r} \right)^2
\,\|\mathbf{X}\|^{2}
\quad\text{and}\quad
\mathbb{E}_{\mathbf{S}}\!\Bigl[\|\mathbf{X}\,\mathbf{S}\|^{2}\Bigr]
~\le~
\frac{\sum_{j=1}^r w_j}{k w_r}
\,\|\mathbf{X}\|^{2}.
\end{equation}
\end{lemma}

Building on Lemma~\ref{lamma:sketching_bound_importance}, the smoothness characterization in Proposition~\ref{smooth_derivation} is correspondingly updated as follows.
\begin{proposition}\label{smooth_derivation_importance}
Let $\mathcal{S}_i$ denote the set of sketching matrices, where each matrix is defined as in \eqref{importance_sampling_prob} and has $k_i$ non-zero diagonal entries. Under Assumption \ref{smoothness_assumption}, \(\tilde{f}_{i} ({\bf X}; {\bf S}) = f_i({\bf B} {\bf S}, {\bf A})\), \(f_i^{\mathcal{S}}({\bf X}) = \mathbb{E}_{{\bf S} \sim \mathcal{S}_i} [\tilde{f}_{i} ({\bf X}; {\bf S}) ]\), and \(f^{\mathcal{S}} ({\bf X}) = \frac{1}{N} \sum_{i=1}^N f_i^{\mathcal{S}}(\mathbf{X})\) are smooth with parameters \(L \left(\frac{\sum_{j=1}^r w_j}{k w_r}\right)^2 \), \(L \frac{\sum_{j=1}^r w_j}{k w_r}\), and \(\left( \frac{1}{N} \sum_{i=1}^N \frac{\sum_{j=1}^r w_j}{k w_r} \right) L \), respectively. 
\end{proposition}

With Proposition~\ref{smooth_derivation_importance}, we can obtain a convergence bound analogous to Theorem~\ref{convergence_fed_slora_condense}, in which the sketching-dependent coefficients
$\frac{r}{k_i}$ and $\frac{r^2}{k_i^2}$
are replaced by
$\frac{\sum_{j=1}^r w_j}{k_i w_r}$
and
$\left(\frac{\sum_{j=1}^r w_j}{k_i w_r}\right)^2$, respectively.
\begin{theorem}\label{convergence_fed_slora_condense_importance}
    Suppose that Assumptions \ref{smoothness_assumption}-\ref{assumption:gradient_dissimilarity} hold, then there exists a learning rate such that the iterates \(\{{\bf X}^t\}\) generated by FSLoRA satisfy 
\begin{align}
\frac{1}{T}\sum_{t=0}^{T-1} \mathbb{E} \left\| \nabla_{\bf X} f^{\mathcal{S}} ({\bf X}^t) \right\|^2 \! \leq \! 8 \frac{\sqrt{\bar{L}^{\prime} \mathcal{F}_0 \sigma_{\rho}^2} }{\sqrt{N TH}}
+ 10 (\widetilde{L}^{\prime} L)^{\frac{1}{3}} \left(\frac{\mathcal{F}_0 \sigma_{\rho}}{ T}\right)^{\frac{2}{3}} + \frac{4 \mathcal{F}_0 }{T},
\end{align}
where $ \sigma_{\rho}^2 = \sigma^2 + 3(\rho+1) \sigma_h^2$, \(\bar{L}^{\prime} = \left(\frac{1}{N} \sum_{i=1}^N \frac{\sum_{j=1}^r w_j}{k_i w_r}\right) L \), \(\widetilde{L}^{\prime} = \left(\frac{1}{N} \sum_{i=1}^N \left( \frac{\sum_{j=1}^r w_j}{k_i w_r} \right)^2\right) L\) and \(\mathcal{F}_0 = f^{\mathcal{S}}({\bf X}^0)  -  f^*\) with $f^*$ denoting the lower bound of $f^{\mathcal{S}}({\bf X})$.
\end{theorem}
The proofs of Lemma~\ref{lamma:sketching_bound_importance} and Proposition~\ref{smooth_derivation_importance}, and Theorem~\ref{convergence_fed_slora_condense_importance} follow those of Lemma~\ref{lamma:sketching_bound}, Proposition~\ref{smooth_derivation}, and Theorem~\ref{convergence_fed_slora_condense}, respectively, with the modified sketching constants incorporated accordingly.
This completes the extension to the importance-aware sketching.

\end{document}